\journal{Computer Methods in Applied Mechanics and Engineering}
\begin{document}

\begin{frontmatter}



	\title{DeepRTE:\ Pre-trained Attention-based Neural Network for
		Radiative Transfer} 


	\author[math]{Yekun Zhu} 
	\ead{zhuyekun123@sjtu.edu.cn}
	\author[math,ins]{Min Tang\corref{cor}}
	\ead{tangmin@sjtu.edu.cn}
	\author[math,ins,cma]{Zheng Ma\corref{cor}}
	\ead{zhengma@sjtu.edu.cn}
	\affiliation[math]{organization={School of Mathematical Sciences, Shanghai Jiao Tong University},
		city={Shanghai},
		postcode={200240},
		country={China}
	}
	\affiliation[ins]{organization={Institute of Natural Sciences, MOE-LSC, Shanghai Jiao Tong University},
		city={Shanghai},
		postcode={200240},
		country={China}
	}
	\affiliation[cma]{organization={CMA-Shanghai, Shanghai Jiao Tong University},
		city={Shanghai},
		postcode={200240},
		country={China}
	}
	\cortext[cor]{Corresponding authors}

	\begin{abstract}

In this paper, we propose a novel neural network approach, termed DeepRTE, to address the steady-state Radiative Transfer Equation (RTE).
The RTE is a differential-integral equation that governs the propagation of radiation through a participating medium, with applications spanning diverse domains such as neutron transport, atmospheric radiative transfer, heat transfer, and optical imaging.
Our DeepRTE framework demonstrates superior computational efficiency for solving the steady-state RTE, surpassing traditional methods and existing neural network approaches.
This efficiency is achieved by embedding physical information through derivation of the RTE and mathematically-informed network architecture.
Concurrently, DeepRTE achieves high accuracy with significantly fewer parameters, largely due to its incorporation of mechanisms such as multi-head attention.
Furthermore, DeepRTE is a mesh-free neural operator framework with inherent zero-shot capability. This is achieved by incorporating Green’s function theory and pre-training with delta-function inflow boundary conditions into both its architecture design and training data construction.
The efficacy of the proposed approach is substantiated through comprehensive numerical experiments.
\end{abstract}





	\begin{keyword}
		Neural operator \sep Radiative transfer
		equation \sep Attention \sep Pre-training



	\end{keyword}

\end{frontmatter}


\section{Introduction}\label{sec:intro}

The radiative transport equation (RTE) is a
fundamental equation in a wide variety of applications including
neutron transport~\cite{case1967linear,lewis1993computational},
atmospheric radiative transfer~\cite{marshak20063d}, heat
transfer~\cite{koch2004evaluation}, and optical
imaging~\cite{klose2002optical,tarvainen2005hybrid,joshi2008radiative}.
It describes the physical phenomena
that particles transport and interact with the background media
through absorption, emission, and scattering processes.

The steady-state RTE without source term writes
\begin{equation}\label{eq:rte}
	\bOmega \cdot \nabla I(\br, \bOmega) + \mut(\br) I(\br, \bOmega) =
	\frac{\mus(\br)}{{S_{d-1}}}\int_{{\sS^{d-1}}}   p(\bOmega, \bOmega^*)
	I(\br, \bOmega^*) \diff{\bOmega^*},
\end{equation}
where $I$ is the radiation intensity that depends on both the position
variable $\br\in D\subset\R^{d}$, angular variable
$\bOmega\in\sS^{d-1}$ with $\sS^{d-1}$ being a $d$-dimensional unit sphere;  $S_{d-1} = \dfrac{2\pi^{d/2}}{\Gamma(\frac{d}{2})}$ is the surface area of the unit sphere;
$\mut$ and $\mus$ are the total and scattering cross section
coefficients respectively; $p(\bOmega,\bOmega^*)$ is the phase function (or scattering function) that describes the probability of the particles moving with velocity $\bOmega^*$ scattered to velocity $\bOmega$.

The RTE is a high-dimensional differential-integral equation whose analytical solution is not available. As a result, many researchers have focused on developing numerical solutions to the RTE. In most physical applications, the spatial dimension is $3$ and the angular dimension is $2$ corresponding to $d=3$. Solving the RTE under these conditions is computationally expensive and is considered as one of the core tasks in high-performance computing.

Numerical methods for solving the RTE can be broadly classified into two categories:
1) Deterministic methods: These are based on PDE solvers and they typically involve discretizing Eq. \eqref{eq:rte} in both spatial and angular variables. Popular spatial discretization methods include: the diamond difference method (DD) ~\cite{lathrop1969spatial}, the upwind
scheme~\cite{klose2002optical}, and other finite difference schemes
(FD, TPFM)~\cite{han2014two}, finite element methods
(FEM)~\cite{martin1981phase,tarvainen2005coupled}, finite volume
methods (FVM)\cite{ren2004algorithm}, the Discontinuous Galerkin
methods
(DG)~\cite{cockburn2003discontinuous,wareing2001discontinuous,morel2005sn}.
The DD and FD methods are well-suited for structured grids, while FEM, FVM, and DG can be applied to unstructured grids.
For angular discretization, several methods are commonly used, including: the $P_n$
method~\cite{case1967linear}, the
FEM~\cite{martin1981phase,tarvainen2005coupled}, the discrete
ordinate method
(DOM)~\cite{lewis1993computational,koch2004evaluation,klose2002optical}, etc..
2) Monte Carlo methods: Monte Carlo methods are widely used due to their ability to handle complex geometries and their suitability for parallel computing \cite{lux2018monte}. However, these methods produce results with statistical noise that decreases slowly with the number of sampled particles, following the $1/\sqrt{N}$ rule \cite{spanier2008monte}. A very large number of samples is needed to obtain accurate results, highlighting a key trade-off between the method's flexibility and its computational cost.

To effectively solve the steady-state RTE \eqref{eq:rte}, both deterministic and Monte Carlo methods face their own unique challenges.
For deterministic methods, the scattering term (integral term) couples the density fluxes of different angles together. This coupling means that after discretization, one must solve a large and not overly sparse linear system (due to the integral term). For this large linear system, an iterative method is usually the only feasible approach. Therefore, developing a faster iterative strategy is of crucial importance.
For the Monte Carlo method, one of the main challenges is the statistical noise inherent in the method. It can be computationally expensive due to the need for a large number of samples to achieve accurate results.
Moreover, multiscale phenomena are important in some applications. When the computational domain is large compared to the mean free path ($1/\mut$), the RTE can exhibit quite different behaviors in different regions. For example, it may be more transport-like near the source and more diffusion-like after a significant number of scatterings. The multiscale parameters can affect the performance of iterative methods and accuracy of the Monte Carlo methods.

With the rapid development of machine learning, particularly deep learning, many researchers have been exploring the use of neural networks to solve PDEs. Two of the most well-known approaches are: 1) Physics-Informed Neural Networks (PINNs)~\cite{raissi2017physics, wang2021understanding}: These networks focus on incorporating physical constraints into the training process, specifically through the design of the loss function. PINNs are often used to approximate the solutions of PDEs by ensuring that the neural network output satisfies the underlying physical laws; 2) Neural Operators: These methods, such as DeepONet~\cite{lu2021learning,jin2022mionet,wang2021learning,lu2022multifidelity,zhu2023reliable} and Fourier Neural Operator (FNO)~\cite{li2020fourier,wen2022u,guibas2021adaptive}, aim to approximate the solution operator of PDEs. More precisely, they map initial conditions, boundary conditions, or coefficient functions directly to the solution of the PDE. Both approaches have their own strengths and limitations and they can be combined to outperform conventional numerical methods. These methods have been successfully applied to solve a variety of PDEs, including the Poisson equation, reaction-diffusion equations, Burgers' equation, and the Navier-Stokes equations. However, the application of neural networks to solve the RTE remains relatively underexplored.
In parallel, neural network methods have shown promising results for solving inverse problems in radiative transfer, particularly for two-dimensional optical tomography applications~\cite{fan2019solving}.

In this work, we focus on the neural operator approach to solve the RTE. The classical operator learning approaches DeepONet and FNO have shown success in various operator learning tasks, but they face fundamental limitations when directly applied to the RTE. DeepONet relies on predefined grid structures and its branch-trunk architecture struggles to capture the complex, high-frequency features characteristic of radiative transfer solutions.
Fourier Neural Operators face a couple of key challenges: First, calculating Fourier transforms can be computationally expensive, particularly when dealing with complex, high-dimensional data. Second, FNO struggles when the input parameters and the output solutions reside in different types of spaces. This is because it's difficult to define a meaningful Fourier transform between such distinct domains. These architectural constraints limit both methods' ability to learn the underlying solution operator for radiative transfer problems.

To overcome above challenges, we introduce DeepRTE, an innovative framework that integrates a pre-trained, attention-based neural network while inherently respecting the physical principles governing the transport process.
DeepRTE is an operator learning approach that maps boundary conditions, cross section ($\mu_s, \mu_t$) and the scattering kernel $p(\bOmega,\bOmega^*)$ directly to the solution. This end-to-end design allows for efficient inference and robust generalization across different problem settings.
DeepRTE offers several key advantages:
\begin{enumerate}
	\item \emph{Fewer Parameters, Greater Precision}: DeepRTE demonstrates that smaller, physics-informed models can outperform larger, purely data-driven frameworks such as multiple-input operators (MIO)~\cite{jin2022mionet}, achieving robust generalization and accuracy with significantly fewer parameters. By embedding the physical principles of RTE, DeepRTE reduces computational complexity while enhancing robustness and interpretability. This approach proves that efficient models, when guided by domain knowledge, can deliver superior results in scientific computing.
	\item \emph{Zero-Shot Learning Capability}: DeepRTE exhibits
	      exceptional zero-shot learning ability, allowing it to
	      generalize effectively to new boundary conditions and configurations
	      without the need for additional retraining.
	      This capability is particularly useful for real-world applications
	      where previously unseen conditions can be commonly encountered.
	\item \emph{Physical Interpretability}: DeepRTE integrates mathematical semi-analytical formulations and physical insights through attenuation and scattering modules, utilizing the Green's function integral to ensure that the learned operator remains linear and physically interpretable. This alignment with physical principles enhances the model's reliability and applicability across various scenarios.
\end{enumerate}

The paper is organized as follows.
In Section~\ref{sec:preliminaries}, we provide a brief overview of the semi-analytical form of RTE solution operator.
In Section~\ref{sec:architecture} and Section~\ref{sec:training}, we present the DeepRTE framework, detailing the architecture, training process, and key components.
In Section~\ref{sec:experiments}, we evaluate the performance of DeepRTE on a series of RTE problems, comparing it with traditional numerical methods and other operator learning frameworks.
In Section~\ref{sec:ablation-study}, we investigate our architecture design choices through ablation studies and compare with baseline model. Finally, we conclude with a summary of our findings and discuss potential future research directions.

\paragraph{Code and data availability}
All code for this work is openly available on GitHub under the repositories \href{https://github.com/mazhengcn/deeprte}{deeprte}\footnote{\url{https://github.com/mazhengcn/deeprte}} and \href{https://github.com/mazhengcn/rte-dataset}{rte-dataset}\footnote{ \url{https://github.com/mazhengcn/rte-dataset}} under an open-source license.
The codebase includes the implementation of DeepRTE models, configuration files, and utilities for dataset generation, model training, evaluation and experimentations.
Additionally, we utilized widely adopted open-source libraries, including JAX~\cite{jax2018github}, Flax~\cite{flax2020github}, Optax~\cite{deepmind2020jax}, TensorFlow Datasets and so on.
Pre-trained models and datasets are also accessible on Hugging Face via \href{https://huggingface.co/mazhengcn/deeprte}{mazhengcn/deeprte}\footnote{\url{https://huggingface.co/mazhengcn/deeprte}} and \href{https://huggingface.co/datasets/mazhengcn/rte-dataset}{mazhengcn/rte-dataset}\footnote{\url{https://huggingface.co/datasets/mazhengcn/rte-dataset}}.
All algorithms, models, and results reported in this paper can be fully reproduced using the provided repositories and resources.

\section{Analytical structure of the solution operator}\label{sec:preliminaries}
Let
\begin{equation}\label{eq:boundary}
	\Gamma_{\pm} := \{(\br,\bOmega) \mid \br\in\partial
	D,\;\bOmega\in\sS^{d-1},\;\mp\bn(\br)\cdot\bOmega<0 \},
\end{equation}
where $\bn(\br)$ is the outer normal direction of $D$ at
$\br\in\partial D$. $I(\br,\bOmega)$ satisfies the
following RTE with inflow boundary conditions:
\begin{equation}\label{eq:rte-with-bc}
	\begin{aligned}\bOmega \cdot \nabla I(\br,\bOmega) + \mut(\br) I(\br,\bOmega) & = \frac{\mus(\br)}{S_{d-1}} \int_{\sS^{d-1}} p(\bOmega,\bOmega^*) I(\br,\bOmega^*)\diff{\bOmega^*}, &  & \text{in } D\times\sS^{d-1}, \\
               I|_{\Gamma_{-}}(\br,\bOmega)                                   & = I_{-}(\br,\bOmega),                                                                               &  & \text{on }\Gamma_{-},
	\end{aligned}
\end{equation}
where $I_{-}$ is a given function on $\Gamma_{-}$ and $S_{d-1} = \dfrac{2\pi^{d/2}}{\Gamma(d/2)}$ is the surface area of the unit sphere $\sS^{d-1}$.

The coefficient functions in this boundary value problem are the total cross section $\mu_t$, the scattering cross section $\mu_s$, and the scattering kernel $p$. We also define the absorption cross section $\mu_a = \mu_t - \mu_s$. The kernel $p(\bOmega,\bOmega^*)$ represents the probability density of scattering from direction $\bOmega^*$ into direction $\bOmega$. We assume reciprocity (indistinguishable particles), i.e., $p(\bOmega,\bOmega^*) = p(\bOmega^*,\bOmega)$, and the normalization
\begin{equation}
	\frac{1}{S_{d-1}}\int_{\sS^{d-1}} p(\bOmega,\bOmega^*) \,\mathrm{d}\bOmega^* = 1
	\quad\text{for a.e. }\bOmega\in\sS^{d-1}.
\end{equation}
In many applications, one additionally assumes rotational invariance of the scattering kernel:
$p(\bOmega,\bOmega^*) = \tilde p(\bOmega\cdot\bOmega^*)$, so $p$ depends only on the scattering angle
$\theta = \arccos(\bOmega\cdot\bOmega^*)$. We adopt this assumption to simplify our description; however, our algorithm applies equally to general kernels.

In a more general settings where $\mut$ and $\mus$ not only depend on $\br$ but also depend on the velocity speed and angular: $\bm{v} = |\bm{v}|\bOmega$, solvability in $L^1$ and $L^\infty$ has been established in~\cite{case1963existence} under the sub-criticality conditions
\begin{equation}
	\mu = \frac{\mus}{\mut} < 1 - \nu, \quad \nu > 0,
\end{equation}
These imply that the scattering operator is a small perturbation of the differential operator on the left-hand side of~\eqref{eq:rte-with-bc} and contraction arguments apply.
Corresponding results in $L^p$ for $1 \leq p \leq \infty$ can be found in~\cite{agoshkov2012boundary,choulli1999inverse}.
Note that \textit{a priori} estimates for the solution derived under these conditions typically degenerate when $\nu\to 0$.
In~\cite{vladimirov1963mathematical}, solvability in $L^1$ was established provided that
\begin{equation}
	\mua = \mut - \mus \geq 0, \quad \text{and } \mut > 0.
\end{equation}
Existence results in $L^2$ were developed under these conditions in~\cite{choulli1999inverse,manteuffel1999boundary,egger2012mixed} by variational arguments.
Note that the assumption $\mut > 0$ excludes the presence of void regions and that the \textit{a priori} estimates again degenerate when $\mut \to 0$.
Based on monotonicity arguments, existence of solutions in $L^1$ was established in~\cite{pettersson2001stationary,falk2003existence}, without the strict positivity assumption on $\mut$.
For velocities with uniform speed $|\bm{v}|$ (as in our case $|\bm{v}|=1$), solvability in $L^2$ was established without lower bounds on $\mut$ in~\cite{egger2014stationary}.
While the previous results are based on some sort of contraction principle, it is possible to obtain existence of solutions also via compactness arguments and Riesz-Schauder or analytic Fredholm theory~\cite{stefanov2008inverse}.
These results, however, do not lead to computable a-priori bounds.

In this paper, we use the existence and uniqueness results in~\cite{egger2014lp} and~\cite{egger2014stationary}. The former work established the existence and uniqueness of solutions in $L^p$ for $1\leq p\leq\infty$ under a more general setting where $\mut$ and $\mus$ depend on both $\br$ and $\bm{v}$ which states that:
\begin{thm}\label{thm:existence-uniqueness-lp}
	See~\cite{egger2014lp} Assume the following conditions hold:
	\begin{enumerate}
		\item Let $\bm{v}=|\bm{v}|\bOmega\in V\subset\mathbb{R}^3$ be open and $D\subset\mathbb{R}^d$ is a bounded Lipschitz domain;
		\item $\mut:D\times V \to \mathbb{R}$ is non-negative and $\tau\mut\in L^\infty(D\times V)$. Here $\tau(\br,\bm{v})$ denotes the length of the line segment through $\br$ in direction $\bm{v}$ completely contained in $D$;
		\item $p: V\times V\to\mathbb{R}$ is non-negative and measurable and
		      \begin{equation}
			      \mua=\mut - \mus \geq 0.
		      \end{equation}
	\end{enumerate}
	Then, for all $1\leq p \leq\infty$ and all admissible data $I_{-}$ the radiative transfer problem~\eqref{eq:rte-with-bc} admits a unique solution $I$ that satisfies
	\begin{equation}
		\|\tau^{-\frac{1}{p}}I\|_{L^p(D\times V)} \leq e^{C_p}\|I_{-}\|_{L^p(\Gamma_{-};|\bn\cdot\bOmega|)}, \quad \text{with }
		C_p =\|\tau\mus\|_{L^\infty}.
	\end{equation}
	Morever, if we further assume $\mut>0$ and for some $\nu>0$,
	\begin{equation}
		\frac{\mus}{\mut} \leq 1 - \nu,
	\end{equation}
	then
	\begin{equation}
		\|\mut^{\frac{1}{p}}I\|_{L^p(D\times V)} \leq \nu^{-\frac{1}{p}}\|I_{-}\|_{L^p(\Gamma_{-};|\bn\cdot\bOmega|)},
	\end{equation}
	this allows to consider also the case $\mut\to\infty$ which may be important for asymptotic considerations.
\end{thm}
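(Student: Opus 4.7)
The plan is to decompose the operator in~\eqref{eq:rte-with-bc} as $T - K$, where the streaming/attenuation part is $T I := \bOmega\cdot\nabla I + \mut I$ and the scattering part is $(K I)(\br,\bOmega) := \frac{\mus(\br)}{S_{d-1}}\int_{\sS^{d-1}} p(\bOmega,\bOmega^*)\,I(\br,\bOmega^*)\diff{\bOmega^*}$, and then to recast~\eqref{eq:rte-with-bc} as the fixed-point equation
\begin{equation*}
I \;=\; T^{-1} K I + I_0
\end{equation*}
in a suitably weighted $L^p(D\times V)$, where $I_0$ denotes the transport lift of the inflow data $I_-$ to the interior along the inflow characteristics.

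First I would invert $T$ explicitly along characteristics. With $\tau_-(\br,\bOmega)$ the backward exit time from $D$ along $-\bOmega$, one has
\begin{equation*}
(T^{-1}f)(\br,\bOmega) = \int_0^{\tau_-(\br,\bOmega)} \exp\Bigl(-\int_0^s \mut(\br - r\bOmega,\bOmega)\diff{r}\Bigr)\,f(\br - s\bOmega,\bOmega)\diff{s},
\end{equation*}
and an analogous ray-lift formula for $I_0$ in terms of $I_-$. A Santal\'o-type foliation of $D\times\sS^{d-1}$ along characteristics rewrites $\diff{\br}\,\diff{\bOmega} = |\bn\cdot\bOmega|\diff{\sigma(\br_-)}\diff{s}\diff{\bOmega}$ over entry-point parametrizations, so that combined with Minkowski's integral inequality and the hypothesis $\mut\geq 0$ one obtains the weighted bounds $\|\tau^{-1/p}I_0\|_{L^p(D\times V)}\leq \|I_-\|_{L^p(\Gamma_-;|\bn\cdot\bOmega|)}$ and a companion $L^p$-bound for $T^{-1}$. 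Since $p(\bOmega,\cdot)/S_{d-1}$ is a probability density on $\sS^{d-1}$, Jensen's inequality then yields $\|Kg\|_{L^p(D\times V)}\leq \|\mus g\|_{L^p(D\times V)}$; composing the two estimates bounds the operator norm of $T^{-1}K$ by a constant of the order $\|\tau\mus\|_{L^\infty}$, so that the Neumann series $I = \sum_{k\geq 0} (T^{-1}K)^k I_0$ delivers existence and uniqueness. A Gronwall-type summation exploiting the iterated ray-integral structure of $T^{-1}$ (which produces factorial denominators $k!$) then gives exactly the exponential bound $e^{C_p}$ with $C_p = \|\tau\mus\|_{L^\infty}$.

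For the sub-critical regime I would test the RTE with $|I|^{p-2}I$ on $D\times V$. Using $\bOmega\cdot\nabla|I|^p = p\,|I|^{p-2}I\,\bOmega\cdot\nabla I$ and the divergence theorem produces a boundary splitting over $\Gamma_{\pm}$; Young's inequality applied to the scattering term, combined with Fubini and the normalization of $p$, yields
\begin{equation*}
\int_{D\times V}\mut|I|^p\diff{\br}\diff{\bOmega} + \tfrac{1}{p}\int_{\Gamma_+}|\bn\cdot\bOmega||I|^p \;\leq\; \int_{D\times V}\mus|I|^p + \tfrac{1}{p}\int_{\Gamma_-}|\bn\cdot\bOmega||I_-|^p.
\end{equation*}
Absorbing the scattering term via $\mus\leq(1-\nu)\mut$ leaves $\nu\|\mut^{1/p}I\|_{L^p}^p \leq \tfrac{1}{p}\|I_-\|_{L^p(\Gamma_-;|\bn\cdot\bOmega|)}^p$, which implies the second stated estimate. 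The hard part will be (i) propagating both estimates uniformly across the full range $1\leq p\leq\infty$ --- with $p=\infty$ handled by a ray-wise maximum-principle argument and $p=1$ by duality against $L^\infty$ --- and (ii) the Santal\'o-type weighted Fubini reductions that justify the $\tau^{-1/p}$ weight in the regime where $\mut$ may vanish, since there the chord length $\tau$ must substitute for the mean free path $1/\mut$ that drives the sharper bound in the sub-critical case.
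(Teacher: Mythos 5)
The first thing to say is that the paper does not prove this theorem at all: it is quoted from~\cite{egger2014lp} (``See~\cite{egger2014lp}\dots''), and the only related material in the paper is the reformulation $I = \cL\cS I + \J I_{-}$ in~\eqref{eq:rte-op-form} together with the quoted spectral-radius bound $\rho_p(\cL\cS)\leq 1-e^{-C_p}$. So there is no in-paper proof to compare against; your sketch can only be judged against the structure the paper describes and the argument in the cited reference. On that score your skeleton is the right one: the decomposition $T-K$, the explicit inversion of $T$ along characteristics, the Santal\'o change of variables $\diff{\br}\,\diff{\bOmega} = |\bn\cdot\bOmega|\,\diff{\sigma}\,\diff{s}$ to relate the $\tau^{-1/p}$-weighted interior norm to the weighted boundary norm, Jensen for the normalized kernel, and the testing of the equation against $|I|^{p-2}I$ for the sub-critical estimate (your constant $(\nu p)^{-1/p}$ is in fact slightly sharper than the stated $\nu^{-1/p}$, so that part is fine modulo the usual care at $p=1,\infty$).

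The one genuine gap is in how you close the Neumann series. You bound the one-step operator norm of $T^{-1}K$ by a constant ``of the order $\|\tau\mus\|_{L^\infty}$,'' which can exceed $1$, and then appeal to ``factorial denominators $k!$'' from a Gronwall-type summation of iterated ray integrals. That factorial mechanism is what one gets for a genuine Volterra operator along a single ray, but $\cL\cS$ is not Volterra under iteration: each application of $\cS$ redistributes mass over all directions, so the $n$ nested integrals in $(\cL\cS)^n$ run along $n$ different rays and do not produce a simplex of volume $C^n/n!$. The argument that actually works --- and the one behind the paper's quoted bound $\rho_p(\cL\cS)\leq 1-e^{-C_p}$ --- uses the hypothesis $\mua=\mut-\mus\geq 0$, which your first part never invokes: since $\mut\geq\mus$ one has
\begin{equation*}
\int_0^{s_{-}(\br,\bOmega)} e^{-\int_0^{s}\mut(\br-r\bOmega)\,\mathrm{d}r}\,\mus(\br-s\bOmega)\,\mathrm{d}s
\;\leq\;
\int_0^{s_{-}} e^{-\int_0^{s}\mus\,\mathrm{d}r}\,\mus\,\mathrm{d}s
\;=\; 1 - e^{-\int_0^{s_{-}}\mus}
\;\leq\; 1-e^{-C_p},
\end{equation*}
so each step is a strict contraction and the geometric series $\sum_n (1-e^{-C_p})^n = e^{C_p}$ delivers exactly the stated constant. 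It is a coincidence that $\sum_k C_p^k/k!$ gives the same number; the route to it in your sketch is not justified. Repair the first part by inserting the $\mua\geq 0$ monotonicity trick above (which is also where assumption (3) of the theorem actually enters) and the rest of your outline goes through.
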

For our specific settings and numerical purpose, most often we will use the later theorem in which we assume $\mut$ and $\mus$ depend only on $\br$. The existence and uniqueness of solutions to~\eqref{eq:rte-with-bc} in $L^2(D\times\sS^{d-1})$ established in~\cite{egger2014stationary} states:
\begin{thm}\label{thm:existence-uniqueness-l2}
	Assume domain $D$ is bounded, let $\mua=\mut-\mus$, $\mus$, $p$ be non-negative and bounded measurable functions, i.e.,
	\begin{equation}
		\mua=\mut -\mus \geq 0, \quad \mus \geq 0, \quad \text{and} \quad \|\mua\|_{L^\infty(D)}<\infty, \|\mus\|_{L^\infty(D)}<\infty.
	\end{equation}
	and assume that
	\begin{equation}
		\frac{1}{S_{d-1}}\int_{\sS^{d-1}}p(\bOmega,\bOmega^*)\diff{\bOmega^*}=1, \quad\text{for a.e. } \bOmega\in \sS^{d-1}.
	\end{equation}
	Then for any $I_{-}\in L^2(\partial D\times\sS^{d-1})$ the RTE with boundary condition~\eqref{eq:rte-with-bc} has a unique solution $I\in L^2(D\times\sS^{d-1})$. Moreover, the \textit{a priori} estimate
	\begin{equation}
		\|\bOmega\cdot\nabla I\|_{L^2(D\times\sS^{d-1})}+\|I\|_{L^2(D\times\sS^{d-1})} \leq C \|I_{-}\|_{L^2(\partial D\times\sS^{d-1})},
	\end{equation}
	holds with a constant $C$ that only depends on $\text{diam}(D)$, $\|\mua\|_{L^\infty(D)}$ and $\|\mus\|_{L^\infty(D)}$.
\end{thm}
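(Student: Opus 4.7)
The plan is to establish existence and uniqueness by first lifting the inflow data and then treating the resulting homogeneous boundary problem via a variational argument, reading the a priori estimate off the energy identity at the end. To that end I would first solve the pure-transport problem $\bOmega\cdot\nabla I_0 + \mut I_0 = 0$, $I_0|_{\Gamma_{-}}=I_{-}$, which along characteristics admits the explicit representation
\[
I_0(\br,\bOmega) = I_{-}(\br-\tau_{-}(\br,\bOmega)\bOmega,\bOmega)\exp\Big(-\int_0^{\tau_{-}(\br,\bOmega)}\mut(\br-s\bOmega)\,ds\Big),
\]
where $\tau_{-}(\br,\bOmega)$ is the backward exit distance. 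Standard Cessenat-type trace estimates then give $\|I_0\|_{L^2}+\|\bOmega\cdot\nabla I_0\|_{L^2} \leq C\|I_{-}\|_{L^2(\partial D\times\sS^{d-1};|\bn\cdot\bOmega|)}$. Writing $I=I_0+\tilde I$, the remainder lies in $W_0:=\{J\in L^2(D\times\sS^{d-1}):\bOmega\cdot\nabla J\in L^2,\ J|_{\Gamma_{-}}=0\}$ and satisfies $\bOmega\cdot\nabla\tilde I+\mut\tilde I-\mus K\tilde I = \mus K I_0$, where $K\psi(\bOmega):=\tfrac{1}{S_{d-1}}\int_{\sS^{d-1}}p(\bOmega,\bOmega^*)\psi(\bOmega^*)\,d\bOmega^*$.

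Next, I would test the bilinear form $a(J,\phi):=\int_{D\times\sS^{d-1}}(\bOmega\cdot\nabla J+\mut J-\mus KJ)\phi$ against $\phi=J$. Two facts are exploited: the Green identity
\[
\int_{D\times\sS^{d-1}}(\bOmega\cdot\nabla J)\,J\,d\br\,d\bOmega \,=\, \tfrac{1}{2}\int_{\Gamma_{+}}|\bn\cdot\bOmega|\,J^2\,dS\,d\bOmega,
\]
valid because $J|_{\Gamma_{-}}=0$, and the bound $\|K\|_{L^2(\sS^{d-1})\to L^2(\sS^{d-1})}\leq 1$ obtained from Schur's test using $\tfrac{1}{S_{d-1}}\int p(\bOmega,\bOmega^*)\,d\bOmega^*=1$ together with reciprocity, which yields $\langle\mus KJ,J\rangle\leq \int\mus J^2$. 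Combined with $\mua=\mut-\mus\geq 0$, these give
\[
a(J,J)\,\geq\, \tfrac{1}{2}\|J|_{\Gamma_{+}}\|^2_{L^2(\Gamma_{+};|\bn\cdot\bOmega|)}+\int_{D\times\sS^{d-1}}\mua J^2,
\]
which controls the outflow trace and the absorbed part of $J$ but, crucially, not the full $L^2$ norm when $\mua$ vanishes.

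The main obstacle is thus recovering $L^2(D\times\sS^{d-1})$ coercivity in the absence of any lower bound on $\mut$. To overcome this I would pass to the even/odd parity decomposition $I=I^{+}+I^{-}$ with $I^{\pm}(\br,\bOmega):=\tfrac{1}{2}(I(\br,\bOmega)\pm I(\br,-\bOmega))$, use the reciprocity of $p$ (which makes $K$ parity-preserving) to eliminate the odd part algebraically from one of the resulting equations, and derive a symmetric second-order equation for $I^{+}$ whose natural bilinear form is coercive on an $H^1$-type subspace by virtue of the transport Poincar\'e inequality $\|J\|_{L^2(D\times\sS^{d-1})}\leq \mathrm{diam}(D)\,\|\bOmega\cdot\nabla J\|_{L^2}$ for $J\in W_0$, itself obtained by integrating along the backward characteristic from $\Gamma_{-}$. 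Lax--Milgram then delivers a unique $I^{+}$, hence a unique $\tilde I\in W_0$ and a unique $I$; reading $\|\bOmega\cdot\nabla I\|_{L^2}$ directly off the original equation in terms of $\|I\|_{L^2}$ and the data yields the stated \textit{a priori} estimate with constant depending only on $\mathrm{diam}(D)$, $\|\mua\|_\infty$, and $\|\mus\|_\infty$.
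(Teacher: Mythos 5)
The paper itself offers no proof of this theorem: it is quoted verbatim from~\cite{egger2014stationary}, so your proposal can only be judged on its own merits. The first two-thirds are sound: the explicit lifting $I_0=\J I_-$ with the chord-parametrization estimate $\|I_0\|_{L^2}\le \mathrm{diam}(D)^{1/2}\|I_-\|_{L^2(\Gamma_-;|\bn\cdot\bOmega|)}$, the Green identity for functions vanishing on $\Gamma_-$, and the Schur-test bound $\|K\|_{L^2(\sS^{d-1})\to L^2(\sS^{d-1})}\le 1$ are all correct, and you rightly identify the genuine obstruction: the energy identity only controls $\int\mua J^2$ plus the outflow trace, which is useless where $\mua=0$.

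The gap is in your proposed repair. Eliminating the odd part requires inverting $\mut-\mus K$ on odd-parity functions, which needs $\mut$ bounded away from zero; but the theorem's hypotheses ($\mua\ge 0$, $\mus\ge 0$, both merely bounded) permit $\mut$ to vanish on a set of positive measure, where $\mut-\mus K\equiv 0$ — and the paper emphasizes that the absence of a lower bound on $\mut$ is precisely what makes the cited result nontrivial. Two further steps do not hold as stated: reciprocity $p(\bOmega,\bOmega^*)=p(\bOmega^*,\bOmega)$ does not make $K$ parity-preserving (that requires $p(-\bOmega,-\bOmega^*)=p(\bOmega,\bOmega^*)$, which is not among the theorem's hypotheses); and the transport Poincar\'e inequality $\|J\|_{L^2}\le\mathrm{diam}(D)\|\bOmega\cdot\nabla J\|_{L^2}$ is proved by integrating from the inflow boundary and so holds only for $J$ with $J|_{\Gamma_-}=0$, whereas the even part $I^+$ is even in $\bOmega$ and cannot vanish on $\Gamma_-$ without vanishing on all of $\partial D\times\sS^{d-1}$; hence the claimed coercivity of the even-parity form is unsupported. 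The reference instead exploits the uniform-speed structure ($|\bm{v}|=1$) through compactness (velocity averaging) and a Fredholm-alternative/limiting argument, with uniqueness obtained from the equality case $\langle KI,I\rangle=\|I\|^2$ forcing isotropy where $\mus>0$ — a mechanism your variational route does not capture.
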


According to above existence and uniqueness results, if $\mut$, $\mus$ and $p$ satisfy the assumptions in~\ref{thm:existence-uniqueness-l2}, we can define the solution operator of \eqref{eq:rte-with-bc} as
\begin{equation}
	\A: (I_{-};\mu_t,\mu_s,p) \mapsto I,
\end{equation}
that maps the inflow intensity $I_{-}$ on the boundary, the total
cross section coefficient $\mut$, the scattering coefficient $\mus$
and the scattering kernel $p$ to the solution $I$ inside the whole computational domain.
It is important to note that $\A$ is linear in $I_{-}$ but nonlinear in other function inputs.

Our \emph{Goal} is to develop an efficient and accurate method for numerically approximating the solution operator $\mathcal{A}$. A comprehensive understanding of the structure of the solution operator—even from a formal perspective—is crucial for constructing efficient and well-generalized deep neural networks to approximate $\mathcal{A}$. We will decompose $\mathcal{A}$ into solution operators that are computationally tractable, so that valuable insights for the efficient utilization of neural networks can be gained. This, in turn, enables us to design an architecture specifically tailored for solving RTE. We first review the analytical form of the solution operator, which can be expressed as a sequence expansion.

\subsection{Structure of solution operator}
In order to express the solution operator in a more tractable form, we follow the standard approach in~\cite{case1963existence,choulli1999inverse} to convert RTE to a fixed-point equation.
Let us start by reformulating the radiative transfer problem as an equivalent integral equation
in the usual way. All the calculations below are formal and can be found in~\cite{egger2014lp,fan2019solving}.

We define $\tau_{\br,\bOmega}(s_1,s_2)$ be the \emph{optical depth} or \emph{optical thickness} from $\br-s_1\bOmega$ to $\br-s_2\bOmega$ along the characteristic line, i.e.,
\begin{equation} \label{eq:optical-depth}
	\tau_{\br,\bOmega}(s_1,s_2):=\int_{s_1}^{s_2} \mut(\br-s \bOmega) \diff{s},
\end{equation}
and $s_{-}(\br,\bOmega)$ be the distance of a particle moving in direction $\bOmega$ traveling from $\br$ to the domain boundary, i.e.,
\begin{equation}
	s_{-}(\br,\bOmega):= \inf \{s \geq 0 \mid \br-s\bOmega \in\partial D\}.
\end{equation}
One can then get the solution of the boundary value problem
\begin{equation}\label{eq:rte-sweep}
	\begin{aligned}\bOmega \cdot \nabla J_b(\br,\bOmega) + \mut(\br) J_b(\br,\bOmega) & = 0,                  &  & \text{in } D\times\sS^{d-1}, \\
               J_b|_{\Gamma_{-}}(\br, \bOmega)                                    & = I_{-}(\br,\bOmega), &  & \text{on }\Gamma_{-},
	\end{aligned}
\end{equation}
by
\begin{equation}\label{eq:attenuation-op}
	J_b(\br, \bOmega)= \J I_{-}(\br, \bOmega) = e^{-\tau_{\br,\bOmega}(0,s_{-}(\br,\bOmega))} I_{-}\left(\br-s_{-}(\br, \bOmega)
	\bOmega, \bOmega\right).
\end{equation}

Similarly, when $I$ is known, the solution of
\begin{equation}\label{eq:lifting}
	(\bOmega\cdot\nabla + \mut)J = \mus I, \quad  J|_{\Gamma_{-}} = 0,
\end{equation}
can be given by
\begin{equation}\label{eq:lifting-op}
	J=\cL I(\br, \bOmega) = \int_0^{s_{-}(\br, \bOmega)} e^{-\tau_{\br,\bOmega}(0,s)}\mus(\br-s\bOmega)I(\br-s\bOmega, \bOmega)\diff{s},
\end{equation}
where one can refer $\cL$ being the lifting operator, and we also define the scattering operator (see Remark~\ref{remk:mus}):
\begin{equation}\label{eq:scattering-op}
	\cS I(\br, \bOmega) = \frac{1}{S_{d-1}}\int_{\sS^{d-1}} p(\bOmega, \bOmega^*)I(\br, \bOmega^*) \diff{\bOmega^*}.
\end{equation}
By inversing the operator $\bOmega \cdot \nabla + \mut(\br)$, the boundary value problem~\eqref{eq:rte-with-bc} can then be seen to be equivalent to the following operator equation in integral form~\cite{case1963existence}:
\begin{equation}\label{eq:rte-op-form}
	I = \cL\cS I + \J I_{-},
\end{equation}
which is a Fredholm integral equation of the second kind \cite{choulli1999inverse,egger2014lp,fan2019solving}.
One can see that $\cL\cS I$ is the contribution from the scattering process and $\J I_{-}$ takes into account the boundary conditions.

To establish the existence of a unique fixed point, one must demonstrate that $\cL\cS$ is a contraction mapping in an appropriate function space. This result is proven in~\cite{egger2014lp} under the assumptions stated in Theorem~\ref{thm:existence-uniqueness-lp}.

The solution operator can also be expressed through the corresponding fixed-point iteration
\begin{equation}\label{eq:fixed-point-iteration}
	I^{m+1} = \cL\cS I^m + \J I_{-}.
\end{equation}
When the initial guess is set to $I^0=\J I_{-}$, this iteration takes the same form as the well-known classical source iteration method~\cite{adams2002fast}. Furthermore, as established in~\cite{egger2014lp}:
\begin{thm}
	Under the assumptions in Theorem~\ref{thm:existence-uniqueness-lp}, for all $1\leq p \leq\infty$, we have
	\begin{equation}
		\rho_p(\cL\cS):=\lim_{n\to\infty}\sqrt[n]{\|(\cL\cS)^n\|_{L^p(D\times V;\tau^{-1})}} \leq 1 - e^{-C_p} < 1.
	\end{equation}
\end{thm}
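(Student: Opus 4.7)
The plan is to bound the operator norm $\|\cL\cS\|$ in the weighted space $L^p(D\times V;\tau^{-1})$, since $\rho_p(\cL\cS)\le\|\cL\cS\|_{L^p(\tau^{-1})}$; a one-step contraction estimate therefore suffices.

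I would first settle the endpoint $p=\infty$, where the weight disappears. Starting from
\[
\cL\cS f(\br,\bOmega)=\int_0^{s_-(\br,\bOmega)}e^{-\tau_{\br,\bOmega}(0,s)}\mus(\br-s\bOmega)\,\cS f(\br-s\bOmega,\bOmega)\,ds,
\]
the normalization $\frac{1}{S_{d-1}}\int p(\bOmega,\bOmega^*)\diff{\bOmega^*}=1$ gives $\|\cS f\|_\infty\le\|f\|_\infty$. The assumption $\mua=\mut-\mus\ge 0$ yields $\tau_{\br,\bOmega}(0,s)\ge\int_0^s\mus(\br-r\bOmega)\,dr=:\sigma_{\br,\bOmega}(0,s)$, hence
\[
\int_0^{s_-}e^{-\tau_{\br,\bOmega}(0,s)}\mus\,ds\le\int_0^{s_-}e^{-\sigma_{\br,\bOmega}(0,s)}\mus\,ds = 1-e^{-\sigma_{\br,\bOmega}(0,s_-)}.
\]
Since the chord length $\tau(\br',\bOmega)$ is constant as $\br'$ varies along the characteristic line $\{\br-s\bOmega\}\cap D$, I have $\mus(\br-s\bOmega)\le\|\tau\mus\|_\infty/\tau(\br,\bOmega)$ on that segment; integrating and using $s_-(\br,\bOmega)\le\tau(\br,\bOmega)$ gives $\sigma_{\br,\bOmega}(0,s_-)\le C_\infty$. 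Combining, $\|\cL\cS\|_{L^\infty\to L^\infty}\le 1-e^{-C_\infty}$.

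For general $p$, I would establish the $L^1(\tau^{-1})$ endpoint by duality. Since the phase function is symmetric, $\cS^*=\cS$, and a Fubini computation along characteristics identifies the adjoint $\cL^*$ on $L^{p'}(\tau^{-1})$ with a lifting operator in the reversed direction of essentially the same form, whose pointwise bound is identical. This upgrades the $L^\infty$ estimate on $(\cL\cS)^*$ into the matching $L^1(\tau^{-1})$ estimate on $\cL\cS$. Riesz--Thorin interpolation between these two endpoints, suitably adapted to the weight (for instance by analysing the analytic family $T_z:f\mapsto\tau^{-z}\cL\cS(\tau^{z-1}f)$), then yields the uniform bound $\|\cL\cS\|_{L^p(\tau^{-1})}\le 1-e^{-C_p}$ for every $1\le p\le\infty$, so that $\rho_p(\cL\cS)\le 1-e^{-C_p}<1$.

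The main obstacle I anticipate is the weight bookkeeping: correctly identifying the adjoint of $\cL$ with respect to the measure $\tau^{-1}\,d\br d\bOmega$, and verifying that the change of variables from characteristic-line integration to the volume integral on $D\times V$ produces precisely the factor of $\tau$ needed to cancel the weight and preserve the $L^\infty$ contraction constant through both duality and interpolation. Once that weighted duality is settled, the pointwise estimate derived in the second step carries the entire analytical content of the proof.
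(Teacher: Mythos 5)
First, a caveat on the comparison itself: the paper does not prove this theorem; it is quoted verbatim from~\cite{egger2014lp}. So I can only assess your argument on its own terms.

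Your $L^\infty$ endpoint is correct and complete: $\|\cS\|_{L^\infty\to L^\infty}\le 1$ from the normalization of $p$, the inequality $\mut\ge\mus$ turns the characteristic integral into $\int_0^{s_-}e^{-\sigma}\diff{\sigma}=1-e^{-\sigma(0,s_-)}$ with $\sigma(0,s)=\int_0^s\mus(\br-r\bOmega)\diff{r}$, and the constancy of the chord length along a characteristic bounds $\sigma(0,s_-)$ by $\|\tau\mus\|_{L^\infty}$. The gap is in the reduction of $p<\infty$ to this estimate. Your premise that ``a one-step contraction estimate suffices'' fails at $p=1$: the bound $\|\cL\cS\|_{L^1(\tau^{-1})\to L^1(\tau^{-1})}\le 1-e^{-C_1}$ is false. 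Take the slab $D=(0,1)$ with two directions $\bOmega=\pm1$, $p\equiv1$, and $\mut=\mus=M\,\bm{1}_{(0,\epsilon)}$, so that $\tau\equiv1$ and $C_1=M$. For an approximate unit point mass placed at $y_0\uparrow\epsilon$ travelling in the $+1$ direction, Fubini along the characteristic gives
\begin{equation*}
\|\cL\cS f\|_{L^1}\;\ge\;\frac{M}{2}\int_{y_0}^{1}e^{-\int_{y_0}^{x}\mut}\diff{x}\;\longrightarrow\;\frac{M(1-\epsilon)}{2},
\end{equation*}
which exceeds $1$ for $M$ large, while $1-e^{-C_1}<1$. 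The mechanism is exactly the weight bookkeeping you flagged as the ``main obstacle'': the adjoint of $\cL$ with respect to the $\tau^{-1}$-weighted pairing is $\cL^{*}g(\br,\bOmega)=\mus(\br)\int_0^{s_+}e^{-\tau^{+}(0,s)}g(\br+s\bOmega,\bOmega)\diff{s}$ --- the factor $\mus$ lands at the output point, \emph{outside} the line integral, so the exact-derivative structure $e^{-\sigma}\diff{\sigma}$ that produced $1-e^{-C}$ is destroyed and the pointwise bound is not ``identical.'' This is also why the theorem is stated for the spectral radius rather than the operator norm.

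The argument is repairable, but it needs two additional ideas you do not supply. For $p=1$, after taking the adjoint use $\rho(AB)=\rho(BA)$ together with the commutation of $\cS$ with multiplication by $\mus(\br)$ to conjugate $(\cL\cS)^{*}=\cS\cL^{*}$ into the reversed-direction operator $\tilde\cL\cS$, where $\tilde\cL g(\br,\bOmega)=\int_0^{s_+}e^{-\tau^{+}(0,s)}\mus(\br+s\bOmega)g(\br+s\bOmega,\bOmega)\diff{s}$ now carries $\mus$ at the integration point; this operator does satisfy your pointwise $L^\infty$ bound, giving $\rho_1\le 1-e^{-C_1}$ even though $\|\cL\cS\|_{L^1(\tau^{-1})}$ does not. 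For intermediate $p$, Riesz--Thorin (in its Stein--Weiss form, to match the weight $\tau^{-1/p}$) must be applied to each power $(\cL\cS)^n$ before taking $n$-th roots, yielding $\rho_p\le\rho_1^{1/p}\rho_\infty^{1-1/p}$; interpolating a single application of $\cL\cS$, as you propose, cannot control the spectral radius because one of the two endpoint one-step bounds does not hold.
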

This shows that the spectral radius of the fixed-point operator $\cL\cS$ is uniformly bounded away from one, ensuring convergence of the iteration to the solution of the RTE.
Consequently, the solution operator admits the series representation
\begin{equation}\label{eq:soln-op-series}
	\A[\mu_t,\mu_s,p] I_{-} = \sum_{n=0}^\infty(\cL\cS)^n\J I_{-}.
\end{equation}

This iterative structure provides valuable insight for constructing neural network architectures, as the layered structure of neural networks naturally parallels the iterative nature of the solution operator.
\begin{remark}\label{remk:mus}
	In the literature, the scattering cross section $\mus$ is typically included in the scattering operator $\mathcal{S}$, not the lifting operator $\mathcal{L}$. However, we position $\mus$ within $\mathcal{L}$ to accommodate the specific architecture of our model. This choice arises because the lifting operator $\mathcal{L}$ is designed to inherently capture characteristic properties, whereas $\mathcal{S}$ is not. This distinction allows our framework to effectively integrate $\mus$ while maintaining consistency with the model's structural requirements, see Alg~\ref{alg:attenuation-module} and Alg~\ref{alg:scattering-module}.
\end{remark}
\subsection{Green's function}
In this subsection, we study the distributional kernel of the solution operator $\A$. The following calculations are formal; for a rigorous treatment, we refer the reader to~\cite{choulli1999inverse}, Section~3.

Let us denote $G(\br,\bOmega,\br',\bOmega')$ be the solution (in the distribution sense) to
\begin{equation}\label{eq:rte-greens-function}
	\begin{aligned}
		(\bOmega \cdot \nabla + \mu_t)G & =  \mus\cS G, \quad                                     &  & \text{on } D\times\sS^{d-1}, \\
		G|_{\Gamma_{-}}                 & = \delta_{\{\br'\}}(\br)\delta(\bOmega-\bOmega'), \quad &  & \text{on }\Gamma_{-},
	\end{aligned}
\end{equation}
where $(\br',\bOmega')\in\Gamma_{-}$ are considered as parameters; $\delta_{\{\br'\}}$ denotes a distribution on $\partial D$ defined by
\begin{equation}\label{eq:delta-func}
	(\delta_{\{\br'\}},\phi) = \int_{\partial D}
	\delta_{\{\br'\}}(\br)\phi(\br)\diff{\br} = \phi(\br'),\qquad
	\forall \phi(\br)\in C_c^\infty(\partial D);
\end{equation}
$\delta(\bOmega)$ represents the standard Dirac delta function in $\sS^{d-1}$.
$G$ is referred as the Green's function of the RTE, representing the system's response to a point source at the boundary. The solution $I$
to~\eqref{eq:rte-with-bc} can be represented in the following integral form:
\begin{equation} \label{eq:rte-op}
	I(\br,\bOmega) = \int_{\Gamma_{-}}
	G(\br,\bOmega,\br',\bOmega')I_{-}(\br',\bOmega')\diff{\br'}\diff{\bOmega'}
	= \A[\mu_t,\mu_s,p] I_{-}(\br,\bOmega).
\end{equation}

The above formulation elucidates that if one can determine
$G(\br,\bOmega,\br',\bOmega')$ for all $(\br',\bOmega')$ at the
boundary, the solution can be readily obtained through the
integral operator in~\eqref{eq:rte-op}. This insight informs
the structure of our proposed deep neural network operator.

According to~\eqref{eq:fixed-point-iteration} and~\eqref{eq:soln-op-series}, $G$ can be obtained by the following fixed-point iterations:
\begin{equation}\label{eq:greens-function-iterations}
	\begin{aligned}
		 & G^{m+1} =\cL\cS G^m+G^0= \cL\cS G^m+\J\left(\delta_{\{\br'\}}(\br)\delta(\bOmega-\bOmega')\right).
	\end{aligned}
\end{equation}
or expressed as a sequence expansion:
\begin{equation}\label{eq:greens-function-series}
	G(\br,\bOmega,\br',\bOmega') = \sum_{n=0}^\infty(\cL\cS)^n\J
	\left(\delta_{\{\br'\}}(\br)\delta(\bOmega-\bOmega')\right),
\end{equation}
Then the structure of the solution operator can be expressed as
\begin{equation} \label{eq:rte-op1}
	\begin{aligned}
		\A[\mu_t,\mu_s,p] I_{-}(\br,\bOmega) & = \int_{\Gamma_{-}}
		G(\br,\bOmega,\br',\bOmega')I_{-}(\br',\bOmega')\diff{\br'}\diff{\bOmega'}               \\
		                                     & = \sum_{n=0}^\infty\int_{\Gamma_{-}}(\cL\cS)^n \J
		\left(\delta_{\{\br'\}}(\br)\delta(\bOmega-\bOmega')\right)I_{-}(\br',\bOmega')\diff{\br'}\diff{\bOmega'}.
	\end{aligned}
\end{equation}

\begin{remark}If $ \mu_t $, $ \mu_s $, and $ p $ are given and fixed, a naive idea is using a MLP to construct the Green's function $ G(\mathbf{r}, \Omega, \mathbf{r}', \Omega') $. However, $ \mu_t $ and $ \mu_s $ are spatially dependent functions that can sometimes be discontinuous, while $ p(\Omega, \Omega^\ast) $ is defined on $ \mathbb{S}^{d-1} \times \mathbb{S}^{d-1} $. The dependence of the Green's functions on these parameter functions can be very complex.
	Therefore, to train a solution operator that can be applied to all $ \mu_t $, $ \mu_s $, and $ p $ in the admissible set, one must consider the particular way in which the solution depends on the parameter functions within the structure of the Neural Operator.
\end{remark}

\section{Network architecture}\label{sec:architecture}
Our goal is to learn the following solution operator for RTE:
\begin{equation}\label{eq:solution-op}
	\ANN \approx \A: (I_{-}; \mut, \mus, p) \rightarrow I,
\end{equation}
on a fixed bounded domain $D$ and for simplicity we will drop $D$ dependence in the following discussion see Remark~\ref{remk:geometry-adaptation}.
More specifically, the solution operator $\ANN$ is represented by the integral of its distribution kernel, i.e., Green's function defined in~\eqref{eq:rte-op}
\begin{equation}\label{eq:greens-integral-nn}
	I(\br, \bOmega)\approx I^\text{NN}(\br,\bOmega) = \ANN[\mut,\mus,p] I_{-} = \int_{\Gamma_-} G^{\text{NN}}(\br, \bOmega, \br', \bOmega') I_{-}(\br', \bOmega') \diff{\br'} \diff{\bOmega'},
\end{equation}
where $G^{\text{NN}}$ is the Green's function parametrized by the deep neural networks.
\begin{figure}[H]
	\centering
	\includegraphics[width=1\textwidth]{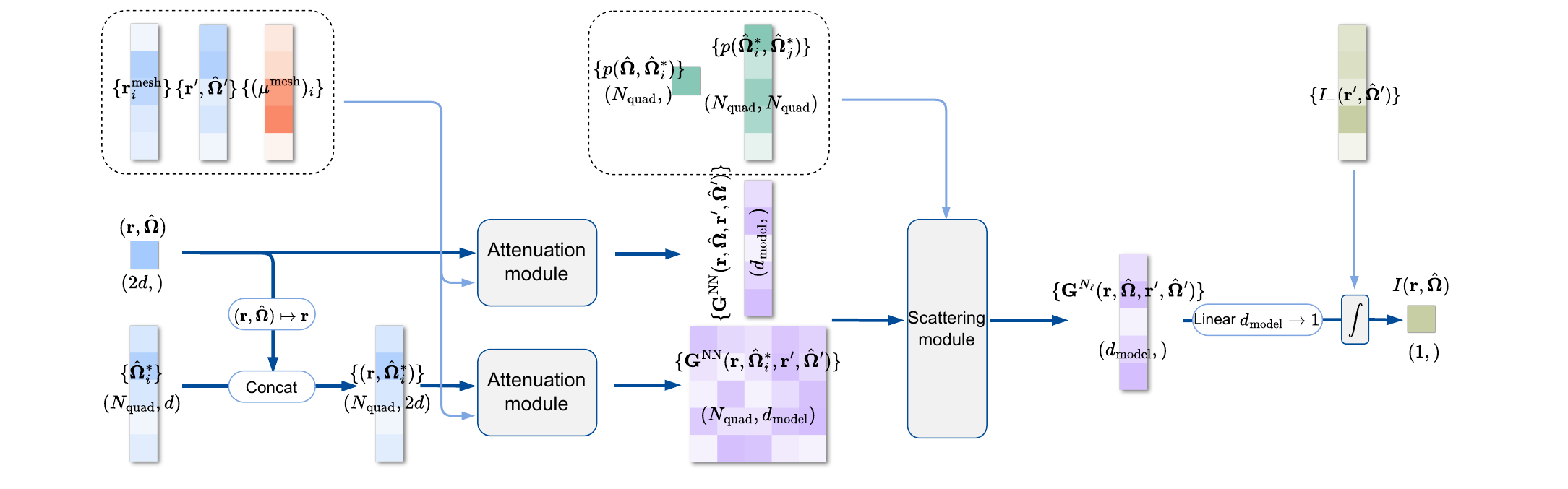}
	\caption{DeepRTE network architecture. The diagram illustrates the alterations in various inputs throughout the process. Attenuation Module processes phase coordinates and coefficients to approximate operators $\J$, $\cL$, followed by Scattering Module handling angular quadrature evaluations for operator iterations. Ultimately the Green's function $G^{\text{NN}}$ is multiplied with the boundary conditions and integrated over $\Gamma_-$  to compute radiation field.}\label{fig:arch-overview}
\end{figure}
The overall structure of DeepRTE is shown in Fig.~\ref{fig:arch-overview}, which is formed by modules inspired by the iterative process in~\eqref{eq:greens-function-iterations} to some finite steps.
The input features include:
\begin{itemize}\label{list:attenuation-inputs}
	\item $(\br,\bOmega)$: interior phase coordinate at which the solution needs to be evaluated;
	\item $(\br',\bOmega')$: the phase coordinates belong to $\Gamma_-$ that are used to compute the integration in~\eqref{eq:greens-integral-nn} by some given quadrature rule;
	\item $\{{(\mu_t^{\text{mesh}})}_i=\mu_t(\br^{\text{mesh}}_i)\}$, $\{{(\mu_s^{\text{mesh}})}_i=\mu_s(\br^{\text{mesh}})\}$, $\{\br^{\text{mesh}}_i\}$: spatial dependent total cross section and scattering cross section, and the corresponding coordinates of the spatial mesh points. Here the mesh is not necessary to be uniform, it can  be structured or unstructured.
\end{itemize}
These features are fed into the \textbf{Attenuation module}, which takes into account the dependence of the function $\mut$ and $\mus$.
The output of the Attenuation module is then combined with the following features:
\begin{itemize}
	\item  $\{p(\bOmega, \bOmega^*_i)\}$, $\{\omega_i\}$: phase/scattering function $p$ evaluated on any angle $\bOmega$, quadrature points $\bOmega^*_i$ and the corresponding quadrature weights $\omega_i$;
	\item  $\{p(\bOmega^*_i, \bOmega^*_j)\}$, $\{\omega_j\}$: phase/scattering function $p$ evaluated on quadrature points $(\bOmega^*_i, \bOmega^*_j)$, needed for the computation of residual connection in the network.
\end{itemize}
are fed into the \textbf{Scattering module} which simulates the iterative process \eqref{eq:fixed-point-iteration}.
Finally, the outputs of the scattering module is then projected to the scalar Green's function $G^{\text{NN}}$ by a linear layer, One can then multiply $G^{\text{NN}}$ with the boundary condition $I_{-}(\br',\bOmega')$, and integrate over the boundary phase coordinates $(\br',\bOmega')$ to obtain the solution $I$ on any phase point $(\br,\bOmega)$.

As a summary we put the whole process into the Alg.~\ref{alg:green-function}. The detailed description of the key ideas and components is provided in following subsections.
\begin{algorithm}[H]
	\caption{Green Function}\label{alg:green-function}
	\vspace{0.5em}
	\Def$\text{ GreenFunction}(\br,\bOmega,\br^{\prime},\bOmega^{\prime},\{\br_i^{\text{mesh}}\}, \{{(\mut^{\text{mesh}})}_i\},\{{(\mus^{\text{mesh}})}_i\},\{p(\bOmega,\bOmega^*_i)\},\{p(\bOmega^*_i,\bOmega^*_j)\}, \{\omega_j\})$:
	\begin{algorithmic}[1]
		\vspace{0.5em}
		\Comment{Output of attenuation module in latent space}
		\vspace{0.5em}
		\State$\bm{g}=\hyperref[alg:attenuation-module]{\text{AttenuationModule}}(\br,\bOmega,\br^{\prime},\bOmega^{\prime},\{\br_i^{\text{mesh}}\},\{{(\mut^{\text{mesh}})}_i\},\{{(\mus^{\text{mesh}})}_i\})$ \hfill $\bm{g}\in \mathbb{R}^{d_{\text{model}}}$
		\vspace{0.5em}
		\Comment{Ouput of the attenuation module at velocity quadrature points are also needed}
		\vspace{0.5em}
		\State$\{\bm{g}^*_j\}=\hyperref[alg:attenuation-module]{\text{AttenuationModule}}(\br,\{\bOmega_j^*\},\br^{\prime},\bOmega^{\prime},\{\br_i^{\mathrm{mesh}}\},\{{(\mut^{\mathrm{mesh}})}_i\}, \{{(\mus^{\mathrm{mesh}})}_i\})$ \hfill $\bm{g}^*_j\in \mathbb{R}^{d_{\text{model}}}$
		\vspace{0.5em}
		\Comment{Scattering module}
		\vspace{0.5em}
		\State$\bm{g}\gets\hyperref[alg:scattering-module]{\text{ScatterringModule}}(\bm{g}, \{\bm{g}^*_j\}, \{p(\bOmega,\bOmega^*_i)\},\{p(\bOmega^*_i,\bOmega^*_j)\}, \{\omega_j\})$ \hfill $\bm{g}\in \mathbb{R}^{d_{\text{model}}}$
		\vspace{0.5em}
		\Comment{Final output are projeted to the scalar Green's function}
		\vspace{0.5em}
		\State$g = \text{LinearNoBias}(\bm{g})$ \hfill $g\in\mathbb{R}$
		\vspace{0.5em}
		\Ret$g$
	\end{algorithmic}
\end{algorithm}

\begin{remark}\label{remk:geometry-adaptation}
	We remark that our current implementation of DeepRTE learns the solution operator for a specific geometry. Like other mainstream operator learning frameworks, when the geometry changes, the model theoretically requires retraining from scratch. However, several strategies can mitigate this limitation:

	One can train the model on a regular domain (e.g., the square domain used in our experiments) that encompasses all possible target geometries. The boundary conditions can then be appropriately extended to this regular domain. When applying to new geometries within this encompassing domain, the trained model can be used directly without retraining.

	Another approach involves using neural networks as encoder and decoder to map various geometries to a fixed latent domain and vice versa. The DeepRTE model is trained on this latent domain. For new geometries, transfer learning can fine-tune the encoder, decoder, and model components using limited data.
\end{remark}

\subsection{Attenuation module}\label{sec:attenuation-module}

The Attenuation module utilizes a neural network to provide a discrete representation of the Green's function of the operator $\J$ and $\cL$.
The Green's function for the operator $\J$ and $\cL$ can be unified as follows
\begin{equation}\label{eq:L-op}
	\int_0^{s_{-}(\br, \bOmega)} e^{-\tau_{\br,\bOmega}(0,s)}u(\br-s\bOmega, \bOmega)\diff{s},
\end{equation}
with $u(\br, \bOmega)=\mus(\br)I(\br, \bOmega)$ for $\cL$ and $u(\br, \bOmega)=I_-(\br-s_-\bOmega, \bOmega)\delta_{\{s\}}(s_-)$ for $\J$. This implies that it is important to have a representation of the Green's function along the characteristic line $\br-s \bOmega$. 
In our architectural framework, 
along the characteristic line, the discrete representation of dimension $d_{\text{model}}$ is expressed as:
\begin{equation}
	\bG^{\text{NN}} =
	\begin{pmatrix}
		G(\br-s_1 \bOmega, \bOmega; \br', \bOmega') \\
		\vdots                                      \\
		G(\br-s_{d_{\text{model}}} \bOmega, \bOmega; \br', \bOmega')
	\end{pmatrix}\in \mathbb{R}^{d_{\text{model}}}.
\end{equation}

The parameter $d_{\text{model}}$ represents the dimension of the discrete representation, which is the number of sampling points along the characteristic line. Each component $G(\br-s_i \bOmega, \bOmega; \br', \bOmega')$ with $s_i\in (0,s_-(\br,\bOmega))$, corresponds to the Green's function's evaluation at a point on the characteristic line. It is important to note that the particular values of $s_i$ are not explicitly used in the NN construction. This representation effectively captures the essential attenuation characteristics while maintaining computational tractability through dimension reduction.


The neural network output $\bG^{\text{NN}}$ depends on the phase space coordinates $(\br,\bOmega)$ and $(\br',\bOmega')$, along with $\mut$.
For implementation, we construct a neural network architecture with:
\begin{equation}\label{G_equation}
	\bG^{\text{NN}}(\br,\bOmega,\br^{\prime},\bOmega^{\prime}; \mut) =\text{MLP}\left(\br,\bOmega,\br^{\prime},\bOmega^{\prime},\tau_-^{\text{NN}}\right),
\end{equation}
where the optical depth $\tau_-^{\text{NN}}$ is computed by a specialized subnetwork called OpticalDepthNet. OpticalDepthNet learns how the total cross section $\mut$ determines the optical depth in complex ways, which takes into account how the function $\mu_t$ effects the Green's function.
The final Multi-layer perceptrons (MLP) combines all this information to predict the transport behavior.

The structure of the attenuation module is shown in Fig.~\ref{fig:attenuation-module} and Alg.~\ref{alg:attenuation-module}. The detailed description of $\tau_-^{\text{NN}}$, i.e., the OpticalDepthNet is provided in the following subsection.
\begin{figure}
	\centering
	\includegraphics[width=1\textwidth]{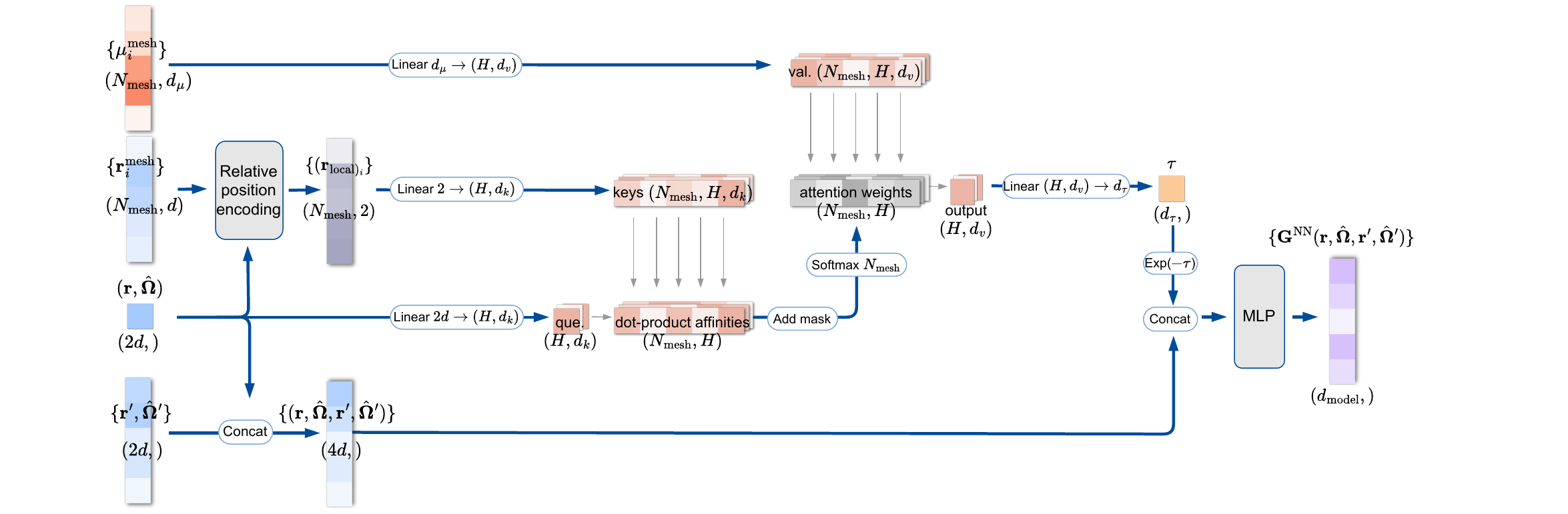}
	\caption{
		Attenuation module architecture. The network takes as input the spatial coordinates ($\br,\br'$), angular variables ($\bOmega,\bOmega'$), and cross sections ($\mut,\mus$). The OpticalDepthNet submodule first computes the optical depth $\tau_-^{\text{NN}}$ along the characteristic line. These features are then processed by an MLP to produce the truncated Green's function representation $\bG^{\text{NN}}\in\mathbb{R}^{d_{\text{model}}}$, which samples the radiation field at discrete points $\br-s_i\bOmega$ (circles along characteristic line). This architecture efficiently captures both local scattering effects and non-local attenuation while maintaining computational tractability through dimension reduction.
	}\label{fig:attenuation-module}
\end{figure}
\begin{algorithm}[H]
	\caption{Attenuation Module}\label{alg:attenuation-module}
	\vspace{0.5em}
	$
		\begin{aligned}
			\Def \text{ AttenuationModule}(
			 & \br,\bOmega,\br^{\prime},\bOmega^{\prime},\{\br_i^{\text{mesh}}\},\{{(\mut^{\text{mesh}})}_i\},\{{(\mus^{\text{mesh}})}_i\}, \\
			 & N_{\text{mlp}}=4,d_{\text{mlp}}=128,d_{\text{model}}=16):
		\end{aligned}$
	\begin{algorithmic}[1]
		\vspace{0.5em}
		\Comment{Optical depth network}
		\vspace{0.5em}
		\State$\tau_{-}^{\text{NN}} = \hyperref[alg:optical-depth-net]{\text{OpticalDepthNet}}(\br,\bOmega,\{\br_i^{\text{mesh}}\},\{{(\mut^{\text{mesh}})}_i\},\{{(\mus^{\text{mesh}})}_i\}, d_k=12, H=2)$
		\vspace{0.5em}
		\State$\bm{z} = \text{concat}(\br, \bOmega, \br^{\prime}, \bOmega^{\prime}, e^{-\tau_{-}^{\text{NN}}})$
		\vspace{0.5em}
		\Comment{MLP}
		\vspace{0.5em}
		\ForAll{$l\in[1,\ldots,N_{\text{mlp}-1}]$}
		\vspace{0.5em}
		\State$\bm{z}\gets\text{tanh}(\text{Linear}(\bm{z}))$ \hfill $\bm{z} \in \mathbb{R}^{d_{\text{mlp}}}$
		\vspace{0.5em}
		\EndFor%
		\vspace{0.5em}
		\Comment{Output projection}
		\vspace{0.5em}
		\State$\bm{g} = \text{Linear}(\bm{z})$ \hfill $\bm{g} \in \mathbb{R}^{d_{\text{model}}}$
		\vspace{0.5em}
		\Ret$\bm{g}$
	\end{algorithmic}
\end{algorithm}

\subsubsection{Optical depth network}
The Optical Depth Network is designed to capture the influence of $\mut$ along the characteristic line. Traditional methods for performing integration along the characteristic line often encounter significant challenges. For example, by noting \eqref{eq:optical-depth}, if we want to determine the optical depth which is defined by
\begin{equation}
	\tau_{-,t}(\br,\bOmega):=\tau(0, s_-(\br, \bOmega))=\int^{s_-(\br, \bOmega)}_{0} \mut(\br-s\bOmega) \diff{s},
\end{equation}
the numerical evaluation of the integration
along the characteristic line for each $(\br,\bOmega)$ is computationally expensive and sometimes even impossible for very complex spatial geometry.

By incorporating the \emph{multi-head attention along the characteristic line}, our proposed OpticalDepthNet, once trained, can approximate $\tau_{-,t}(\br,\bOmega)$ for a given geometry at any phase point $(\br,\bOmega)$. It uses multi-head attention as follows:
\begin{equation}\label{eq:optical-depth-net}
	\tau_{-,t}^{\text{NN}} = \text{OpticalDepthNet}\left(\br,\bOmega; \{\br^{\text{mesh}}_i\}, \{{(\mu_t^{\text{mesh}})}_i\}\right) = \text{MultiHead}(Q, K, V),
\end{equation}
where $\{\br^{\text{mesh}}_i\}$ are the spatial mesh where the total cross section $\mu_t(\br)$ takes values $\{{(\mu_t^{\text{mesh}})}_i\}$ on. It is important to note that $\{\br^{\text{mesh}}_i\}$ only considers the spatial dependence of $\mu_t$, but it does not necessarily coincide with the locations where the numerical solution $I$ is evaluated. This means that $\{\br^{\text{mesh}}_i\}$ can be very coarse, especially if  $\mu_t$ is, for example, piece-wise constant.
We will first describe the structure of the multi-head attention mechanism, define its inputs $Q$, $K$ and $V$ and then explain the motivation behind using multi-head attention in the next subsection.

In the context of deep learning, the inputs of multi-head attention $Q$, $K$ and $V$ are often called \emph{querys, keys, values} respectively. They are defined as follows:
\begin{equation}
	Q = (\br, \bOmega)\in \mathbb{R}^{1\times 2d}, \quad
	K =
	\begin{pmatrix}
		\vdots                               \\
		{(\br^\text{mesh}_{\text{local}})}_i \\
		\vdots
	\end{pmatrix}\in\mathbb{R}^{N_\text{mesh} \times 2},
	\quad
	V =
	\begin{pmatrix}
		\vdots                    \\
		{(\mu_t^{\text{mesh}})}_i \\
		\vdots
	\end{pmatrix}\in\mathbb{R}^{N_\text{mesh}\times 1},
\end{equation}
where $K$'s $i$-th entry ${(\br_{\text{local}})}_i=\left({(r_\text{local})}_i, {(\theta_\text{local})}_i\right)$ is the local coordinates on the characteristic line of the phase point $(\br,\bOmega)$, which is a straight line in direction $\bOmega$ and passing through the spatial point $\br$.
By projecting the $\br^\text{mesh}_i$ onto the characteristic line, one has
\begin{equation}
	\begin{aligned}
		{(\br^\text{mesh}_{\text{local}})}_i
		 & =\text{RelativePositionEncoding}(\br,\bOmega,\br^\text{mesh}_i)                        \\
		 & = \left({(r^\text{mesh}_\text{local})}_i, {(\theta^\text{mesh}_\text{local})}_i\right)
		= \left((\br-\br^{\text{mesh}}_i)\cdot\bOmega, \frac{(\br-\br^{\text{mesh}}_i)}{
			\| \br-\br^{\text{mesh}}_i\|}\cdot \bOmega\right).
	\end{aligned}
\end{equation}
Then the multi-head attention is quite standard in deep learning,
\begin{equation}
	\text{MultiHead}(Q, K, V) = \text{Concat}(\text{head}_1,\ldots,\text{head}_H)W^\tau,
\end{equation}
where for $h=1,\ldots,H$, the $h$-th head is computed as
\begin{equation}
	\begin{aligned}
		\text{head}_h = \text{Attention}(QW_h^Q, KW_h^K, VW_h^V)
		= \text{softmax}\left(\frac{(QW_h^Q){(KW_h^K)}^T}{\sqrt{d_k}} + M\right)VW_h^V,
	\end{aligned}
\end{equation}
with
\begin{equation}
	W_h^Q\in \R^{2d\times d_k}, \quad W_h^K\in \R^{2\times d_k}, \quad W_h^V\in \R^{1\times d_v}, \quad W^\tau\in\R^{Hd_v \times d_{\tau}},
\end{equation}
are the parameters to learn. $M\in\mathbb{R}^{N_\text{mesh}}$ is the mask matrix (not learnable) to make sure the attention only happens on those mesh points that are close to the characteristic line.

\paragraph{Multi-head attention on characteristic line}
The main idea of using multi-head attention is to approximate the integral of optical depth $\tau_-(\br,\bOmega)$ numerically. For any phase point $(\br,\bOmega)$, a simple numerical integration along the characteristic line yields,
\begin{equation}
	\tau_{-,t}(\br,\bOmega) \approx  \sum_{j}^{N\left(s_{-}(\br,\bOmega)\right)} w(\br, \bOmega;s_j)\mut(\br-s_j \bOmega),
\end{equation}
where $N(s_{-}(\br,\bOmega))$ is the number of quadrature points, and $w(\br, \bOmega; s_j)$ is the weight of the $j$-th quadrature point.
Notice here the integral is always in $s$ which means it is a one-dimensional integral along the characteristic line.

Ideally, one can do the ray tracing for every phase point $(\br,\Omega)$ and discretize the integration using the quadrature points along the characteristic line.
However, for each $(\br,\bOmega)$, different quadrature points $s_i$ are used, thus different weights $w(\br,\Omega; s_j)$ and $N\left(s_{-}(\br,\bOmega)\right)$ are employed. It is computationally expensive to determine the position of all quadrature points. On the other hand, in practice, usually we only have the values of $\mu_t$ on the spatial mesh $\{\br_i^{\text{mesh}}\}$, i.e., $\{{(\mu_t^\text{mesh})}_i\}$, so in order to get the value of $\mu_t(\br - s_i\bOmega)$, one needs to do the interpolation/projection. After interpolation/projection, the value of $\mu_t(\br - s_i\bOmega)$ can be expressed by a linear combination of $\{{(\mu_t^\text{mesh})}_i\}$ such that
\begin{equation}\label{eq:crcha}
	\mut(\br-s_j \bOmega)\approx  \sum_{i}^{N_\text{mesh}} \bm{1}_{\mathcal{C}_{\br,\bOmega}}(\rmesh_i) c(\br-s_j\bOmega, \br^{\text{mesh}}_i){(\mu_t^\text{mesh})}_i,
\end{equation}
where $c(\br-s_j\bOmega, \br^{\text{mesh}}_i)$ is the contribution of the $i$-th mesh point $\br_i^{\text{mesh}}$. The value of $c(\br-s_j\bOmega, \br^{\text{mesh}}_i)$ is determined by the relative position between $\br^{\text{mesh}}_i$ and the $j$-th quadrature point on the characteristic line.
As in Fig.~\ref{fig:mask}, the set $\mathcal{C}_{\br,\bOmega}$ is composed by the mesh points $\rmesh$ that are not far away from the characteristic line, which is determined by the some threshold $\delta$ such that
\begin{figure}[htbp]
	\centering
	\includegraphics[width=.8\textwidth]{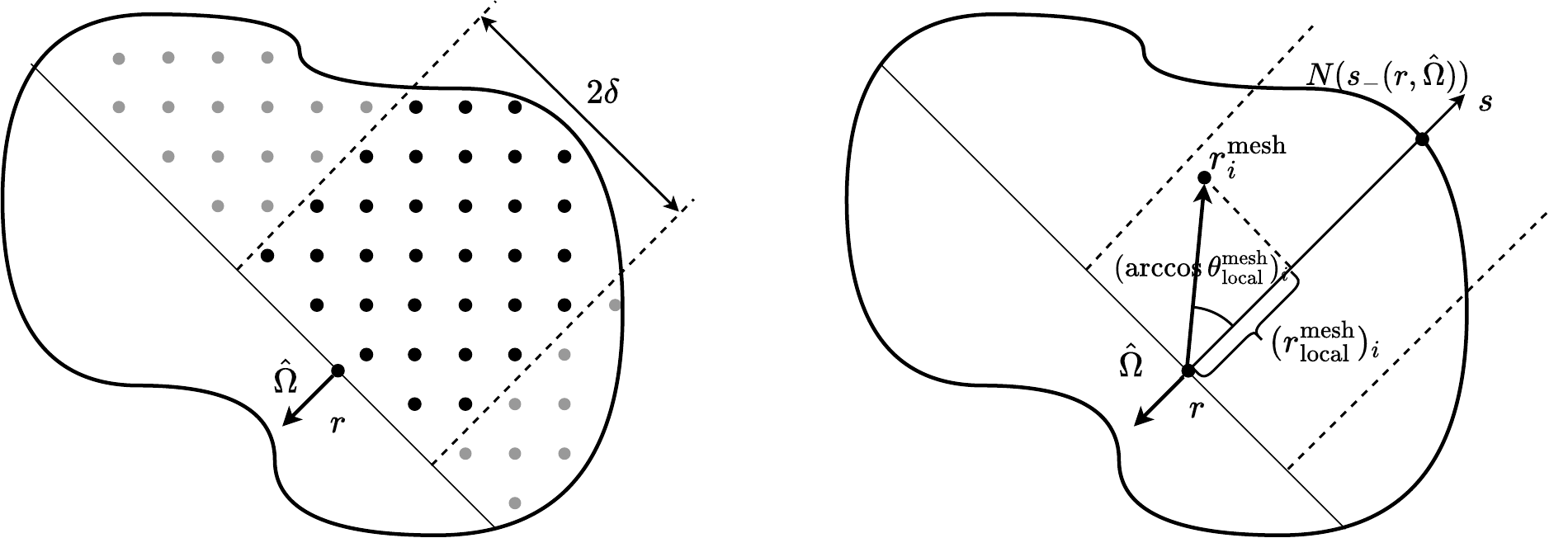}
	\caption{Mask and relative position embedding. Solid dots represent active grid nodes within the $\delta$-neighborhood of the characteristic line. These nodes provide spatial support for optical depth interpolation, thereby avoiding full-domain computation.}\label{fig:mask}
\end{figure}
\begin{equation}
	\mathcal{C}_{\br,\bOmega} = \left \{\rmesh_i \mid \text{distance}\left(\rmesh_i, \{\br - s\bOmega, s\in(0,s_-(\br,\bOmega))\}\right) < \delta\right \}.
\end{equation}
This indicates that only those mesh points near the characteristic line contribute.

More precisely, the relative position of $\br^{\text{mesh}}_i$ and the $j$th quadrature point $\br-s_j \bOmega$ can be determined by the projection of $\br^{\text{mesh}}_i$ on the characteristic line and the angle between the vector ${(\mu_t^\text{mesh})}_i-\br$ and the characteristic line (see Fig.~\ref{fig:mask}).
Noticing that $\|\bOmega \| = 1$ and the characteristic line is at the opposite direction of $\bOmega$, we have
\begin{equation}
	\begin{cases}
		{(r^\text{mesh}_\text{local})}_i      & = -(\br^\text{mesh}_i - \br)\cdot \bOmega,                          \\
		{(\theta^\text{mesh}_\text{local})}_i & = {(r^\text{mesh}_\text{local})}_i / \| \br^\text{mesh}_i - \br \|,
	\end{cases}
\end{equation}
see Alg.~\ref{alg:relative-position-encoding}, then
\begin{equation}
	c(\br-s_j\bOmega, \br^{\text{mesh}}_i) = c\left(s_j; {(r^\text{mesh}_\text{local})}_i, {(\theta^\text{mesh}_\text{local})}_i\right).
\end{equation}
\begin{algorithm}[H]
	\caption{Relative Position Encoding}\label{alg:relative-position-encoding}
	\vspace{0.5em}
	$\Def \text{ RelativePositionEncoding}(\br, \bOmega, \tilde{\br})$:
	\begin{algorithmic}[1]
		\vspace{0.5em}
		\Comment{Relative position}
		\vspace{0.5em}
		\State$\br_{\text{rel}} = \br - \tilde{\br}$
		\vspace{0.5em}
		\Comment{Local coordinates}
		\vspace{0.5em}
		\State$r_{\text{local}} =  \br_{\text{rel}} \cdot \bOmega$
		\vspace{0.5em}
		\State$\theta_{\text{local}} = r_{\text{local}}/\|\br_{\text{local}}\|$
		\vspace{0.5em}
		\Comment{Mask along characteristic}
		\vspace{0.5em}
		\State$m = -10^{10} \text{ if } s_\text{local} < 0 \text{ else } 0$
		\vspace{0.5em}
		\Ret$(r_\text{local},\theta_\text{local}), m$
	\end{algorithmic}
\end{algorithm}
Using above observation, one can approximate the optical depth as
\begin{equation}
	\begin{aligned}
		\tau_{-,t}(\br,\bOmega) & \approx \sum_j^{N\left(s_{-}(\br,\bOmega)\right)} w(\br, \bOmega;s_j)\sum_{i}^{N_\text{mesh}} \bm{1}_\mathcal{C}(\rmesh_i)c\left(s_j; {(r^\text{mesh}_\text{local})}_i, {(\theta^\text{mesh}_\text{local})}_i\right){(\mu_t^\text{mesh})}_i         \\
		                        & =\sum_{i}^{N_\text{mesh}} \bm{1}_\mathcal{C}(\rmesh_i)\left(\sum_{j}^{N\left(s_{-}(\br,\bOmega)\right)} w(\br, \bOmega;s_j)c\left(s_j; {(r^\text{mesh}_\text{local})}_i, {(\theta^\text{mesh}_\text{local})}_i\right)\right){(\mu_t^\text{mesh})}_i \\
		                        & =\sum_{i}^{N_\text{mesh}} \underbrace{\bm{1}_\mathcal{C}(\rmesh_i)W(\br,\bOmega; {(r^\text{mesh}_\text{local})}_i, {(\theta^\text{mesh}_\text{local})}_i)}_{\text{attention weights}}\underbrace{{(\mu_t^\text{mesh})}_i}_{\text{values}},
	\end{aligned}
\end{equation}
Here, we first evaluate the inner summation on $j$ to determine the coefficient $W\left(\br,\bOmega; {(r_\text{local})}_i, {(\theta_\text{local})}_i\right)$, which indicates that the attention weights count for the contribution of each grid points to the integration in the optical depth.

At the continuous level, the weight do not depend on the choices of $N\left(s_{-}(\br,\bOmega)\right)$ and $s_j$. Similar property holds at the discrete level. This is important since the quadrature points are not given explicitly in the NN representation. The coefficient $W$ works as the correlation between the phase point $(\br,\bOmega)$ and the mesh point $\br_i^{\text{mesh}}$ with its local coordinate representation.
In another point of view, the coefficient $W$ can be learned by the multi-head attention mechanism, where the query is the phase point $(\br,\bOmega)$, the key is the local coordinate representation of the mesh point $\br_i^{\text{mesh}}$, and the value is the total absorption coefficient $\mu_t(\br_i^{\text{mesh}})$, given the following approximation:
\begin{equation}
	W\left(\br,\bOmega; {\left(r^\text{mesh}_\text{local}\right)}_i, {\left(\theta^\text{mesh}_\text{local}\right)}_i\right) \approx \sum_{m}^{d_k} \underbrace{q_m(\br,\bOmega)}_{\text{query}:\;QW^Q_h}\underbrace{k_m({(r^\text{mesh}_\text{local})}_i, {(\theta^\text{mesh}_\text{local})}_i)}_{\text{keys}:\;KW^K_h},
\end{equation}
where $d_k$ is some hyperparameter to determine the dimension of the key space, and $w_m$ is the $m$-th head of the multi-head attention mechanism.
Then the indicator function $\bm{1}_\mathcal{C}(\rmesh_i)$ is implemented by the mask $M$ together with the commonly used softmax function:
\begin{equation}
	\text{softmax}_i(\bm{x}) = \frac{e^{x_i+m_i}}{\sum_j e^{x_j+m_j}},
\end{equation}
where $m_i$ is $-10^{10}$ if the mesh point $\rmesh_i$ is not in the set $\mathcal{C}$ and $0$ otherwise, this makes the attention weights get zero when the mesh point is far away from the characteristic line.

Combining all the ingredients above together we obtain the following Alg.~\ref{alg:optical-depth-net}:
\begin{algorithm}[H]
	\caption{Optical Depth Network}\label{alg:optical-depth-net}
	\vspace{0.5em}
	$\Def \text{ OpticalDepthNet}(\br, \bOmega, \{\br_i^{\text{mesh}}\}, \{{(\mut^{\text{mesh}})}_i\}, \{{(\mus^{\text{mesh}})}_i\}, d_k=12, H=2)$:
	\begin{algorithmic}[1]
		\vspace{0.5em}
		\Comment{Relative position}
		\vspace{0.5em}
		\State${(\br^\text{mesh}_{\text{local}})}_i, m_i = \hyperref[alg:relative-position-encoding]{\text{RelativePositionEncoding}}(\br$, $\bOmega,
			\br_i^{\text{mesh}})$
		\vspace{0.5em}
		\Comment{Input projections to heads}
		\vspace{0.5em}
		\State$\bm{q}^h = \text{LinearNoBias}((\br,\bOmega))$ \hfill $\bm{q}^h\in \mathbb{R}^{d_k},\; h\in \{1,\dots,H\}$
		\vspace{0.5em}
		\State$\bm{k}_i^h = \text{LinearNoBias}\left({(\br^\text{mesh}_{\text{local}})}_i\right)$ \hfill $\bm{k}_i^j\in \mathbb{R}^{d_k}$
		\vspace{0.5em}
		\State$\bm{v}_i^h = \text{LinearNoBias}\left(\text{concat}\left({(\mus^{\text{mesh}})}_i,{(\mus^{\text{mesh}})}_i\right)\right)$ \hfill $\bm{v}_i^j\in \mathbb{R}^{d_v}$
		\vspace{0.5em}
		\Comment{Attention}
		\vspace{0.5em}
		\State$a_{i}^h = \text{softmax}_i\left(\frac{1}{\sqrt{d_k}}{(\bm{q}^h)}^{\transpose}\bm{k}_i^h+m_i\right)$
		\vspace{0.5em}
		\State$\bm{\tau}^h = \sum_i a_{i}^h \bm{v}_i^h$ \hfill $\bm{\tau}^h\in\R^{d_v}$
		\vspace{0.5em}
		\Comment{Output projection}
		\vspace{0.5em}
		\State$\tau_{-} = \text{LinearNoBias}\left(\text{concat}_h(\bm{\tau}^h)\right)$ \hfill $\tau_{-}\in \mathbb{R}^{d_\tau}$
		\vspace{0.5em}
		\Ret$\tau_{-}$
	\end{algorithmic}
\end{algorithm}

\begin{remark}
	In the implementation, we use the whole mesh points $\{\br_i^{\text{mesh}}\}$ to compute the optical depth $\tau_-(\br,\bOmega)$, which is a more general and flexible approach. However, in practice, one can also use a subset of the mesh points to compute the optical depth $\tau_-(\br,\bOmega)$, which can be more efficient and faster.
\end{remark}

\subsection{Scattering module}\label{sec:scattering-module}
Based on the expansion of operator $G$ in~\eqref{eq:greens-function-series}, we propose the following structure for the Scattering module, as shown in Fig.~\ref{fig:scattering-module},
\begin{equation}
	\begin{aligned}
		 & \bG^{0} = \bG^{\text{NN}}(\br,\bOmega,\br^{\prime},\bOmega^{\prime}),                          \\
		 & \bG^{\ell}  = \text{ScatteringBlock}_s(\bG^{\ell-1}) + \bG^{0}, \quad \ell = 1,\dots,N_{\ell},
	\end{aligned}
\end{equation}
where the $\text{ScatteringBlock}_\ell$, as shown in Fig.~\ref{fig:scattering-block}, is defined as
\begin{equation}\label{eq:scattering-block}
	\text{ScatteringBlock}_\ell(\bm{G}) = \text{LayerNorm}\Big(\sigma\Big(\bm{W}^{\ell} \bm{S}^{\top} \bm{G} + \bm{b}^{\ell}\Big)\Big).
\end{equation}

According to Eq.~\eqref{eq:fixed-point-iteration},
the scattering block can be viewed as an approximation the operator $\cL\cS$. $\bm{W}^{\ell} \in \mathbb{R}^{d_{\text{model}}\times d_{\text{model}}}$ and $\bm{b}^{\ell} \in \mathbb{R}^{d_{\text{model}}}$ are learnable weights and bias. Multiplying $G$ by $
	\bm{S} = \big[\, w_i\, p(\bOmega, \bOmega_i^*) \,\big]_{j=1}^{d_{\text{quad}}}
$ gives a discrete approximation of the scattering operator in~\eqref{eq:scattering-op}. $\sigma(x)$ denotes the activation function which is $\tanh(x)$ in our implementation.

The normalization layer follows~\cite{ba2016layernormalization,xu2019understanding} with trainable affine parameters $\bm{\gamma},\bm{\beta} \in \mathbb{R}^{d_{\text{model}}}$
\begin{equation}
	\text{LayerNorm}(\bm{x}) = \frac{\bm{x}-\alpha\bm{1}}{\sigma}\cdot\bm{\gamma} + \bm{\beta}, \quad\alpha=\sum_{i=1}^{d_{\text{model}}}\frac{x_i}{d_\text{model}}, \quad\sigma=\sqrt{\sum_{i=1}^{d_{\text{model}}}\frac{{(x_i-\alpha)}^2}{d_\text{model}}}.
\end{equation}
Layer normalization improves the stability of the recursive computation in~\eqref{eq:greens-function-iterations}. If the weight matrix $\bm{W}^\ell$ is ill-conditioned or nearly singular, repeated multiplications can accumulate errors and reduce convergence quality. Using layer normalization helps prevent these issues and makes the training process more robust and efficient. There are two common approaches to applying LayerNorm:
\begin{itemize}
	\item \textbf{Post-Norm}: Normalization is applied after the scattering operator. This straightforward method performs well in shallow networks.
	\item \textbf{Pre-Norm}: Normalization is applied before the scattering operator. This approach can alleviate gradient issues in deeper networks.
\end{itemize}
After evaluating both implementations, we adopt Post-Layer Normalization in the ScatteringBlock for its enhanced stability and simplicity.

\begin{figure}[H]
	\centering
	\includegraphics[width=0.9\textwidth]{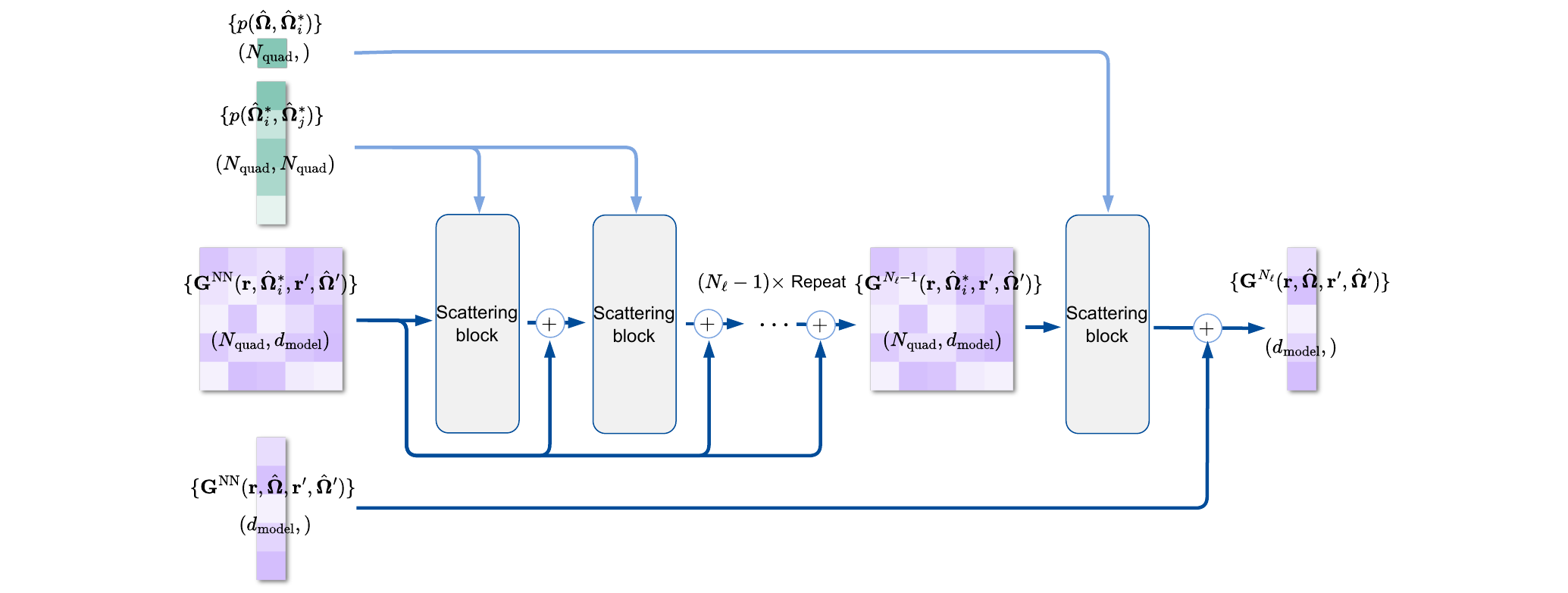}
	\caption{Scattering module. Stacked residual blocks with physics-informed $\cS$-approximation layers, employing tanh-activated operator transforms and adaptive layer normalization for stabilized learning.}\label{fig:scattering-module}
\end{figure}
\begin{figure}[H]
	\centering
	\includegraphics[width=1\textwidth]{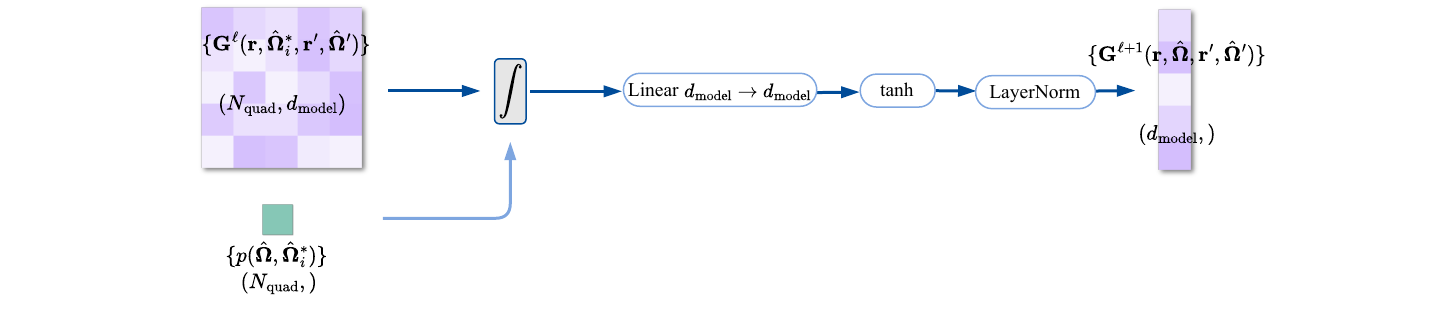}
	\caption{Scattering block.
		Representing simulating the result of the operator $\cS$ acting once.Including weighted summation of integral, linear projection and activation.}\label{fig:scattering-block}
\end{figure}

\noindent The detailed algorithm for Scattering module is provided in Alg.~\ref{alg:scattering-module}.
\begin{algorithm}[H]
	\caption{Scattering Module}\label{alg:scattering-module}
	\vspace{0.5em}
	$\Def \text{ ScatteringModule}(\bm{g}, \{\bm{g}^*_j\}, \{p(\bOmega,\bOmega^*_i)\},\{p(\bOmega^*_i,\bOmega^*_j)\}, \{w_j\}, N_s=2)$:
	\begin{algorithmic}[1]
		\vspace{0.5em}
		\Comment{Keep the initial value to be added at the each block}
		\vspace{0.5em}
		\State${(\bm{g}^*_j)}^{\text{init}} = \bm{g}^*_j$ \hfill ${(\bm{g}^*_j)}^{\text{init}}\in \mathbb{R}^{d_{\text{model}}}$
		\vspace{0.5em}
		\Comment{Scattering block with skip connections}
		\vspace{0.5em}
		\ForAll{$\ell\in[0,\ldots,N_\ell-2]$}
		\vspace{0.5em}
		\Statex{\hspace{1.5em}\color{Brown}\small\textit{\# \;Initial value is added at each block to simulate expansion}}
		\vspace{0.5em}
		\State$\bm{g}^*_i\gets{(\bm{g}^*_i)}^{\text{init}} + \hyperref[alg:scattering-block]{\text{ScatteringBlock}_\ell}(\{\bm{g}^*_j\},\{p(\bOmega^*_i,\bOmega^*_j), \{w_j\} \}) $ \hfill $\bm{g}^*_i\in \mathbb{R}^{d_{\text{model}}}$
		\vspace{0.5em}
		\EndFor%
		\vspace{0.5em}
		\State$\bm{g}\mathrel{+}=\hyperref[alg:scattering-block]{\text{ScatteringBlock}_{N_\ell-1}}(\{\bm{g}^*_i\},\{p(\bOmega,\bOmega^*_i)\}, \{\omega_i\})$ \hfill $\bm{g}\in \mathbb{R}^{d_{\text{model}}}$
		\vspace{0.5em}
		\Ret$\bm{g}$
	\end{algorithmic}
\end{algorithm}

\subsubsection{Scattering block}
Building on the series expansion in Eq.~\eqref{eq:soln-op-series}, we define a truncated state vector representation that captures the radiation field at discrete points along characteristic lines:

\begin{equation}
	\bG^{\ell} =
	\begin{pmatrix}
		G^\ell(\br-s_1 \bOmega, \bOmega; \br', \bOmega') \\
		\vdots                                           \\
		G^\ell(\br-s_{d_{\text{model}}} \bOmega, \bOmega; \br', \bOmega')
	\end{pmatrix}\in \mathbb{R}^{d_{\text{model}}},
\end{equation}
where $\bG^0$ is initialized by the Attenuation Module. This formulation yields an efficient recursive relation:
\begin{equation}
	G^{\ell+1} = \cL\cS G^\ell + G^0.
\end{equation}
$\cL\cS$ is composed of two physical processes: (1) angular scattering $\cS$; (2) lifting $\cL$ along the characteristic line. One can represent $\cL\cS$ into the following integral form:
\begin{equation}\label{LS}
	\cL\cS G^{\ell}(\br, \bOmega) = \int_0^{s_{-}(\br, \bOmega)} e^{-\tau(0,s')}
	\mus(\br-s' \bOmega) \underbrace{\frac{1}{S_d} \int_{\sS^{d-1}}
		p(\bOmega, \bOmega^*) G^{\ell}(\br-s' \bOmega, \bOmega^*)
		\diff{\bOmega^*}}_{\cS G^{\ell}(\br-s' \bOmega, \bOmega)} \diff{s'}.
\end{equation}

\paragraph{Numerical Implementation}
For computational tractability, we employ numerical quadrature:

\begin{equation}
	\cS G^\ell(\br-s' \bOmega, \bOmega) \approx \sum_{i=1}^{d_{\text{quad}}} w_i p(\bOmega, \bOmega_i^*) G^{\ell}(\br-s' \bOmega, \bOmega_i^*),
\end{equation}
where $d_{\text{quad}}$ quadrature points with weights $\{w_i\}$ approximate the angular integral.

In order to approximate the integration on the right hand side of \eqref{LS},  we use the discrete scattering approximation:
\begin{equation}
	\cL\cS G^{\ell}(\br, \bOmega) \approx \sum_{j=1}^{d_{\text{model}}} \tilde{w}_j^{\br, \bOmega} \mus e^{-\tau(0,s'_j)} \cS G^\ell(\br-s'_j \bOmega, \bOmega).
\end{equation}

To avoid recomputing characteristic line integrals, we parameterize the transport operator as a learnable weight matrix:
\begin{equation}\label{eq:scattering-mus}
	\begin{aligned}
		\cL\cS G^{\ell}(\br-s'_i\bOmega, \bOmega) & \approx  \sum_{j=1}^{d_{\text{model}}} \underbrace{\tilde{w}_j^{\br-s'_i \bOmega, \bOmega} \mus e^{-\tau(0,s'_j)}}_{w^\cL_{ij}} \cS{G}^{\ell}(\br-s'_j \bOmega, \bOmega) \\
		                                          & = \bm{w}^\cL_{i} {\cS \bG}^{\ell},
	\end{aligned}
\end{equation}
where $\bm{w}^\cL_{i} = (w^\cL_{i1}, w^\cL_{i2}, \cdots, w^\cL_{id_{\text{model}}}) \in \mathbb{R}^{d_{\text{model}}}$.
The complete system is represented through:

\begin{equation}
	\bG^{\ell} =
	\begin{pmatrix}
		G^{\ell}(\br-s_1 \bOmega, \bOmega) \\
		\vdots                             \\
		G^{\ell}(\br-s_{d_{\text{model}}} \bOmega, \bOmega)
	\end{pmatrix}, \quad
	\mathbf{W}^\ell =
	\begin{pmatrix}
		\bm{w}^\cL_1 \\
		\vdots       \\
		\bm{w}^\cL_{d_{\text{model}}}
	\end{pmatrix}.
\end{equation}

\paragraph{Neural Network Implementation}
In practice, we treat $\bm{W}^\ell$ as learnable parameters, allowing the network to automatically discover optimal coupling between spatial points during training. The scattering block simplifies to:

\begin{equation}\label{scattering_layer}
	\text{ScatteringBlock}_\ell(\bm{G}) = \text{LayerNorm}\Big(\sigma\Big(\bm{W}^{\ell} \bm{S}^{\top} \bm{G} + \bm{b}^{\ell}\Big)\Big),
\end{equation}
where $\bm{W}^{\ell} \in \mathbb{R}^{d_{\text{model}}\times d_{\text{model}}}$ encodes both the transport physics and numerical quadrature weights in a data-driven manner.

The detailed description of the Scattering block is provided in Alg.~\ref{alg:scattering-block}.
\begin{algorithm}
	\caption{Scattering Block}\label{alg:scattering-block}
	\vspace{0.5em}
	$\Def \text{ ScatteringBlock}(\{\bm{g}_i\}, \{p_i\}, \{\omega_i\}, d_{\text{model}}=16)$:
	\begin{algorithmic}[1]
		\vspace{0.5em}
		\Comment{Numerical integration as a weighted summation}
		\vspace{0.5em}
		\State$\bm{g}  = \sum_i \omega_i p_i \bm{g}_i$
		\vspace{0.5em}
		\Comment{Linear layer and activation}
		\vspace{0.5em}
		\State$\bm{g}\gets \text{tanh}\left(\text{Linear}( \bm{g})\right)$  \hfill $\bm{g}\in \mathbb{R}^{d_{\text{model}}}$
		\vspace{0.5em}
		\Comment{Layer normalization}
		\vspace{0.5em}
		\State$\bm{g} \gets \text{LayerNorm}(\bm{g})$
		\vspace{0.5em}
		\Ret$\bm{g}$
	\end{algorithmic}
\end{algorithm}

\begin{remark}\label{rmk:optical-depth-encoding}
	To encode $\mus e^{-\tau(0,s'_j)}$ into the scattering module, we construct the weights $\bm{W}^\ell$ in Eq.~\eqref{eq:scattering-block} as:
	\begin{equation}
		\mathbf{W}^\ell =
		\begin{pmatrix}
			\bm{w}^\cL_1 \\
			\vdots       \\
			\bm{w}^\cL_{d_{\text{model}}}
		\end{pmatrix}
		=
		\begin{pmatrix}
			\tilde{w}_j^{\br-s'_1 \bOmega, \bOmega}\circ \tau^{NN}_{-,s} \\
			\vdots                                                       \\
			\tilde{w}_j^{\br-s'_{d_\mathrm{model}} \bOmega, \bOmega} \circ \tau^{NN}_{-,s}
		\end{pmatrix},
	\end{equation}
	where $\tau^{NN}_{-,s} \approx ( \mus(\br-s'_1\bOmega) e^{-\tau(0,s'_1)}, \mus(\br-s'_2\bOmega) e^{-\tau(0,s'_2)}, \cdots, \mus(\br-s'_{d_{\mathrm{model}}}\bOmega) e^{-\tau(0,s'_{d_{\mathrm{model}}})} )$. We observe that $\tau(0,s'_j)$ can be modeled using an attention-based architecture similar to our Attenuation module. The full computational sequence - evaluating $\tau(0,s'_j)$, applying the exponential, and multiplying by $\mus$ - is approximated end-to-end by a neural network.

	In practice, this network is implemented within the Attenuation module alongside $\tau^{NN}_{-,t}$, with the actual output of $\mathrm{OpticalDepthNet}$ being the concatenation $\tau_- = \mathrm{Concat}(\tau^{NN}_{-,t}, \tau^{NN}_{-,s})$. This design is formalized in Alg.~\ref{alg:optical-depth-net}, which defines $\tau_-$ as:
	\begin{equation}
		\tau_- = \mathrm{Concat}\left(\tau^{NN}_{-,t}, \tau^{NN}_{-,s}\right) = \mathrm{OpticalDepthNet}\left(\br,\bOmega; \{\br^{\mathrm{mesh}}_i\}, \left\{(\mu_t^{\mathrm{mesh}}, \mu_s^{\mathrm{mesh}})_i\right\}\right) \in \mathbb{R}^{d_{\tau}},
	\end{equation}
	where $d_\tau > d_{\mathrm{model}}$.

\end{remark}

\section{Training}\label{sec:training}
In this section, we detail the training framework for our neural network model designed to approximate the Green's function in the Radiative Transfer Equation (RTE). Our approach leverages a delta function dataset, in which boundary conditions are approximated by Gaussian functions. This approach gives the model flexibility to handle boundary conditions not seen in training. This ability is similar to pre-training: it transfers useful knowledge from one task to others. The following subsections describe the construction of training dataset, the loss function based on the mean square error, and the evaluation methodology using relative error metrics.

\subsection{RTE dataset features}
\begin{table}[ht]
	\centering
	\begin{tabular}{lll}
		\toprule
		Features \& Shape                                 & Description                                                           \\
		\midrule
		\texttt{phase\_coords}: $[N_{\text{coords}},2d]$  & Phase coordinates $(\br, \bOmega)$                                    \\
		\texttt{boundary\_coords}: $[N_{\text{bc}},2d]$   & Boundary coordinates $(\br^{\prime}, \bOmega^{\prime})$               \\
		\texttt{boundary\_weights}: $[N_{\text{bc}}]$     & Boundary weights $\omega_i$                                           \\
		\texttt{position\_coords}: $[N_{\text{mesh}},2d]$ & Mesh points $(\br^{\text{mesh}})$                                     \\
		\texttt{velocity\_coords}: $[N_{\text{quad}},2d]$ & Angular quadrature points  $\bOmega^{*}$                              \\
		\texttt{velocity\_weights}: $[N_{\text{quad}}]$   & Angular quadrature weights $\omega_i$                                 \\
		\texttt{boundary}: $[N_{\text{bc}}]$              & Boundary $I(\br^{\prime}, \bOmega^{\prime})$                          \\
		\texttt{mu}: $[N_{\text{mesh}},2]$                & Cross section $\mus(\br^{\text{mesh}})$ and $\mut(\br^{\text{mesh}})$ \\
		\texttt{scattering\_kernel}: $[N_{\text{quad}}]$  & Scattering kernel    $p(\bOmega, \bOmega^*)$                          \\
		\bottomrule
	\end{tabular}
	\caption{DeepRTE features. $N_\text{coords}$ denotes the number of phase-space coordinates, $N_\text{bc}$ represents the number of boundary coordinates, $N_\text{mesh}$ corresponds to the number of coefficient coordinates, $N_\text{quad}$ specifies the number of velocity quadrature points.}\label{tab:rte_features}
\end{table}

To construct training dataset, each training sample include the phase coordinates, the inflow boundary function $I_-$, scattering cross section $\mus$, total cross section $\mut$, the phase function $p$, and the intensity $I$. While these physical quantities are theoretically continuous functions, we approximate them through discrete sampling points in practice.
Table~\ref{tab:rte_features} summarizes the features included in each training sample.


\paragraph{Mesh-free sampling}
The training dataset is generated by a fully mesh-free sampling approach.
All feature counts $N_{\text{coords}}$, $N_{\text{bc}}$, $N_{\text{mesh}}$ and $N_{\text{quad}}$ are chosen independently.
This enables adjustments based on problem complexity and computational resources.

\subsection{Delta function training dataset}
Due to the linearity of the RTE solution operator with respect to the inflow boundary condition,
any practical boundary condition can be approximated as a linear combination of delta-like functions. Therefore, by specializing our training dataset to approximate delta functions (e.g., using Gaussians), and leveraging the linearity of both the solution operator $\mathcal{A}$ and our DeepRTE approximation $\ANN$ with respect to $I_-$, a model accurately predicting inflow boundary conditions of the form~\eqref{eq:rte-greens-function} inherently generalizes to other boundary conditions.

Inspired by the particle methods~\cite{raviart1983analysis, chertock2017practical}, we construct the the approximation of boundary condition $I_-$ by a linear combination of Dirac distributions (delta functions).
\begin{equation}
  I_{-}(\br, \bOmega) \approx I_{-}^{N_\text{bc}}(\br,\bOmega):=\sum_{i,j}^{N_\text{bc}} w_{ij}\delta_{\{\br_i\}}(\br)\delta(\bOmega-\bOmega_j), \quad (\br,\bOmega) \in \Gamma_{-},
\end{equation}
where $w_{ij}$ are given coefficients. This can be
done, for instance, in the sense of measures. Namely, for any test function $\phi\in C^0_0(\Gamma_-)$, the inner product $(I_{-},\phi)$ should be approximated by
\begin{equation}
  \int_{\Gamma_{-}} I_{-}(\br,\bOmega)\phi (\br,\bOmega) \diff{\br} \diff{\bOmega} \approx (I_{-}^{N_\text{bc}},\phi) = \sum_{i,j}^{N_\text{bc}} w_{ij}\phi(\br_i,\bOmega_j).
\end{equation}
Based on this we observe that determining the weights $w_{ij}$ can be achieved through solving a standard numerical quadrature problem. However, the above result is not convinient when one is interested in obtaining point values of the computed solution. In this respect, it is more useful to associate the particle function $I_{-}^{N_\text{bc}}$ with a continuous function $I_{-,\sigma}^{N_\text{bc}}$ which approximates the boundary $I_{-}$ in a more classical sense.

The regularization of particle solution is usually performed by taking a convolution product with a mollification kernel (or, so-called, cut-oﬀ function), $\zeta_\sigma$, that after a proper scaling takes into account the tightness of the particle discretization, namely,
\begin{equation}
  I_{-}(\br, \bOmega) \approx I_{-,\sigma}(\br,\bOmega)
  := (I_-*\zeta_\sigma)(\br, \bOmega)
  = \int_{\Gamma_-} I_-(\br', \bOmega') \zeta_\sigma(\br-\br', \bOmega - \bOmega') \diff{\br'}\diff{\bOmega'},
\end{equation}
where $\zeta_\sigma \in C^0(\Gamma_-)\cap L^1(\Gamma_-)$ is a mollification kernel satisfying (see~\cite{raviart1983analysis}):
\begin{equation}
  \zeta_\sigma(\br, \bOmega):=\frac{1}{\sigma^{2d-1}}\zeta_{\sigma}\left(\frac{\br}{\sigma}, \frac{\bOmega}{\sigma}\right), \quad \quad  \int_{\Gamma_{-}} \zeta_\sigma(\br,\bOmega) \diff{\br} \diff{\bOmega} = 1.
\end{equation}
and $\sigma$ denotes a characteristic length of the kernel, which in our case
\begin{equation}
  \zeta_\sigma(\br-\br', \bOmega-\bOmega') := \zeta^{\sigma_{\br}}_{\{\br'\}}(\br)\zeta^{\sigma_{\bOmega}}(\bOmega-\bOmega'), \quad \text{with } \sigma:=\max{(\sigma_{\br},\sigma_{\bOmega})},
\end{equation}
The particle approximation of $I_-$ is then defined as
\begin{equation}\label{eq:particle-approx}
  I_-(\br, \bOmega) \approx
  I^{N_{\text{bc}}}_{-,\sigma}(\br,\bOmega)
  := (I_-^{N_\text{bc}}*\zeta_\sigma)(\br, \bOmega)
  = \sum_{i,j}^{N_\text{bc}} w_{ij}\zeta_\sigma(\br-\br_i,\bOmega-\bOmega_j),
\end{equation}
where $N_{\text{bc}}$ denotes the number of quadrature points in boundary $\Gamma_-$.

The accuracy of the particle method will thus be related to the moments of $\zeta_\sigma$ that are being conserved, and we say that the kernel is of order $k$ when:
\begin{equation}\label{eq:order_k}
  \left \{ 
  \begin{aligned}
    &\int_{\Gamma_{-}} \zeta_\sigma(\br, \bOmega) \diff{\br} \diff{\bOmega} = 1, \\
    &\int_{\Gamma_{-}} \br^{\alpha_i}\bOmega^{\alpha_j} \zeta_\sigma(\br, \bOmega) \diff{\br} \diff{\bOmega} = 0, \quad \text{for all }\alpha = \alpha_i + \alpha_j\text{ such that } 1\leq |\alpha|\leq k-1, \\
    &\int_{\Gamma_{-}} \left|(\br,\bOmega)^T\right|^k |\zeta_\sigma(\br, \bOmega)| \diff{\br} \diff{\bOmega} < \infty.
  \end{aligned}
  \right. 
\end{equation}
According to~\cite{chertock2017practical}, we have the following error estimate for the boundary condition approximation.

\begin{thm}\label{thm:bc-error-estimate}
  Let~\eqref{eq:order_k} are satisfied for some integer $k\geq 1$, Assume that $\zeta \in W^{m,p}(\Gamma_-)\cap W^{m,1}(\Gamma_-)$ for some integer $m>d$. Then, for $I_- \in W^{l,p}(\Gamma_-)$ with $l=\max{(k, m)}$. There exists a positive constant $C$, s.t.
  \begin{equation}\label{eq:bc_error_estimate}
    \|I_- - I^{N_{\text{bc}}}_{-,\sigma}\|_{L^p(\Gamma_-)}\leq C\left\{ \sigma^k \|I_-\|_{k,p,\Gamma_-} + \left(\frac{h}{\sigma}\right)^m \|I_-\|_{m,p,\Gamma_-}\right\},
  \end{equation}
  where $h > 0$ is the size of non-overlapping cubes covering $\Gamma_-$.
\end{thm}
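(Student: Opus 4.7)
The plan is to split the error via the triangle inequality through the intermediate, fully continuous mollification $I_{-,\sigma}=I_-\ast\zeta_\sigma$:
\begin{equation*}
\|I_- - I^{N_{\text{bc}}}_{-,\sigma}\|_{L^p(\Gamma_-)}\le \|I_- - I_{-,\sigma}\|_{L^p(\Gamma_-)} + \|I_{-,\sigma} - I^{N_{\text{bc}}}_{-,\sigma}\|_{L^p(\Gamma_-)},
\end{equation*}
which matches the two terms on the right-hand side of~\eqref{eq:bc_error_estimate}. The first piece is the \emph{regularization error} produced by convolving $I_-$ with $\zeta_\sigma$, which I expect to scale like $\sigma^k$ through the order-$k$ vanishing moments of $\zeta$. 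The second piece is the \emph{quadrature error} produced when the convolution integral is replaced by a finite weighted sum at the particle nodes $(\br_i,\bOmega_j)$ of spacing $h$, which I expect to scale like $(h/\sigma)^m$ through the smoothness and rescaling of $\zeta_\sigma$.

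For the regularization error, I would start from the identity
\begin{equation*}
I_{-,\sigma}(\xi)-I_-(\xi)=\int_{\Gamma_-}\bigl[I_-(\xi-\eta)-I_-(\xi)\bigr]\zeta_\sigma(\eta)\,\mathrm{d}\eta,
\end{equation*}
which uses the normalization $\int\zeta_\sigma=1$. Taylor-expanding $I_-(\xi-\eta)$ about $\xi$ to order $k$ produces intermediate terms $\sum_{1\le|\alpha|\le k-1}\frac{(-\eta)^\alpha}{\alpha!}\partial^\alpha I_-(\xi)$ that integrate to zero against $\zeta_\sigma$ by the vanishing moment conditions in~\eqref{eq:order_k}. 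The integral remainder is pointwise dominated by $C|\eta|^k\int_0^1|D^k I_-(\xi-t\eta)|\,\mathrm{d}t$. Applying Minkowski's integral inequality in $L^p(\Gamma_-;\mathrm{d}\xi)$ and using the third bound in~\eqref{eq:order_k} together with the rescaling $\int|\eta|^k|\zeta_\sigma(\eta)|\,\mathrm{d}\eta=\sigma^k\int|y|^k|\zeta(y)|\,\mathrm{d}y$ then yields the required $C\sigma^k\|I_-\|_{k,p,\Gamma_-}$.

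For the quadrature error, observe that $I_{-,\sigma}-I^{N_{\text{bc}}}_{-,\sigma}=(I_- - I_-^{N_{\text{bc}}})\ast\zeta_\sigma$, so that the pointwise difference at a fixed $\xi$ is precisely the quadrature residual for the integrand $\eta\mapsto I_-(\eta)\zeta_\sigma(\xi-\eta)$ on a regular mesh of spacing $h$. A standard per-cell Bramble-Hilbert estimate bounds this residual by $Ch^m$ times a local $W^{m,p}$-type seminorm of the integrand. The product rule distributes up to $m$ derivatives between $I_-$ and $\zeta_\sigma$, and the rescaling $\zeta_\sigma(y)=\sigma^{-(2d-1)}\zeta(y/\sigma)$ contributes a factor $\sigma^{-1}$ per derivative applied to the mollifier. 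Summing over cells, taking the $L^p(\Gamma_-;\mathrm{d}\xi)$ norm, and invoking Young's convolution inequality (to absorb the scale-invariant $L^1$ mass of $\zeta_\sigma$) then produces the second term $C(h/\sigma)^m\|I_-\|_{m,p,\Gamma_-}$.

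The \textbf{main obstacle} is the careful bookkeeping of $\sigma$-scalings in the quadrature step: naive estimates for $\|\partial^\alpha\zeta_\sigma\|_{L^p}$ introduce additional $\sigma^{-(2d-1)(1-1/p)}$ factors that must cancel against the scale-invariant $L^1$ mass of $\zeta_\sigma$ in order to recover the clean $(h/\sigma)^m$ prefactor. The hypothesis $m>d$ enters precisely here, since the Sobolev embedding $W^{m,p}\hookrightarrow C^0(\overline{\Gamma_-})$ is needed both to make pointwise evaluation of $I_-$ at the particle nodes meaningful and to justify the per-cell Bramble-Hilbert estimate uniformly across $\Gamma_-$. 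Combining the two bounds then yields~\eqref{eq:bc_error_estimate}.
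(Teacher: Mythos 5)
First, a point of comparison: the paper does not actually prove Theorem~\ref{thm:bc-error-estimate} --- it imports the estimate verbatim from~\cite{raviart1983analysis,chertock2017practical}. Your decomposition through the exact mollification $I_{-,\sigma}=I_-\ast\zeta_\sigma$, and your treatment of the first (smoothing) term via Taylor expansion, the vanishing-moment conditions in~\eqref{eq:order_k}, Minkowski's integral inequality, and the rescaling of the $k$-th absolute moment of $\zeta_\sigma$, is precisely the classical particle-method argument behind the cited result; that half is sound.

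The genuine gap is in the second (quadrature) term. With one particle per cube of size $h$ and midpoint-type weights --- which is the setting of the theorem and of the paper's own implementation, e.g.\ $w_i=\tfrac{1}{40}\sin(\pi y_i)$ --- the per-cell error functional $\phi\mapsto\int_{B_j}\phi-v_j\,\phi(x_j)$ annihilates only affine functions, so a per-cell Bramble--Hilbert estimate caps the rate at $O(h^2)$ and cannot produce $O(h^m)$ for arbitrary $m>d$. The order-$m$ rate is a global cancellation phenomenon across cells: the standard proof applies the Poisson summation formula (equivalently, an Euler--Maclaurin expansion) to the integrand $g_\xi(\eta):=I_-(\eta)\zeta_\sigma(\xi-\eta)$, writes the quadrature residual as $-\sum_{k\neq0}\widehat{g_\xi}(2\pi k/h)$, and uses the decay $|\widehat{g_\xi}(\xi')|\leq C|\xi'|^{-m}|g_\xi|_{W^{m,1}}$. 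This is also where the hypothesis $m>d$ genuinely enters: it guarantees $\sum_{k\neq0}|k|^{-m}<\infty$ (the continuity of point evaluation on $W^{m,1}$, which you cite, is a secondary consequence rather than the driver of the rate). Your $\sigma$-bookkeeping --- a factor $\sigma^{-1}$ per derivative landing on $\zeta_\sigma$, then Young's inequality against the scale-invariant $L^1$ mass --- is the right way to convert $|g_\xi|_{W^{m,1}}$ into $(h/\sigma)^m\|I_-\|_{m,p,\Gamma_-}$, so replacing the per-cell Bramble--Hilbert step by the Poisson-summation quadrature lemma repairs the argument.
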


The two terms in the above estimate may be balanced by choosing an appropriate size of $\sigma$.
Intuitively, it is clear that if the smoothing parameter $\sigma$ is too small in comparison to the minimal distance between particles, the approximate function will vanish away from the $\sigma$-neighborhood of the particles and is thus irrelevant.
On the other hand, large values of $\sigma$ will generate unacceptable smoothing errors.
Theoretically $\sigma$ is chosen so that the smoothing error and the
discretization error are of the same order and it is common to take $\sigma\sim O(\sqrt{h})$.
More discussions on the choice of $\sigma$ can be found in~\cite{raviart1983analysis, chertock2017practical}.

\textbf{Delta function training dataset.} So our strategy is to train the DeepRTE operator $\ANN$ with the following boundary conditions:
\begin{equation}
  \left\{\zeta_\sigma \mid \zeta_\sigma(\br-\br_i, \bOmega-\bOmega_j)
  = \zeta^{\sigma_{\br}}_{\{\br_i\}}(\br)\zeta^{\sigma_{\bOmega}}(\bOmega-\bOmega_j), \;(\br_i, \bOmega_j) \in \Gamma_-, \, i,j \in \{0,\ldots,N_{\text{bc}}-1\},\; (\sigma_{\br},\sigma_{\bOmega})\sim \Sigma\right\},
\end{equation}
where $\Sigma$ is some distribution. We call this the delta function training dataset.

According to the universal approximation capability of neural networks~\cite{hornik1989multilayer, weinanpriori, weinan2019barron}, we assume the generalization error of well-trained $\ANN_\theta$ on the delta function training dataset is small, i.e., assume there exits a parameter set $\theta^*$ of neural network such that
\begin{equation}
  \|\ANN_{\theta^*}\zeta_\sigma - \A\zeta_\sigma\|\leq \varepsilon, \quad \forall \zeta_\sigma \in \text{delta function testing dataset},
\end{equation}
we can then estimate the generalization error of $\ANN$ on any boundary condition $I_-$ by the following theorem:
\begin{thm}\label{thm:error-estimate}
  For $\mut$, $\mus$ and $p$ given and satisfying the asumptions in Theorem~\ref{thm:existence-uniqueness-l2},
  let $\A$ and $\ANN$ be the linear operators defined in~\eqref{eq:rte-op} and~\eqref{eq:greens-integral-nn}, respectively.
  For any $I_- \in L^2(\Gamma_-)$, let $I^{N_{\text{bc}}}_{-,\sigma}$ be the approximation of $I_-$ defined in~\eqref{eq:particle-approx}. If there exists a parameter set $\theta^*$ of neural network such that
  \begin{equation}
    \|\ANN_{\theta^*}\zeta_\sigma - \A\zeta_\sigma\|\leq \varepsilon, \quad \forall \zeta_\sigma \in \text{delta function testing dataset},
  \end{equation}
  then we have:
  \begin{equation}
    \begin{aligned}
      \|\A^{\text{NN}}_{\theta^*} I_- - \A I_-\|_{L^2(D\times\sS^{d-1})}\leq C\left\{ \sigma^k \|I_-\|_{H^k(\Gamma_-)} + \left(\frac{h}{\sigma}\right)^m \|I_-\|_{H^m(\Gamma_-)} + \varepsilon\right\}.
    \end{aligned}
  \end{equation}
\end{thm}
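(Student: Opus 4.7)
The natural plan is a triangle-inequality split around the particle-regularized boundary data $I^{N_\text{bc}}_{-,\sigma}$. Writing $E = \ANN_{\theta^*} I_- - \A I_-$ and inserting $\pm\A I^{N_\text{bc}}_{-,\sigma}$ and $\pm\ANN_{\theta^*}I^{N_\text{bc}}_{-,\sigma}$, I would obtain $\|E\|_{L^2(D\times\sS^{d-1})} \le T_1 + T_2 + T_3$, where $T_1 = \|\A(I_- - I^{N_\text{bc}}_{-,\sigma})\|$, $T_2 = \|(\A-\ANN_{\theta^*}) I^{N_\text{bc}}_{-,\sigma}\|$, and $T_3 = \|\ANN_{\theta^*}(I^{N_\text{bc}}_{-,\sigma} - I_-)\|$ (all in the same $L^2$ norm), and then bound each piece using ingredients already developed in the paper.

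For $T_1$ the plan is to combine two facts: Theorem~\ref{thm:existence-uniqueness-l2}, specialized to $p=2$, gives that $\A : L^2(\Gamma_-)\to L^2(D\times\sS^{d-1})$ is bounded with a constant depending only on $\mathrm{diam}(D)$, $\|\mu_a\|_{L^\infty}$ and $\|\mu_s\|_{L^\infty}$; Theorem~\ref{thm:bc-error-estimate} with $p=2$ (so that $W^{k,2}=H^k$) controls the boundary particle-mollification error. Together these yield $T_1 \le C\bigl(\sigma^k\|I_-\|_{H^k(\Gamma_-)} + (h/\sigma)^m\|I_-\|_{H^m(\Gamma_-)}\bigr)$, producing the first two terms on the right-hand side directly.

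The term $T_2$ is where the generalization hypothesis enters. Using linearity of $\A$ and $\ANN_{\theta^*}$ in the inflow data together with the explicit particle representation $I^{N_\text{bc}}_{-,\sigma} = \sum_{i,j} w_{ij}\,\zeta_\sigma(\cdot-(\br_i,\bOmega_j))$, each translated kernel lies in the delta-function testing dataset, so the assumption gives $T_2 \le \varepsilon \sum_{i,j}|w_{ij}|$. Since the weights $w_{ij}$ are the boundary quadrature weights for $I_-$, their absolute sum is a Riemann-type quadrature of $|I_-|$ on $\Gamma_-$ and is controlled by $C\|I_-\|_{L^2(\Gamma_-)}$ via Cauchy--Schwarz; this factor can be absorbed into the constant multiplying $\varepsilon$.

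For $T_3$ I would split $\ANN_{\theta^*} = \A + (\ANN_{\theta^*}-\A)$ by linearity of $\ANN_{\theta^*}$: the $\A$-part is dispatched exactly like $T_1$ and contributes only to the $\sigma^k$ and $(h/\sigma)^m$ terms, while the residual part is re-expressed through the mollifier decomposition of $I_- - I^{N_\text{bc}}_{-,\sigma}$ and treated as in $T_2$. Summing the three estimates gives the claimed inequality. I expect the main obstacle to be this last step: the generalization hypothesis only bounds $(\ANN_{\theta^*}-\A)$ on individual testing elements $\zeta_\sigma$, so transferring this control to the difference $I_- - I^{N_\text{bc}}_{-,\sigma}$ requires either an operator-norm bound on $\ANN_{\theta^*}$ coming from training stability, or a uniformity statement across the family of translated mollifiers. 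Once that is granted, the rest is a linearity-driven assembly of Theorems~\ref{thm:existence-uniqueness-l2} and~\ref{thm:bc-error-estimate}.
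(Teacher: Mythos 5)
Your decomposition is exactly the paper's: the same three-way triangle inequality around $I^{N_{\text{bc}}}_{-,\sigma}$, with $T_1$ handled by the boundedness of $\A$ from Theorem~\ref{thm:existence-uniqueness-l2} plus the mollification estimate of Theorem~\ref{thm:bc-error-estimate} (with $p=2$), and $T_2$ handled by linearity plus the generalization hypothesis applied term-by-term to the translated kernels, giving $\varepsilon\sum_{i,j}|w_{ij}|$. (The paper simply absorbs $\sum_{i,j}|w_{ij}|$ into the constant without your additional Cauchy--Schwarz remark; that refinement is harmless.)

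The gap is in $T_3=\|\ANN_{\theta^*}(I_--I^{N_{\text{bc}}}_{-,\sigma})\|_{L^2}$, and you have correctly sensed it but proposed a route that does not close it. Splitting $\ANN_{\theta^*}=\A+(\ANN_{\theta^*}-\A)$ and ``re-expressing the residual through the mollifier decomposition'' cannot work: the function $I_--I^{N_{\text{bc}}}_{-,\sigma}$ is the \emph{remainder} of the particle approximation, not a finite linear combination of testing elements $\zeta_\sigma$, so the hypothesis $\|(\ANN_{\theta^*}-\A)\zeta_\sigma\|\le\varepsilon$ gives no control over it. What is needed is precisely the operator-norm bound on $\ANN_{\theta^*}$ that you mention as a possible escape, and it does not come from ``training stability'' but from the architecture itself: by construction~\eqref{eq:greens-integral-nn}, $\ANN_{\theta^*}$ is an integral operator whose kernel $G^{\text{NN}}_{\theta^*}$ is a continuous function (a composition of continuous operations) on the compact set $(D\times\sS^{d-1})\times\Gamma_-$, hence square-integrable there; the operator is therefore Hilbert--Schmidt and in particular bounded from $L^2(\Gamma_-)$ to $L^2(D\times\sS^{d-1})$. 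With that, $T_3\le C_1\|I_--I^{N_{\text{bc}}}_{-,\sigma}\|_{L^2(\Gamma_-)}$ and is dispatched by Theorem~\ref{thm:bc-error-estimate} exactly like $T_1$, contributing only to the $\sigma^k$ and $(h/\sigma)^m$ terms. Supplying this one observation completes your argument and makes it coincide with the paper's proof.
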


\begin{proof}
  We use $L^2$ instead of $L^2(D\times\sS^{d-1})$ to simplify the notation in the following proof.
  The solution operator $\A$ is linear since RTE is a linear equation w.r.t. boundary condition and let us first state the linearity of $\ANN_{\theta^*}$ which can be easily verified using the definition in~\eqref{eq:greens-integral-nn}:
  \begin{equation}
    \begin{aligned}
      \ANN_{\theta^*}(\alpha I_{-}^{(1)} + \beta I_{-}^{(2)})
      &=\int_{\Gamma_-} G^{\text{NN}}_{\theta^*} (\alpha I_{-}^{(1)} + \beta I_{-}^{(2)}) \diff{\br'} \diff{\bOmega'}, \\
      &=\alpha\int_{\Gamma_-} G^{\text{NN}}_{\theta^*} I_{-}^{(1)} \diff{\br'} \diff{\bOmega'} + \beta \int_{\Gamma_-} G^{\text{NN}}_{\theta^*} I_{-}^{(2)} \diff{\br'} \diff{\bOmega'}, \\
      & =\alpha \ANN_{\theta^*} I_{-}^{(1)} + \beta \ANN_{\theta^*} I_{-}^{(2)}, \quad \quad \forall I_{-}^{(1)}, I_{-}^{(2)}, \in L^2(\Gamma_-), \; \alpha, \beta \in \mathbb{R}.
    \end{aligned}
  \end{equation}
  Thus, we have
  \begin{equation}\label{eq:error_estimate}
    \|\ANN_{\theta^*} I_- - \mathcal{A} I_-\|_{L^2}
    \leq \|\ANN_{\theta^*} (I_- - I^{N_{\text{bc}}}_{-,\sigma})\|_{L^2} + \|\mathcal{A} (I_- - I^{N_{\text{bc}}}_{-,\sigma})\|_{L^2} + \|(\ANN_{\theta^*} - \mathcal{A}) I^{N_{\text{bc}}}_{-,\sigma}\|_{L^2} .
  \end{equation}

  For the first term, since $\A^{\text{NN}}_{\theta^*}$ defined by
  \begin{equation}
    \ANN_{\theta^*} I_{-} = \int_{\Gamma_-} G^{\text{NN}}_{\theta^*}(\br, \bOmega; \br', \bOmega') I_{-}(\br', \bOmega') \diff{\br'} \diff{\bOmega'},
  \end{equation}
  and our neural network $G^{\text{NN}}_{\theta^*}$ is continuous (as all the operations in the architecture are continuous function) on the compact domains $\Gamma_-$ and $D \times \mathbb{S}^{d-1}$, so
  \begin{equation}
    \int_{D \times \mathbb{S}^{d-1}\times \Gamma_-} |G^{\text{NN}}_{\theta^*}|^2 \diff{\br'} \diff{\bOmega'} \diff{\br} \diff{\bOmega} < \infty,
  \end{equation}
  which means $\ANN_{\theta^*}$ is a Hilbert-Schmidt operator.
  By the properties of Hilbert-Schmidt operators, $\A^{\text{NN}}_{\theta^*}$ is automatically bounded as a linear operator.
  Thus, we have
  \begin{equation}\label{eq:error_estimate_1}
    \|\ANN_{\theta^*} (I_- - I^{N_{\text{bc}}}_{-,\sigma})\|_{L^2} \leq C_1 \|I_- - I^{N_{\text{bc}}}_{-,\sigma}\|_{L^2(\Gamma_-)}.
  \end{equation}
  For the second term, we can also obtain, according to Theorem~\ref{thm:existence-uniqueness-l2},
  \begin{equation}\label{eq:error_estimate_3}
    \|\mathcal{A} (I_- - I^{N_{\text{bc}}}_{-,\sigma})\|_{L^2} \leq C_2 \|I_- - I^{N_{\text{bc}}}_{-,\sigma}\|_{L^2(\Gamma_-)}.
  \end{equation}
  Finally, for the last term, by the definition of $I^{N_{\text{bc}}}_{-,\sigma}$ in~\eqref{eq:particle-approx} and linearity of $\A$, $\ANN_{\theta^*}$, we have
  \begin{equation}\label{eq:error_estimate_2}
    \begin{aligned}
      \|(\ANN_{\theta^*} - \mathcal{A}) I^{N_{\text{bc}}}_{-,\sigma}\|_{L^2}
      &= \left\|(\ANN_{\theta^*} - \mathcal{A}) \sum_{i,j}^{N_{\text{bc}}} w_{ij}\zeta_\sigma\right\|_{L^2} \\
      &\leq \sum_{i,j}^{N_{\text{bc}}} |w_{ij}| \|(\ANN_{\theta^*} - \mathcal{A}) \zeta_\sigma\|_{L^2} \\
      &\leq \varepsilon \sum_{i,j}^{N_{\text{bc}}} |w_{ij}|:= C_3 \varepsilon.
    \end{aligned}
  \end{equation}
  Combining~\eqref{eq:error_estimate},~\eqref{eq:error_estimate_1},~\eqref{eq:error_estimate_2} and~\eqref{eq:error_estimate_3}, we obtain
  \begin{equation}
    \begin{aligned}
      \|\A^{\text{NN}}_{\theta^*} I_- - \A I_-\|_{L^2}
      &\leq (C_1 + C_2) \|I_- - I^{N_{\text{bc}}}_{-,\sigma}\|_{L^2(\Gamma_-)} + \varepsilon \sum_{i,j}^{N_{\text{bc}}} |w_{ij}|, \\
      &\leq C\left\{ \sigma^k \|I_-\|_{H^k(\Gamma_-)} + \left(\frac{h}{\sigma}\right)^m \|I_-\|_{H^m(\Gamma_-)} + \varepsilon\right\},
    \end{aligned}
  \end{equation}
  where the last inequality is due to Theorem~\ref{thm:bc-error-estimate}. This completes the proof.
\end{proof}

In our implementation, we use the following Gaussian mollification kernel
\begin{equation}
	\zeta_\sigma(\br-\br', \bOmega-\bOmega') := \delta^{\sigma_{\br}}_{\{\br'\}}(\br)\delta^{\sigma_{\bOmega}}(\bOmega-\bOmega'),
\end{equation}
where
\begin{equation}\label{eq:delta_defination}
	\delta_{\{\br'\}}^{\sigma}(\br) = \frac{1}{\sigma \sqrt{\pi}} \exp\left( -\frac{{(\br-\br')}^2}{\sigma^2} \right), \quad
	\delta^{\sigma}(\bOmega-\bOmega') = \frac{1}{\sigma \sqrt{\pi}} \exp\left( -\frac{(\bOmega-\bOmega')^2}{\sigma^2} \right),
\end{equation}
where $(\br', \bOmega') \in \Gamma_-$.

\subsection{Loss and optimization}
The loss function is defined as the mean squared error (MSE) between the predicted intensity $I^{\text{NN}}_{\bm{\theta}}$ and the corresponding ground truth $I$:
\begin{equation}\label{eq:mse}
	\ell(I^\text{NN}_{\bm{\theta}}, I)= \frac{1}{N_\text{coords}}\sum_{i=1}^{N_{\text{coords}}} \left| I^{\text{NN}}_{\bm{\theta}}(\br_i,\bOmega_i) - I(\br_i, \bOmega_i) \right|^2,
\end{equation}
\begin{equation}\label{eq:loss}
	\mathcal{L}(\bm{\theta}) = \frac{1}{N}\sum_{n=1}^N \ell(I^\text{NN}_{\bm{\theta},n}, I_n),
\end{equation}
where $N_{\text{coords}}$ is the number of phase coordinates. However, to conserve computational resources during training, inspired by stochastic gradient descent principles, we sample only $N_\text{col}$ collocation points within the phase space at each gradient descent step:

\begin{equation}\label{eq:collocation_points} \ell(I^\text{NN}_{\bm{\theta}}, I)= \frac{1}{N_\text{col}}\sum_{i=1}^{N_{\text{col}}} \left| I^{\text{NN}}_{\bm{\theta}}(\br_i,\bOmega_i) - I(\br_i, \bOmega_i) \right|^2,
\end{equation}

In practice, training a neural network involves minimizing a loss function with respect to its parameters, $\bm{\theta}$. This optimization problem is formulated as:
\begin{equation}
	\min_{\bm{\theta}} \mathcal{L}(\bm{\theta}),
\end{equation}
and is typically solved using methods such as stochastic gradient descent (SGD) or advanced variants like Adam~\cite{kingma2017,rakhlin2012}.

\subsection{Accuracy and evaluation}
In practical applications, the density is a crucial and widely adopted metric for evaluation. The density is obtained by integrating the intensity $I(\br,\bOmega)$ with respect to $\bOmega$ over the $d$-dimensional unit sphere $\sS^{d-1}$, and is mathematically expressed as:
\begin{equation}\label{eq:density}
	\Phi(\br) = \frac{1}{S^{d-1}}\int_{\sS^{d-1}} I(\br, \bOmega) \diff{\bOmega},
\end{equation}

During the model's training phase, the target loss function is the mean squared error (MSE) of $I(\br,\bOmega)$ (see~\eqref{eq:loss}).
However, when evaluating the model's performance, the mean squared error is calculated based on the density.
Specifically, we define this MSE as:
\begin{equation}
	\text{MSE} = \frac{1}{N_{\text{mesh}}}\sum_{i=1}^{N_{\text{mesh}}} | \Phi^{\text{NN}}(\br_i) - \Phi(\br_i) |^2.
\end{equation}

Additionally, one can compute the relative error and we use the root mean square percentage error (RMSPE), defined as:
\begin{equation}
	\text{RMSPE} = \sqrt{\frac{\displaystyle\sum^{N_{\text{mesh}}}_{i=1} |\Phi^{\text{NN}}(\br_i) - \Phi(\br_i)|^2}{\displaystyle\sum^{N_{\text{mesh}}}_{i=1}|\Phi(\br_i)|^2}}\cdot 100\%.
\end{equation}

\section{Experiments and results}\label{sec:experiments}
In this section, we present the numerical experiments and results to verify the effectiveness of DeepRTE.
In practice, the RTE~\eqref{eq:rte} is a $6$-dimensional equation in the position variable $\br\in\mathbb{R}^3$ and angular variable $\bOmega\in\sS^2$ corresponding to $d=3$.
It can be reduced to lower dimensional equations.
In the Cartesian coordinate system, let
\begin{equation}
	\bOmega=(c,s,\zeta), \quad c =
		{\left(1-\zeta^{2}\right)}^{\frac{1}{2}} \cos\theta, \quad s =
		{\left(1-\zeta^{2}\right)}^{\frac{1}{2}} \sin\theta, \quad \text{for }|\zeta| \leq 1.
\end{equation}
Suppose that $\mut, \mus$ only depend on $x$, $y$ and $I$ is uniform along the $z$-axis. The functions
\begin{equation}
	\tilde{I}(x, y,\zeta,\theta) = \frac{1}{2}[I(x,y,z,c,s,\zeta) + I(x,y,z,c,s,-\zeta)],
\end{equation}
and
\begin{equation}
	\tilde{p}(\zeta,\theta,\zeta^*, \theta^*) = \frac{1}{2}[p(c,s,\zeta, c^*,s^*,\zeta^*) + p(c,s,\zeta, c^*,s^*,-\zeta^*)],
\end{equation}
are independent of $z$ and even in $\zeta$. Thus, Eq.~\eqref{eq:rte-with-bc} reduces to the following equation:
\begin{equation}
	\left(c\partial_x \tilde{I}(x,y,\zeta,\theta)+s\partial_y
	\tilde{I}(x,y,\zeta,\theta)\right)+\mut \tilde{I}(x,y,\zeta,\theta)=\frac{\mus}{2\pi}\int_{0}^{1}
	\int_0^{2\pi} \tilde{p}(\zeta, \theta, \zeta^*,\theta^*) \tilde{I}(x,y,\zeta^*,\theta^*) \diff{\theta^*}\diff{\zeta^*},
\end{equation}
with inflow boundary condition:
\begin{equation}
	\tilde{I}(x,y,\bOmega)=\tilde{I}_{-}(x,y,\bOmega), \quad \text{for }
	\bOmega\cdot\bm{n}<0,\quad (x, y)\in\partial D.
\end{equation}

The phase function $p$ is essential in the radiative transport equation and typically depends only on the angle between the incoming and outgoing directions. A common model for anisotropic scattering is the normalized Henyey-Greenstein (H-G) phase function, given by
\begin{equation}
	p(\bOmega,\bOmega^*) = p(\bOmega\cdot\bOmega^*) = \frac{1-g^2}{{\Bigl(1+g^2-2g\,(cc^*+ss^*+\zeta\zeta^*)\Bigr)}^{\frac{3}{2}}}.
\end{equation}
The asymmetry parameter $g$ can be adjusted to control the relative amounts of forward and backward scattering in $p$: $g=0$ corresponds to isotropic scattering, and $g=1$ gives highly peaked forward scattering. Although we use the H-G function here, the algorithm can be easily adapted to other scattering models.

\subsection{Dataset construction}\label{sec:dataset-construction}

The datasets for training and evaluating are generated using conventional numerical methods, i.e., sweeping method~\cite{koch1991solution,zeyao2004parallel}.
The source code is written in MATLAB and Python and is available at \href{https://github.com/mazhengcn/rte-dataset}{rte-dataset}.

\paragraph{Computational domain and discretization}
We conduct our numerical experiments on a rectangular spatial domain $D = [0, 1]\times[0, 1]\subset \mathbb{R}^2$ and angular variable space $\mathbb{S}^{2}$.
The number of discrete mesh points in each direction is $40$, and these points are uniformly distributed.

\paragraph{Angular quadrature points}
The angular quadrature points are the positive roots of the
standard Legendre polynomial of degree $2N$ on the interval $[-1,1]$.
Here we take $N=3$, the quadrature points and weights of the
quadrature set are presented in Table~\ref{tab:quadrature}.

\begin{table}[htbp]
	\centering
	\begin{tabular}{ccccc}
		\toprule
		$\zeta_i$ & $\theta_i$ & $c_i$     & $s_i$     & $4 w_i$   \\
		\midrule
		0.2386192 & $\pi$/12   & 0.9380233 & 0.2513426 & 0.1559713 \\
		0.2386192 & 3$\pi$/12  & 0.6866807 & 0.6866807 & 0.1559713 \\
		0.2386192 & 5$\pi$/12  & 0.2513426 & 0.9380233 & 0.1559713 \\
		0.6612094 & $\pi$/8    & 0.6930957 & 0.2870896 & 0.1803808 \\
		0.6612094 & 3$\pi$/8   & 0.2870896 & 0.6930957 & 0.1803808 \\
		0.9324695 & $\pi$/4    & 0.2554414 & 0.2554414 & 0.1713245 \\
		\bottomrule
	\end{tabular}
	\caption{Angular quadrature points and weights of the quadrature set.
		$\zeta_i$ and $\theta_i$ are the quadrature points of the velocity
		space, $c_i$ and $s_i$ are the corresponding cosine and sine
		values, and $w_i$ is the quadrature weight.}\label{tab:quadrature}
\end{table}

\paragraph{Cross section}
The cross section $\mu_t(x,y)$ and $\mu_s(x,y)$ are defined based on the subdomain $D_\mu = [0.4, 0.6] \times [0.4, 0.6]$. The detailed definitions are:
\begin{equation} \label{eq:coefficients_definition}
	\begin{aligned}
		\mu_t(x,y) & =
		\begin{cases}
			U_t, \quad \text{where } U_t \sim \mathcal{U}(5,7) & \text{if } (x,y) \in D_\mu    \\
			10                                                 & \text{if } (x,y) \notin D_\mu
		\end{cases} \\
		\mu_s(x,y) & =
		\begin{cases}
			U_s, \quad \text{where } U_s \sim \mathcal{U}(2,4) & \text{if } (x,y) \in D_\mu    \\
			5                                                  & \text{if } (x,y) \notin D_\mu
		\end{cases}
	\end{aligned}
\end{equation}

\paragraph{Training dataset boundary conditions}
The training dataset uses delta-function inflow boundary conditions, approximated by Gaussian functions:
\begin{equation}\label{eq:bc-condition}
	\centering
	\left \{ 
	\begin{aligned}
		 & I_-(x=0,y,c>0,s;x_l^{\prime}=0,
		y_l^{\prime},c_l^{\prime},s_l^{\prime})
		=\delta_{\{y_l^{\prime}\}}^{\sigma_{\br}}(y)\delta^{\sigma_{\bOmega}}(c - c_l')\delta^{\sigma_{\bOmega}}(s - s_l'),
		\\
		 & I_-(x=1,y,c<0,s;x_r^{\prime}=1,
		y_r^{\prime},c_r^{\prime},s_r^{\prime})
		=\delta_{\{y_r^{\prime}\}}^{\sigma_{\br}}(y)\delta^{\sigma_{\bOmega}}(c - c_r')\delta^{\sigma_{\bOmega}}(s - s_r'),
		\\
		 & I_-(x,y=0,c,s>0;x_b^{\prime},y_b^{\prime}=0, c_b^{\prime},
		s_b^{\prime})
		=\delta_{\{x_b^{\prime}\}}^{\sigma_{\br}}(x)\delta^{\sigma_{\bOmega}}(c - c_b')\delta^{\sigma_{\bOmega}}(s - s_b'),
		\\
		 & I_-(x,y=1,c,s<0;x_t^{\prime},y_t^{\prime}=1,c_t^{\prime},
		s_t^{\prime})
		=\delta_{\{x_t^{\prime}\}}^{\sigma_{\br}}(x)\delta^{\sigma_{\bOmega}}(c - c_t')\delta^{\sigma_{\bOmega}}(s - s_t'),
	\end{aligned}
	\right. 
\end{equation}
where the variables $y_l'$ (left), $y_r'$ (right), $x_b'$ (bottom), and $x_t'$ (top) are uniformly sampled from discrete mesh points along the corresponding boundary.
The angular variables $c_l'$, $c_r'$, $c_b'$, $c_t'$, $s_l'$, $s_r'$, $s_b'$, and $s_t'$ are uniformly sampled from discrete velocity directions that satisfy the inflow boundary conditions. The parameters $\sigma_{\br}$ and $\sigma_{\bOmega}$ are drawn from uniform distributions $\sigma_{\br} \sim \mathcal{U}(0.005\sqrt{2},0.02\sqrt{2})$ and $\sigma_{\bOmega} \sim \mathcal{U}(0.005\sqrt{2},,0.01\sqrt{2})$, respectively.

\paragraph{Dataset normalization}
In training process, the intensity $I$ was normalized to the range $[0, 1]$ using the formula $(I - I_{\text{min}}) / (I_{\text{max}} - I_{\text{min}})$, where $I_{\text{min}}$ and $I_{\text{max}}$ represent the minimum and maximum intensity values observed in the dataset, respectively. This normalization helps stabilize the training process and improve model convergence.


\paragraph{Scattering regimes}
The model was trained and tested across three distinct scattering regimes, defined by the scattering asymmetry parameter $g \in [0,1]$ (where $g=0$ denotes isotropic scattering and $g=1$ denotes fully forward-peaked scattering). The regimes are as follows:
\begin{itemize}\label{li:regimes}
	\item \textbf{Near isotropy}: \( g \sim \mathcal{U}(0, 0.2) \), characterized by nearly isotropic scattering, where light or neutrons scatter in all directions equally.
	\item \textbf{Moderate anisotropy}: \( g \sim \mathcal{U}(0.4, 0.6) \), representing a middle ground where scattering is neither fully isotropic nor strongly directional.
	\item \textbf{Highly forward peaking}: \( g \sim \mathcal{U}(0.7, 0.9) \),  where particles (photons/neutrons) scatter predominantly forward, similar to a spotlight. This presents modeling challenges because: (1) particle paths become highly correlated; (2) their positions and scattering directions are closely linked; and (3) minor changes in boundary conditions can drastically affect the results.
\end{itemize}
Table~\ref{tab:dataset} summarizes the parameters used in the dataset construction.
\begin{table}[htbp]
	\centering
	\begin{tabular}{lp{0.3\textwidth}ll}
		\toprule
		Category                             & Parameters                                      & Symbol                   & Value/Range                                     \\ \midrule
		\multirow{2}{*}{Spatial domain}      & Domain                                          & $D$                      & ${[0,1]}^2$                                     \\
		                                     & Subdomain                                       & $D_\mu$                  & ${[0.4,0.6]}^2$                                 \\
		\midrule
		\multirow{4}{*}{Cross section}       & \multirow{2}{*}{Total}                          & \multirow{2}{*}{$\mut$}  & $\mathcal{U}(5,7)$ in $D_\mu$                   \\
		                                     &                                                 &                          & $10$ in $D\backslash D_\mu$                     \\
		                                     & \multirow{2}{*}{Scattering}                     & \multirow{2}{*}{$\mus$}  & $\mathcal{U}(2,4)$ in $D_\mu$                   \\
		                                     &                                                 &                          & $5$ in $D\backslash D_\mu$                      \\
		\midrule
		\multirow{2}{*}{Discretization}      & \# of mesh points                               & $N_{\text{mesh}}$        & 40                                              \\
		                                     & \# of angular quadrature points                 & $N_\text{quad}$          & 24                                              \\ \midrule
		\multirow{5}{*}{Boundary conditions} & Beam spatial center coordinates                 & $y_l', y_r', x_b', x_t'$ & Sampled from mesh points                        \\
		                                     & \multirow{2}{*}{Beam angular quadrature points} & $c_l', c_r', c_b', c_t'$ & \multirow{2}{*}{Sampled from quadrature points} \\
		                                     &                                                 & $s_l', s_r', s_b', s_t'$ &                                                 \\
		                                     & Beam spatial std dev                            & $\sigma_{\br}$           & $\sqrt{2}\,\mathcal{U}(0.005, 0.02)$            \\
		                                     & Beam angular std dev                            & $\sigma_{\bOmega}$       & $\sqrt{2}\,\mathcal{U}(0.005, 0.01)$            \\
		\midrule
		\multirow{3}{*}{Scattering}          & \multirow{3}{*}{Asymmetry parameter}            & \multirow{3}{*}{$g$}     & $\mathcal{U}(0,0.2)$                            \\
		                                     &                                                 &                          & $\mathcal{U}(0.4,0.6)$                          \\
		                                     &                                                 &                          & $\mathcal{U}(0.7,0.9)$                          \\
		\bottomrule
	\end{tabular}
	\caption{Summary of hyperparameters for dataset construction. This table details the computational domain, cross sections, discretization, boundary conditions, and other parameters used.}\label{tab:dataset}
\end{table}

\subsection{Training details}

\paragraph{Training configuration}

\begin{table}[htbp]
	\centering
	\begin{tabular}{ll}
		\toprule
		Hyperparameters          & Value                      \\ \midrule
		Optimizer                & Adam                       \\
		Learning rate schedule   & Cosine annealing           \\
		Initial learning rate    & $1 \times 10^{-3}$         \\
		Batch size               & 8                          \\
		Epochs                   & 5000                       \\
		\# of training data      & 800                        \\
		\# of validation data    & 200                        \\
		\# of collocation points & 128                        \\
		Hardware                 & 4$\times$ NVIDIA 4090 GPUs \\
		\bottomrule
	\end{tabular}
	\caption{Summary of model training hyperparameters. This table details the optimizer and training hyperparameters, including learning rate, dataset settings, and hardware information.}
	\label{tab:training_details}
\end{table}
The model was trained using the Adam optimizer with a cosine annealing learning rate schedule.
The training process involved 800 samples per scattering regime, and its performance was evaluated on a separate validation set of 200 samples per regime.
Training was conducted on four NVIDIA 4090 GPUs and proceeded for 5000 epochs.
The initial learning rate was set to $1 \times 10^{-3}$, the batch size was 8, and the number of collocation points used in the loss computation (see Eq.~\eqref{eq:collocation_points}) was 128.
Table~\ref{tab:training_details} summarizes the training hyperparameters.

\paragraph{Network architecture}
The neural network hyperparameters of deeprte are detailed in Table~\ref{tab:parameters}.

\begin{table}[htbp]
	\centering
	\begin{tabular}{lll}
		\toprule
		Module Name                  & Hyperparameters                          & Value \\ \midrule
		\multirow{7}{*}{Attenuation} & \texttt{num\_layer}
		$N_{\text{mlp}}$             & 4                                                \\
		                             & \texttt{hidden\_dim}  $d_{\text{mlp}}$   & $128$ \\
		                             & \texttt{output\_dim}  $d_{\text{model}}$ & $16$  \\
		                             & \texttt{num\_head}    $N_{\text{head}}$  & $2$   \\
		                             & \texttt{key\_dim}     $d_k$              & $32$  \\
		                             & \texttt{value\_dim}   $d_v$              & $32$  \\
		                             & \texttt{output\_dim}  $d_{\tau}$         & $2$   \\
		\midrule
		\multirow{2}{*}{Scattering}  & \texttt{num\_block}   $N_{\ell}$         & $2$   \\
		                             & \texttt{latent\_dim}  $d_{\text{model}}$ & $16$  \\
		\bottomrule
	\end{tabular}
	\caption{DeepRTE network hyperparameters. The network consists of two modules: the attenuation module and the scattering module. The hyperparameters include the number of layers, hidden dimensions, output dimensions, number of heads, key dimensions, value dimensions, and latent dimensions.}\label{tab:parameters}
\end{table}

\subsection{Results}
\subsubsection{Accuracy validation}
\label{sec:acc}
This section evaluates the model's accuracy and computational efficiency. Accuracy is measured using the mean squared error (MSE) and root mean squared percentage error (RMSPE) between predictions and ground truth. All experimental findings presented in this subsection are derived from a validation dataset with identical distribution to the training dataset.

\begin{table}[htbp]
	\centering
	\begin{tabular}{@{}cccccc@{}}
		\toprule
		Model                    & \# of parameters       & Scattering regime      & \(g\) range & MSE (\(\times 10^{-10}\)) & RMSPE (\%) \\ \midrule
		\multirow{3}{*}{DeepRTE} & \multirow{3}{*}{37954} & Near isotropy          & (0, 0.2)    & 5.630                     & 2.827      \\ \cmidrule(l){3-6}
		                         &                        & Moderate anisotropy    & (0.4, 0.6)  & 5.453                     & 2.759      \\ \cmidrule(l){3-6}
		                         &                        & Highly forward peaking & (0.7, 0.9)  & 7.223                     & 3.181      \\ \bottomrule
	\end{tabular}
	\caption{Accuracy validation of DeepRTE model across three distinct scattering regimes. All validation experiments were conducted on a dataset sharing the same distribution as the training dataset. The table demonstrating that the model achieves high accuracy with RMSPE consistently below 3.2\% across all regimes.}\label{tab:accuracy}
\end{table}

Across all regimes, DeepRTE achieved an MSE not exceeding \(7.3 \times 10^{-10}\) and an RMSPE below 3.2\%, indicating high accuracy. Specifically, in the near isotropy regime, the MSE was \(5.630 \times 10^{-10}\) with an RMSPE of 2.827\%, reflecting excellent performance in nearly isotropic conditions. For moderate anisotropy regime, the MSE was slightly lower at \(5.453 \times 10^{-10}\), with an RMSPE of 2.759\%, showing comparable accuracy. Notably, in the highly forward peaking regime, where modeling is typically more challenging due to the directional nature of photon transport, the model recorded an MSE of \(7.223 \times 10^{-10}\) and an RMSPE of 3.181\%, demonstrating robust performance despite the increased complexity.

Table~\ref{tab:accuracy-figs} provides visual comparisons of $\Phi$ predictions for representative cases across three scattering regimes. For each regime, we display: (1) 3D ground truth, (2) 2D ground truth projection, (3) model prediction, and (4) absolute error between prediction and ground truth.

The Table~\ref{tab:time_compare} presents a runtime comparison between DeepRTE and traditional methods, highlighting the efficiency gains achieved by our approach. The fast sweeping method was implemented in two variants: a CPU implementation (Python with NumPy and SciPy) measured on an Intel(R) Xeon(R) Silver 4410Y (12 cores), which requires $193.0$ s, and a highly optimized GPU implementation (JAX with data sharding across devices, multi-GPU parallelism, and \texttt{jax.jit} for JIT (Just-in-Time) compilation, see Alg.~\ref{alg:fast-sweeping-gpu}) measured on $4\times$ NVIDIA GeForce RTX 4090, which requires $16.7$ s ($11.6\times$ speedup over the CPU). By contrast, DeepRTE performs inference in $2.3$ s on $4\times$ RTX 4090, achieving a $ 7.3\times$ speedup over the fast sweeping method on GPU and an $83.9\times$ speedup relative to the CPU baseline.

These results show that DeepRTE effectively captures the physics of scattering across different parameter settings.
The model achieves low error rates even in complex forward scattering scenarios.
With only $37,954$ parameters, DeepRTE ensures computational efficiency.
These characteristics make it well-suited for large-scale simulations requiring fast processing.

\begin{table}[htbp]
	\centering
	\begin{tabular}{@{}lccc@{}}
		\toprule
		Method                     & Device                        & Count    & Time (s) \\ \midrule
		Fast sweeping method (CPU) & Intel(R) Xeon(R) Silver 4410Y & 12 cores & 193.0    \\
		Fast sweeping method (GPU) & NVIDIA GeForce RTX 4090       & 4 GPUs   & 16.7     \\
		\midrule
		DeepRTE (GPU)              & NVIDIA GeForce RTX 4090       & 4 GPUs   & 2.3      \\ \bottomrule
	\end{tabular}
	\caption{Runtime comparison. The conventional fast sweeping method requires 193.0 s on an Intel(R) Xeon(R) Silver 4410Y (12 cores) and 16.7 s on $4\times$ NVIDIA GeForce RTX 4090 GPUs ($11.6\times$ over CPU). By contrast, DeepRTE completes inference in 2.3 s on $4\times$ RTX 4090, achieving a $7.3\times$ speedup over the GPU fast sweeping method and an $83.9\times$ speedup relative to the CPU baseline.}
	\label{tab:time_compare}
\end{table}

\begin{algorithm}[!htbp]
	\caption{Multi-GPU Fast-Sweeping Method}\label{alg:fast-sweeping-gpu}
	\small
	$\Def \text{ FastSweeping}(\mu_t, \mu_s, \{c_k\}, \{s_k\}, \{w_{k'}\}, p, \Delta x, \Delta y, D=4, \text{tol}=10^{-8}, \text{max\_iter}=1000)$:
	\begin{algorithmic}[1]
		\Comment{Data sharding: partition angular flux and scattering kernel across $D$ devices}
		\State$M \gets n_v / D$
		\Comment{Shard angular flux and scattering kernel; replicate cross-sections}
		\ForAll{device $d \in \{1,\dots,D\}$}
		\State$I^{d,[\ell]}_{i, j, k} \gets \texttt{jax.zeros}(n_x, n_y, M)$ \hfill $I^{d,[\ell]}_{i,j,k} \in \mathbb{R}^{n_x \times n_y \times M}$
		\State$p^d_{k,k'} \gets p_{k,k'}$ for $k \in [(d-1)M, dM)$ and all $k'$
		\Comment{Replicate material cross-sections to all devices}
		\State$\mu_t^d, \mu_s^d \gets \mu_t, \mu_s$
		\EndFor
		\Comment{Source iteration with convergence check}
		\ForAll{iteration $\ell \in [0, \text{max\_iter})$}
		\Statex{\hspace{1.5em}\color{Brown}\small\textit{\# \;Step 1: All-gather angular fluxes from all devices (cross-device communication)}}
		\State$I_{i,j,k'}^{[\ell]} \gets \text{All-Gather}(\{I_{i,j, k}^{d,[\ell]}\}_{d=1}^D)$ \hfill $k' \in [0, n_v)$
		\ForAll{device $d$ \textbf{in parallel}}
		\Statex{\hspace{1.5em}\color{Brown}\small\textit{\# \;Step 2: Compute local scattering moments on each device (local computation, no communication)}}
		\State$S^{d,[\ell]}_{i,j,k} \gets \displaystyle\sum_{k'=0}^{n_v-1} w_{k'}\, p^d_{k,k'}\,I^{[\ell]}_{i,j,k'}$ \hfill $k \in [(d-1)M, dM)$
		\Statex{\hspace{1.5em}\color{Brown}\small\textit{\# \;Step 3: Perform local angular sweeps on each device (local computation, no communication)}}
		\ForAll{$i,j$ \textit{in sweep order}}
		\State$I^{d,[\ell+1]}_{i,j,k} \gets \dfrac{c_k\dfrac{I^{d,[\ell]}_{i-1,j,k}}{\Delta x} + s_k\dfrac{I^{d,[\ell]}_{i,j-1,k}}{\Delta y} + (\mu_s)_{i,j}\,S^{d,[\ell]}_{i,j,k}}{\dfrac{c_k}{\Delta x} + \dfrac{s_k}{\Delta y} + (\mu_t)_{i,j}}$ \hfill $k \in [(d-1)M, dM)$
		\EndFor
		\EndFor
		\Comment{Check convergence}
		\If{$\text{RelativeError}(I^{[\ell+1]}, I^{[\ell]}) < \text{tol}$}
		\State\textbf{break}
		\EndIf
		\EndFor
		\Comment{Gather all angular fluxes from devices}
		\State$I^{[\ell]} \gets \text{All-Gather}(\{I^{d,[\ell]}\}_{d=1}^D)$
		\Ret$I^{[\ell]}$
	\end{algorithmic}
\end{algorithm}

\begin{table}[htbp]
	\centering
	\begin{tabular}{>{\centering\arraybackslash}m{0.9cm}cccc}
		\toprule
		$g$                         & $\Phi_{\text{label}}$                                             & $\Phi_{\text{label}}$                                          & $\Phi_{\text{predict}}$                                      & $|\Phi_{\text{label}}- \Phi_{\text{predict}}|$                 \\
		\midrule
		\raisebox{5em}{$(0,0.2)$}   & \includegraphics[width=0.22\textwidth]{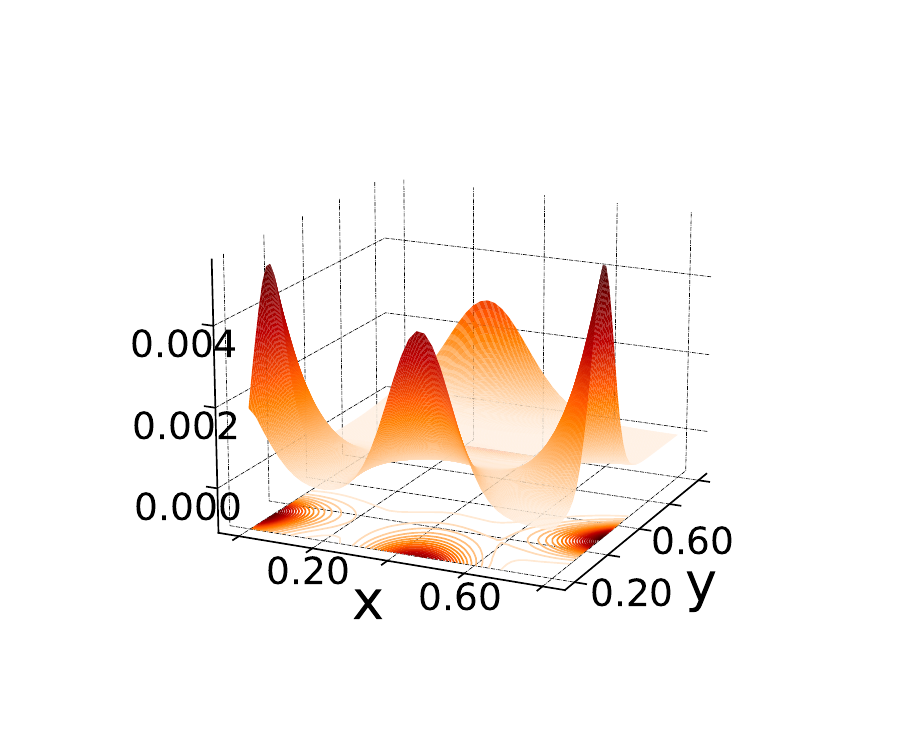} & \includegraphics[width=0.20\textwidth]{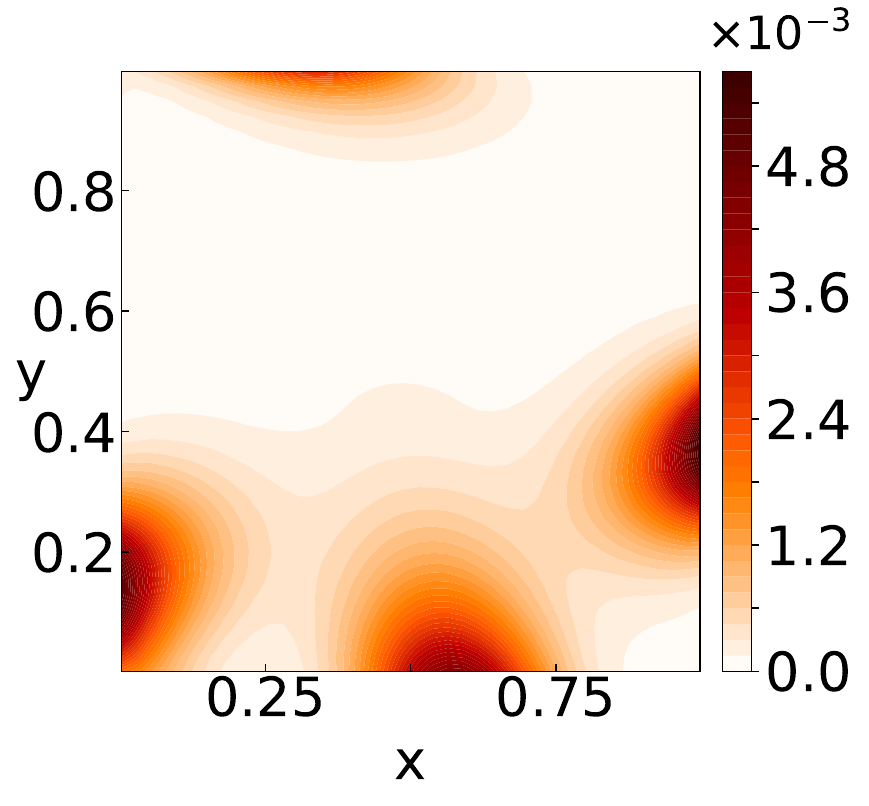} & \includegraphics[width=0.20\textwidth]{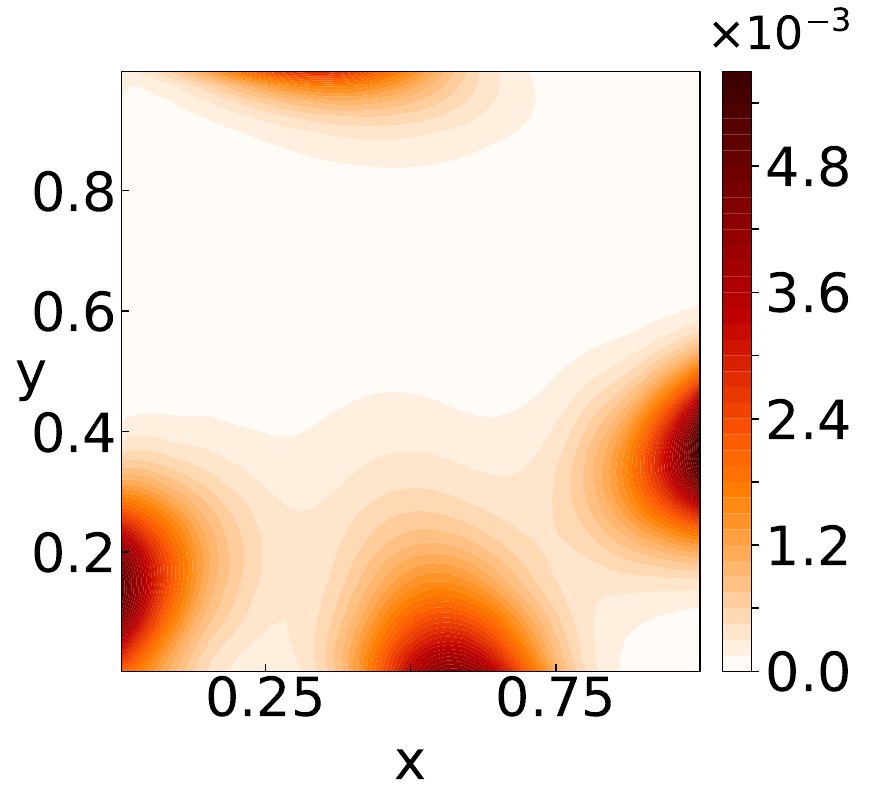} & \includegraphics[width=0.20\textwidth]{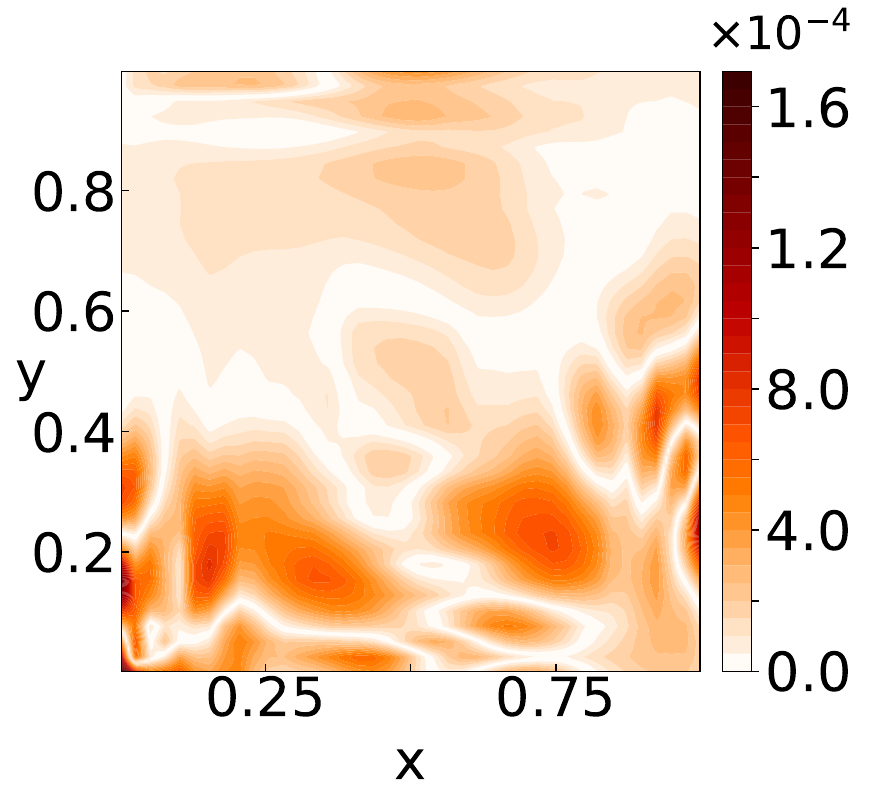} \\
		\raisebox{5em}{$(0.4,0.6)$} & \includegraphics[width=0.22\textwidth]{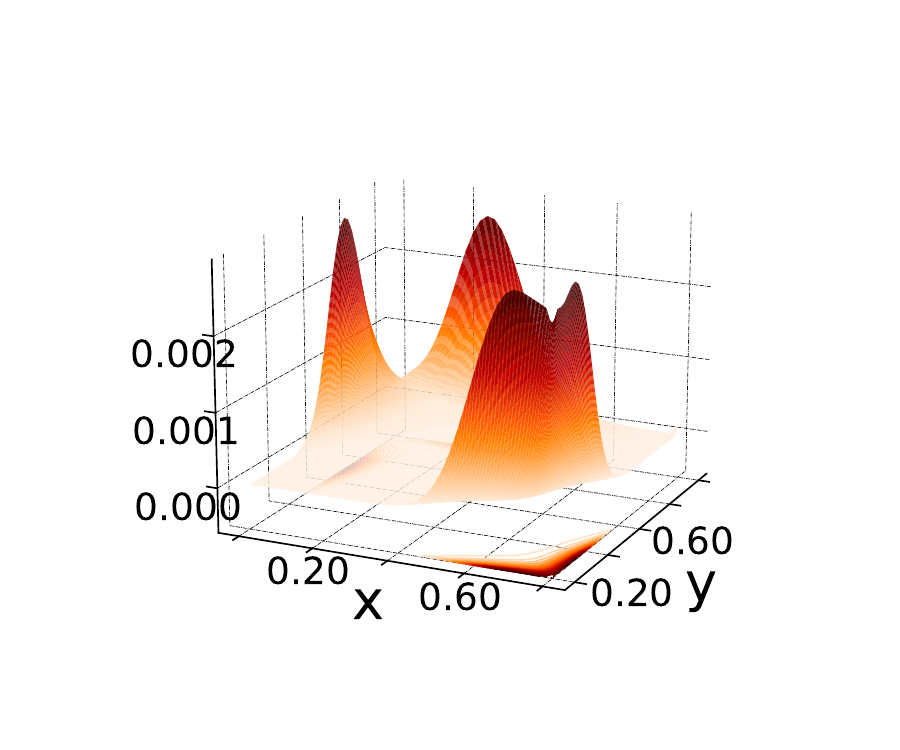} & \includegraphics[width=0.20\textwidth]{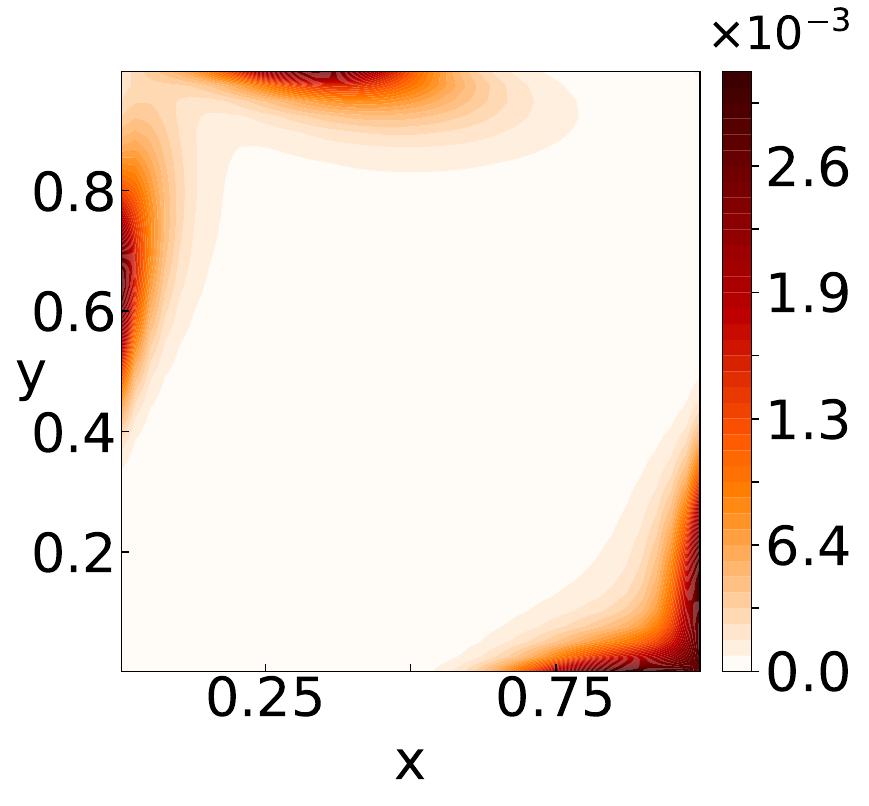} & \includegraphics[width=0.20\textwidth]{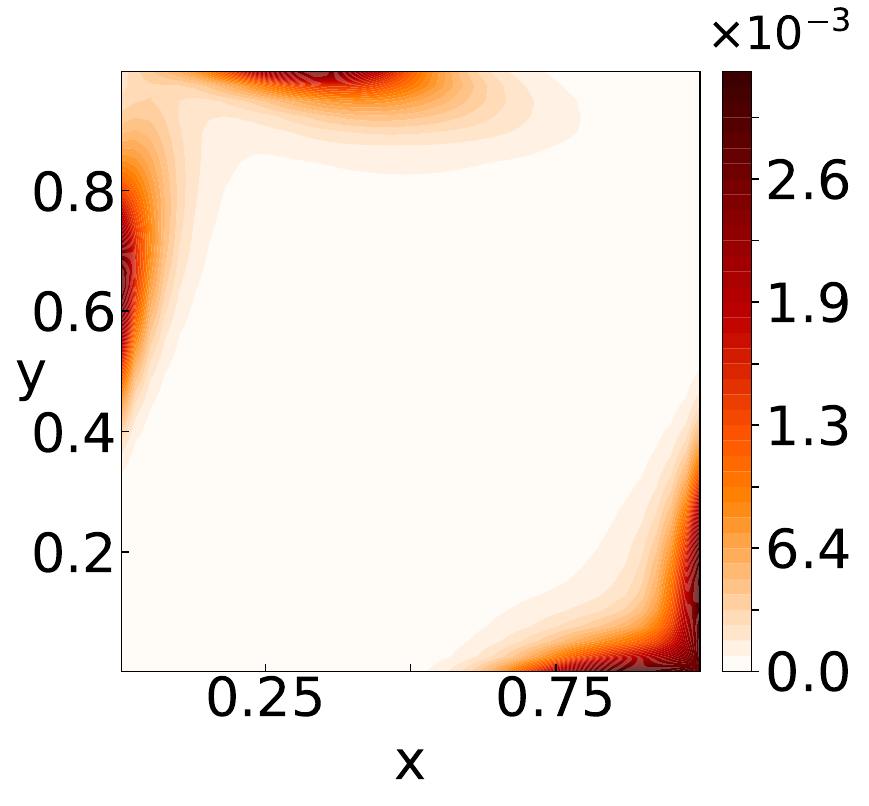} & \includegraphics[width=0.20\textwidth]{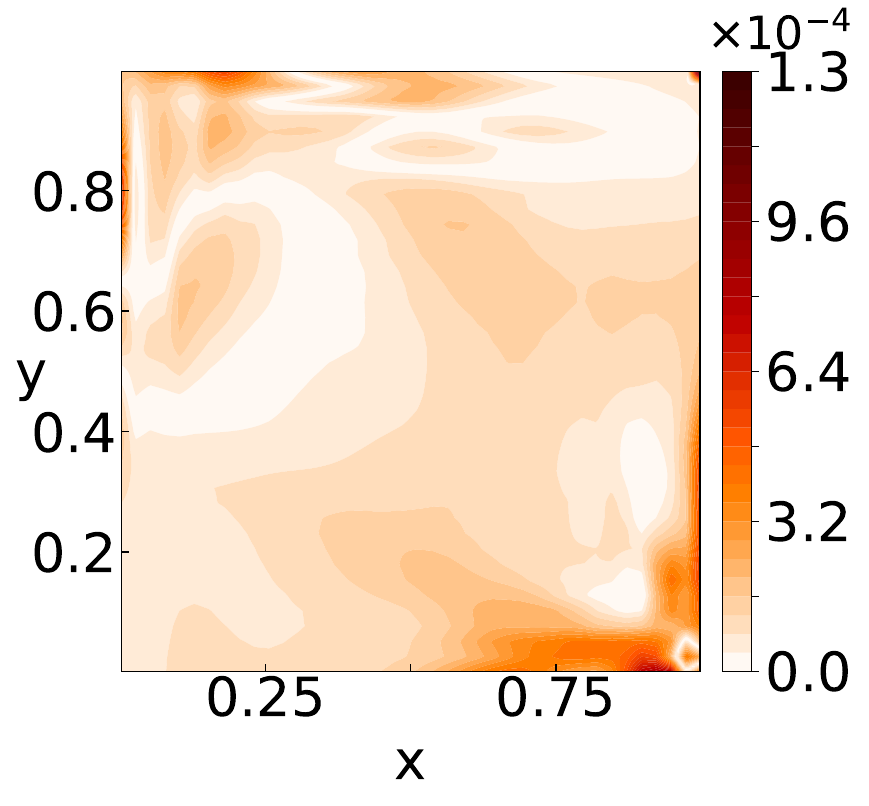} \\
		\raisebox{5em}{$(0.7,0.9)$} & \includegraphics[width=0.22\textwidth]{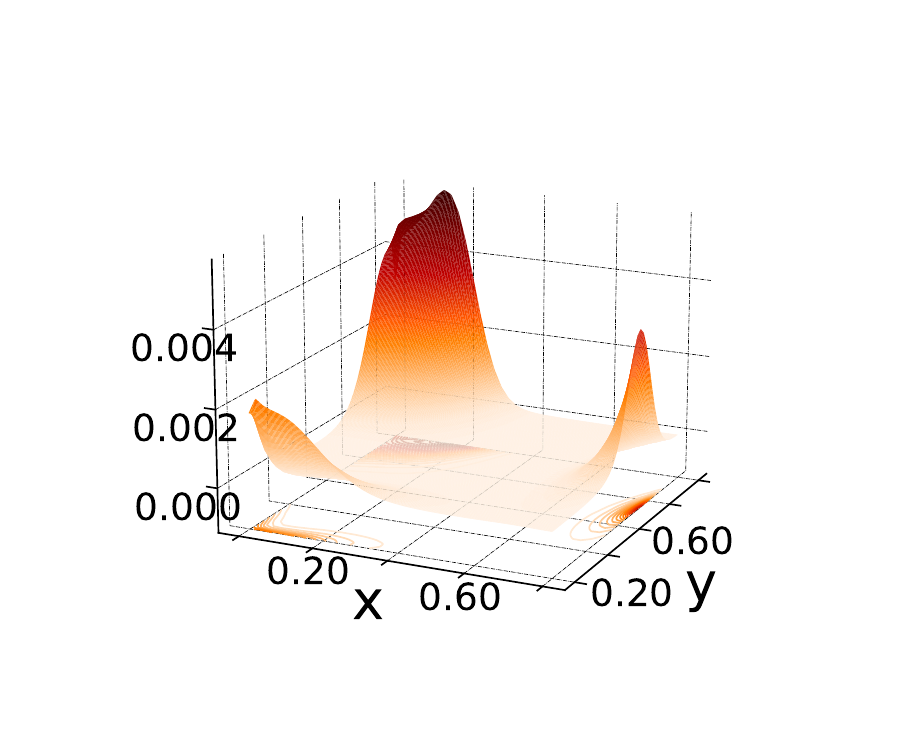} & \includegraphics[width=0.20\textwidth]{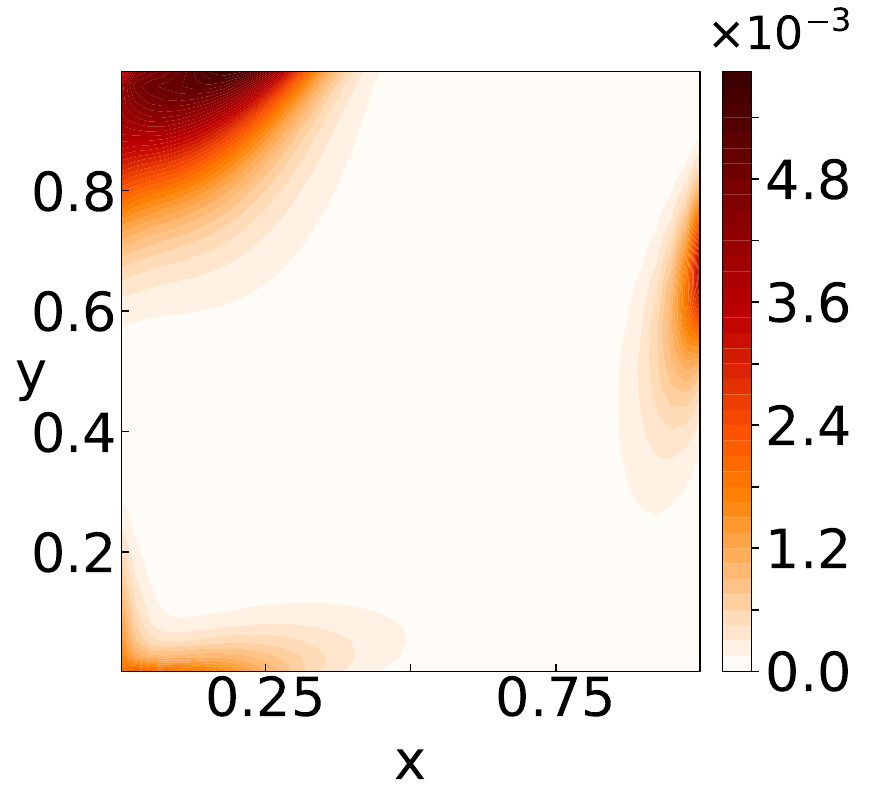} & \includegraphics[width=0.20\textwidth]{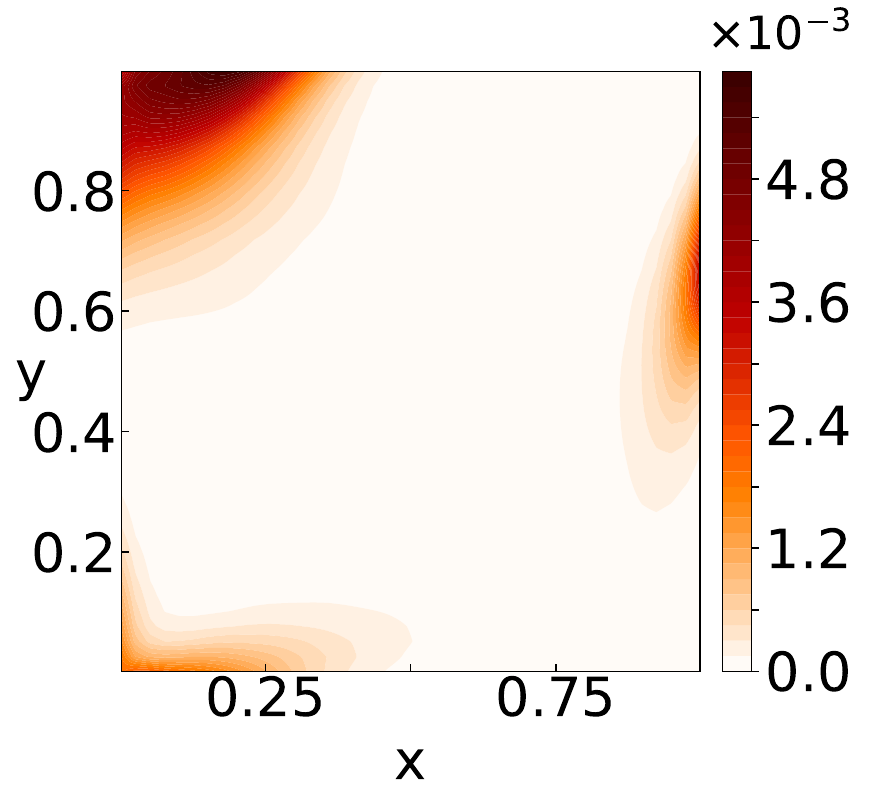} & \includegraphics[width=0.20\textwidth]{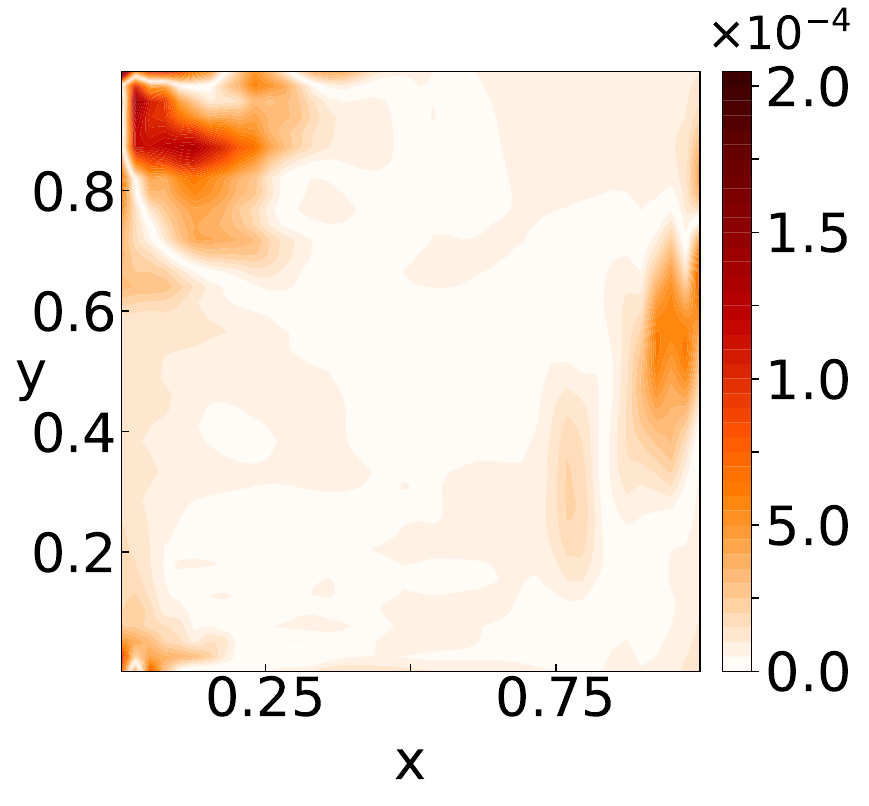} \\
		\bottomrule
	\end{tabular}\caption{Visual comparison of density $\Phi$ predictions across scattering regimes: ground truth (3D and 2D views), model predictions, and absolute errors for representative validation cases.}\label{tab:accuracy-figs}
\end{table}

\begin{figure}
	\centering
	\includegraphics[width=\textwidth]{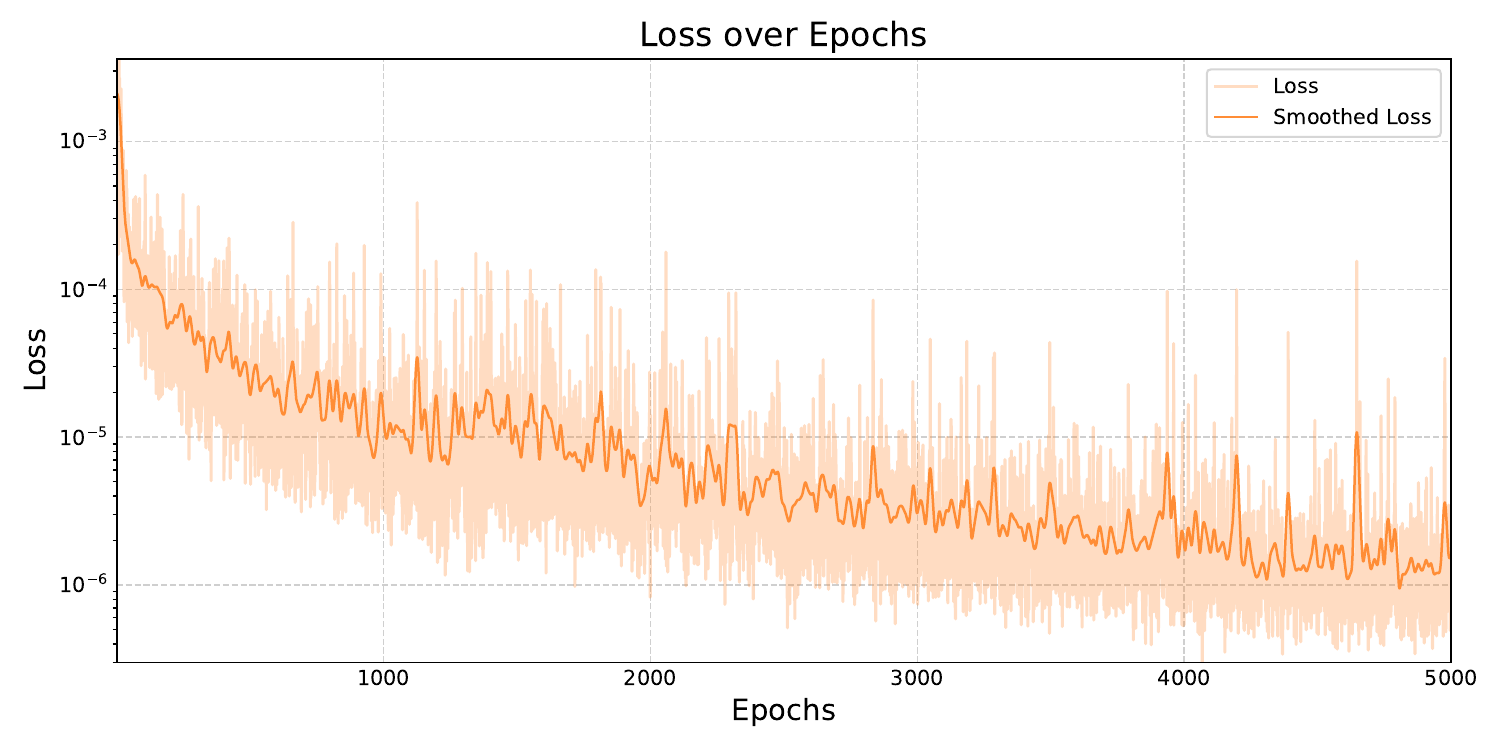}
	\caption{Loss curve. Obtained from training the model on a delta function training dataset, illustrating the convergence behavior over epochs.}\label{fig:loss}
\end{figure}

\subsubsection{Linearity validation}\label{sec:linearity_val}

Our DeepRTE model architecture is specifically designed to preserve the linearity property with respect to the boundary function $I_{-}$.
According to \eqref{eq:greens-integral-nn}, the neural network $G^{\text{NN}}$ used in DeepRTE represents the kernel (i.e., the Green's function) of the solution operator $\ANN$ in integral form. Therefore, the overall input-output mapping DeepRTE model remains linear with respect to $I_{-}$ at both continuous and discrete levels.

To verify this claim, we validate the linearity of DeepRTE through numerical experiments that test both additivity and homogeneity properties. All tests below are conducted with fixed cross sections $\mu_t,\mu_s$ and phase function $p$ with the following settings:
\begin{equation}
  \mu_t(x,y) =
  \begin{cases}
    6, & \text{if } (x,y) \in D_\mu \\
    10 & \text{if } (x,y) \notin D_\mu
  \end{cases},
  \quad
  \mu_s(x,y) =
  \begin{cases}
    3, & \text{if } (x,y) \in D_\mu \\
    5 & \text{if } (x,y) \notin D_\mu
  \end{cases},
  \quad
  g=0.5, \quad \text{where } D_\mu = [0.4, 0.6]^2.
\end{equation}
For simplicity, we write the DeepRTE operator as $\ANN$ instead of $\ANN[\mu_t,\mu_s,p]$. The two fundamental linearity properties are defined as follows:

\textbf{1. Additivity}: The DeepRTE predicted solution corresponding to the sum of two boundary conditions should equals the sum of individual solutions, i.e.,
\begin{equation}\label{eq:additivity-val}
  \ANN(I_{-}^{(1)} + I_{-}^{(2)}) = \ANN I_{-}^{(1)} + \ANN I_{-}^{(2)}.
\end{equation}
In this test, we set two independent boundary conditions $I_{-}^{(1)}$ and $I_{-}^{(2)}$ chosen from our validation dataset as follows, that is, the incoming boundary conditions are in the form of~\eqref{eq:bc-condition} with variance $\sigma_{\br}=0.01$ and $\sigma_{\bOmega}=0.0075$:
\begin{equation}
  I_{-}^{(1)}:
  \begin{cases}
    y_l'=0.3875, y_r'=0.6125, x_b'=0.6125, x_t'=0.3875, \\
    c_l'=0.6931, c_r'=-0.6931, c_b'=0.2871, c_t'=-0.2871, \\
    s_l'=-0.2871, s_r'=0.2871, s_b'=0.6931, s_t'=-0.6931.
  \end{cases}
\end{equation}
and
\begin{equation}
  I_{-}^{(2)}:
  \begin{cases}
    y_l'=0.1375, y_r'=0.8625, x_b'=0.8625, x_t'=0.1375, \\
    c_l'=0.2871, c_r'=-0.2871, c_b'=-0.6931, c_t'=0.6931, \\
    s_l'=0.6931, s_r'=-0.6931, s_b'=0.2871, s_t'=-0.2871.
  \end{cases}
\end{equation}
We evaluate the MSE error between the left and right sides of~\eqref{eq:additivity-val}, i.e.,
\begin{equation}
  \ell(\ANN (I_{-}^{(1)}{+}I_{-}^{(2)}),\ANN I_{-}^{(1)} + \ANN I_{-}^{(2)})=1.581\times 10^{-19}.
\end{equation}
This verifies that the two sides of~\eqref{eq:additivity-val} are nearly identical, thus confirming the additivity property of DeepRTE.

We also compute the MSE and RMSPE between our model predicted solution $\ANN (I_{-}^{(1)}{+}I_{-}^{(2)})$ and the ground truth solution $\A (I_{-}^{(1)}{+}I_{-}^{(2)})$ which is solved using numerical solver. The MSE is $1.358\times10^{-8}$ and the RMSPE is $2.83\%$ which shows that our model can accurately predict the solution for the combined boundary condition $I_{-}^{(1)}{+}I_{-}^{(2)}$. The Fig.~\ref{fig:additivity} visualizes the results of this additivity test using the density defined $\Phi$ in~\eqref{eq:density}.
\begin{figure}[htbp]
  \centering
  \begin{subfigure}[b]{0.30\textwidth}
    \centering
    \includegraphics[width=\textwidth]{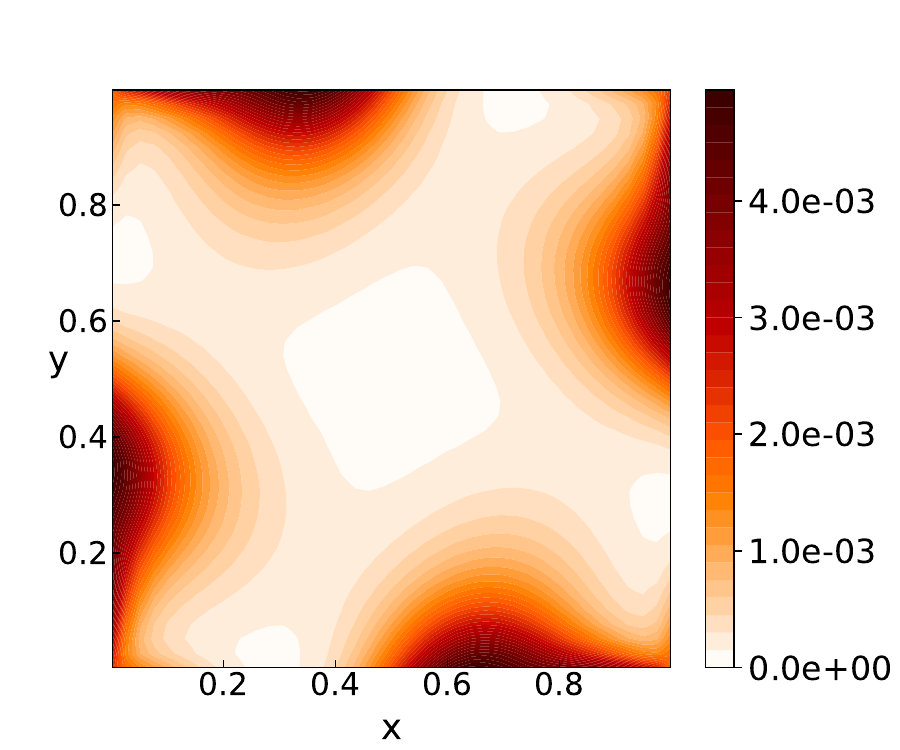}
    \caption{Density $\Phi$ of $\ANN (I_{-}^{(1)} + I_{-}^{(2)})$.}
  \end{subfigure}
  \hfill
  \begin{subfigure}[b]{0.3\textwidth}
    \centering
    \includegraphics[width=\textwidth]{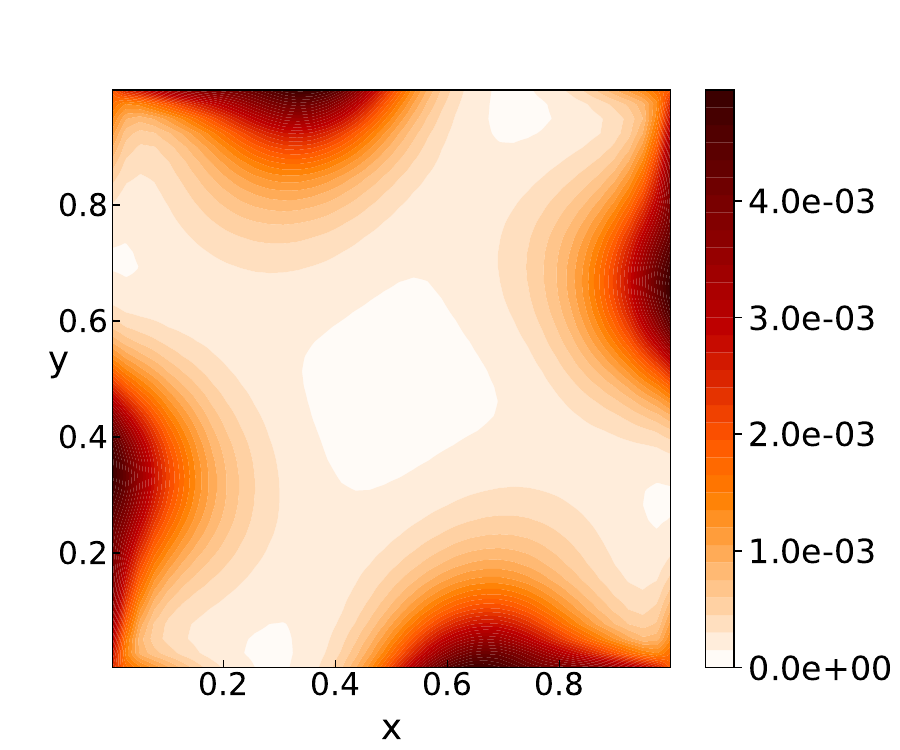}
    \caption{Density $\Phi$ of $\ANN I_{-}^{(1)} + \ANN I_{-}^{(2)}$}
  \end{subfigure}
  \hfill
  \begin{subfigure}[b]{0.3\textwidth}
    \centering
    \includegraphics[width=\textwidth]{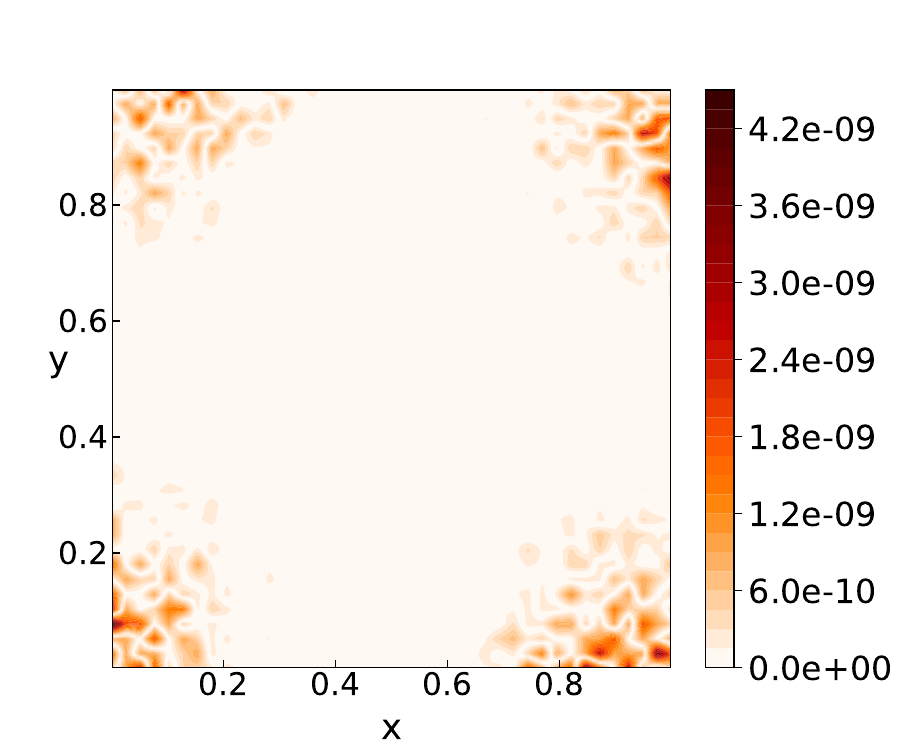}
    \caption{Absolute error}
  \end{subfigure}
  \caption{Additivity verification of DeepRTE with $g=0.5$, $\mus=3$ and $\mut=6$.}
  \label{fig:additivity}
\end{figure}

To be more general, we also validate the additivity property on 100 samples of validation dataset with identical distribution to the training dataset with $g=0.5$ mentioned in above section (see Sec.~\ref{sec:acc}). For every sample in dataset with boundary conditions $I_{-}^{(1)}$, we randomly choose another sample with the boundary conditions $I_{-}^{(2)}$, construct new boundary condition $I_{-}^{(1)}{+}I_{-}^{(2)}$ and then use DeepRTE to evaluate the intensity of this three cases, finally compare $\ANN (I_{-}^{(1)}{+}I_{-}^{(2)})$ and $( \ANN I_{-}^{(1)}{+}\ANN I_{-}^{(2)})$. The MSE and RMSPE between two sides of~\eqref{eq:additivity-val} are $1.421\times 10^{-19}$ and $9.20\times10^{-6}\%$ accordingly, see Table~\ref{tab:linearity}.

\textbf{2. Homogeneity}: The predicted solution corresponding to a scaled boundary condition is the scaled solution:
\begin{equation}\label{eq:homogeneity-val}
  \ANN(\alpha I_{-}) = \alpha \ANN I_{-}, \quad \text{for any } \alpha \in \mathbb{R}.
\end{equation}
Similar to the additivity test, one of the example boundary condition $I_{-}$ chosen from our validation dataset as follows,
\begin{equation}
  I_{-}:
  \begin{cases}
    y_l'=0.3875, y_r'=0.6125, x_b'=0.6125, x_t'=0.3875, \\
    c_l'=0.6931, c_r'=-0.6931, c_b'=0.2871, c_t'=-0.2871, \\
    s_l'=-0.2871, s_r'=0.2871, s_b'=0.6931, s_t'=-0.6931,
  \end{cases}
\end{equation}
with variance $\sigma_{\br}=0.01$, $\sigma_{\bOmega}=0.0075$ and
\begin{equation}
  \alpha=5.
\end{equation}
The MSE error between the left and right sides of~\eqref{eq:homogeneity-val} is:
\begin{equation}
  \ell(\ANN (\alpha I_{-}), \alpha \ANN I_{-}) = 5.179\times 10^{-13}.
\end{equation}
This verifies that the two sides of~\eqref{eq:homogeneity-val} are nearly identical, thus confirming the homogeneity property of
DeepRTE.

We also report the MSE and RMSPE between predicted $\A^\text{NN} (\alpha I^-)$ and ground truth solution solved by numerical solver $\A I^-$ is $5.842\times10^{-9}$ and $2.19\%$ accordingly. This shows that our model can accurately predict the solution for the scaled boundary condition $\alpha I^-$. The Fig.~\ref{fig:homogeneity} visualizes the results of this homogeneity test using the density $\Phi$ defined  in~\eqref{eq:density}
\begin{figure}[htbp]
  \centering
  \begin{subfigure}[b]{0.3\textwidth}
    \centering
    \includegraphics[width=\textwidth]{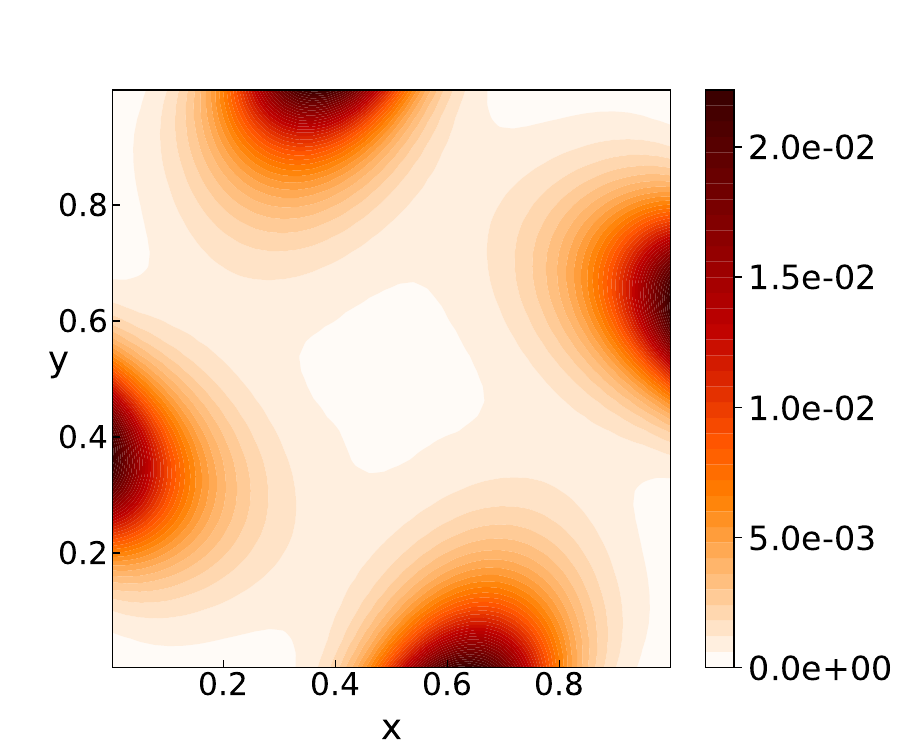}
    \caption{Density $\Phi$ of $\ANN (\alpha I_{-})$.}
  \end{subfigure}
  \hfill
  \begin{subfigure}[b]{0.3\textwidth}
    \centering
    \includegraphics[width=\textwidth]{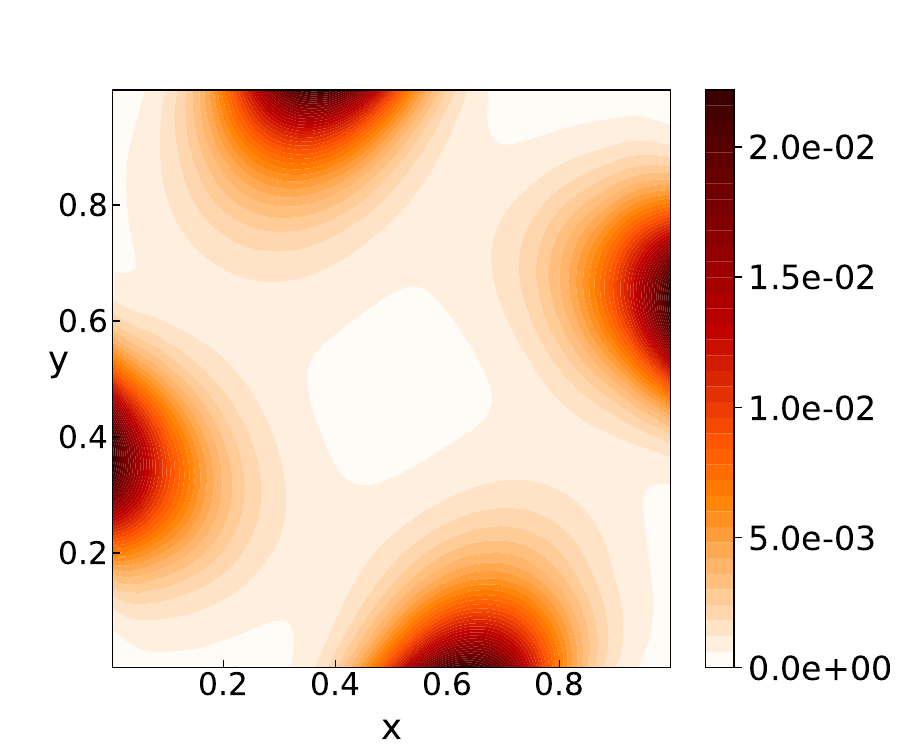}
    \caption{Density $\Phi$ of $\alpha \ANN I_{-}$}
  \end{subfigure}
  \hfill
  \begin{subfigure}[b]{0.3\textwidth}
    \centering
    \includegraphics[width=\textwidth]{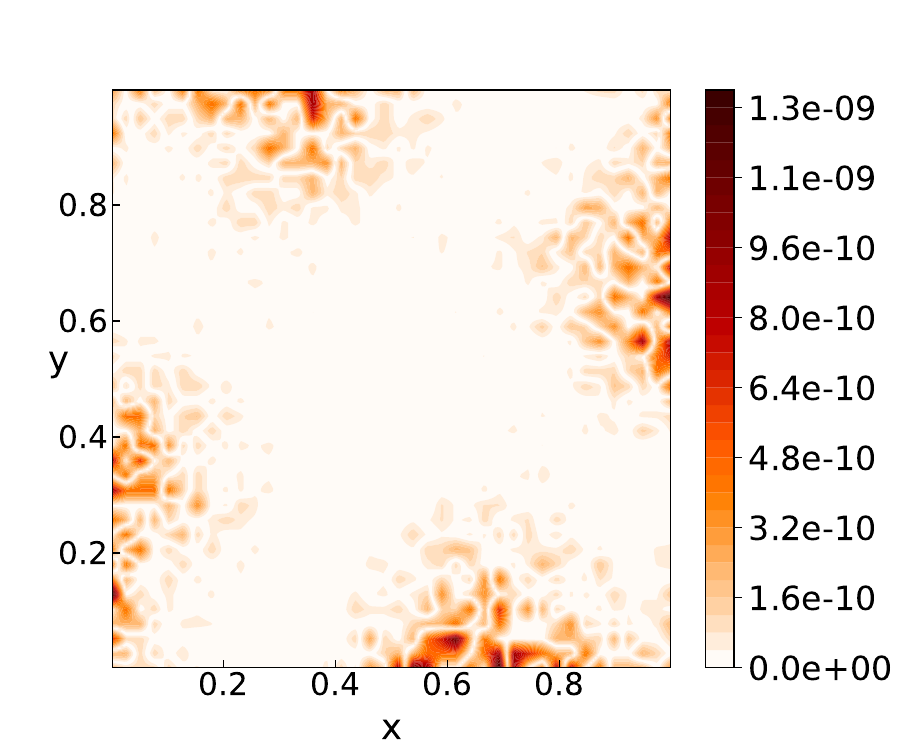}
    \caption{Absolute error}
  \end{subfigure}
  \caption{Homogeneity verification with $g=0.5$, $\mus=3$, $\mut=6$ and $\alpha=5$.}\label{fig:homogeneity}
\end{figure}

To be more general, we also validate the homogeneity property on $100$ samples of validation dataset with identical distribution to the training dataset with $g=0.5$ mentioned in above section (see Sec.~\ref{sec:acc}). In homogeneity test, scale factor $\alpha \sim \mathcal{U}(0,5)$, we multiply this random scale factor with each origin boundary condition $I_-$, and compare the $\ANN (\alpha I^-)$ and $\alpha \ANN I^-$. The MSE and RMSPE between two sides of~\eqref{eq:homogeneity-val} are $2.804\times 10^{-13}$ and $1.01\times 10^{-5}\%$ accordingly, see Table~\ref{tab:linearity}.
\begin{table}[htbp]
  \centering
  \begin{tabular}{lcc}
    \toprule
    & MSE  & RMSPE($\%$)  \\
    \midrule
    Additivity & $1.421\times 10^{-19}$  & $9.20\times 10^{-6}$ \\
    Homogeneity & $2.804\times 10^{-13}$  & $1.01\times 10^{-5}$ \\
    \bottomrule
  \end{tabular}
  \caption{Additivity and homogeneity verification in $100$ samples of validation dataset with identical distribution to the training dataset with $g=0.5$ mentioned in above section (see Sec.~\ref{sec:acc}).}
  \label{tab:linearity}
\end{table}

\textbf{3. Linearity combination}:
The linearity of $\ANN$ established above implies that the generalization error for an arbitrary inflow $I_-$ can be decomposed exactly as in the proof of Theorem~\ref{thm:error-estimate} (see also Eq.~\ref{eq:error_estimate}). In this subsection we design a numerical experiment to empirically verify this decomposition and to assess the relative magnitudes of the constituent terms.

To numerically compare the error, for example, we consider a smooth inflow imposed only on the left boundary and zero elsewhere:
\begin{equation}
  \left \{ 
  \begin{aligned}
    & I_-(x=0,y,c>0,s) = \sin(\pi y), \\
    & I_-(x=1,y,c<0,s) = 0,           \\
    & I_-(x,y=0,c,s>0) = 0,           \\
    & I_-(x,y=1,c,s<0) = 0.
  \end{aligned}
  \right. 
\end{equation}

Following Eq.~\eqref{eq:particle-approx} and Eq.~\eqref{eq:delta_defination}, we only need to approximate the left boundary condition as follows:
\begin{equation}
  I_-(x=0,y,c>0,s) = \sin(\pi y) \approx \sum_{i=0}^{39} w_{i} \delta^{\sigma_{\br}}_{\{y_i\}}(y),
\end{equation}
where $\{y_i\}$ are the centers of uniform discrete cells, $g=0.5$, $\sigma_{\br} = 0.01$. And according to ~\cite{chertock2017practical} Section 2.1,
\begin{equation}
  w_i=\frac{1}{40}\sin(\pi y_i).
\end{equation}

To empirically verify the error estimate in Theorem~\ref{thm:error-estimate}, we use numerical solution as the reference solution $I^{\text{ref}}$ and measure the following errors (note all $L^2$ errors are computed by replacing integrals with quadrature over the discrete spatial--angular grid), starting from the regularized delta function approximation error:
\begin{equation}
  \|I_{-} - I^{40}_{-,0.01}\|_{L^2} \approx \frac{1}{160}\|I_{(x=0,y,c>0,s)} - \sum_{i=0}^{39} w_{i} \delta^{\sigma_{\br}}_{\{y_i\}}(y)\|_{\ell^2}=4.799\times 10^{-5},
\end{equation}
and then
\begin{equation}
  \|\ANN_{\theta^*}(I_{-} - I^{40}_{-,0.01})\|_{L^2}\approx 2.190\times 10^{-7}, \quad
  \|\A(I_{-} - I^{40}_{-,0.01})\|_{L^2} \approx 2.208\times 10^{-7},
\end{equation}
and finally the generalization error
\begin{equation}
  \|(\ANN_{\theta^*} - \mathcal{A}) I^{40}_{-,0.01}\|_{L^2}
  =\|\sum_{i=0}^{39}w_i(\ANN_{\theta^*} - \mathcal{A}) \delta^{\sigma_{\br}}_{\{y_i\}}(y)\|_{L^2} \leq \sum_{i=0}^{39}|w_i|\|(\ANN_{\theta^*} - \mathcal{A}) \delta^{\sigma_{\br}}_{\{y_i\}}(y)\|_{L^2} \approx 8.385\times 10^{-5},
\end{equation}
So the total error is bounded by $8.385\times 10^{-5}$.
We also calculate the direct inference error as
\begin{equation}
  \|(\ANN_{\theta^*} - \mathcal{A}) I_{-}\|_{L^2} \approx 1.888\times 10^{-5}.
\end{equation}
This is lower than our theoretical bound which verifies our results.

\subsubsection{Transfer learning and zero-shot performance}

In this section, we explore the DeepRTE framework's capabilities in transfer learning and Zero-Shot performance. Transfer learning enables a model to leverage knowledge from one task for improved performance on a related task, while Zero-Shot capability allows immediate application to entirely new conditions without retraining. To demonstrate these properties for the Radiative Transfer Equation (RTE)—specifically its ability to adapt to novel boundary conditions with minimal or no additional training—we train the model on datasets featuring delta function boundary conditions. The goal is to learn the RTE's fundamental physical principles. Subsequently, we evaluate the model on boundary conditions sampled from an entirely distinct data distribution (different from those in the training set), assessing its generalization capacity and Zero-Shot performance.

To evaluate the model, we consider the following three specific boundary condition cases:
\begin{itemize}
	\item \textbf{Case I (Constant boundary conditions):} This case tests the model's generalization capability from the delta function boundary conditions used in training to a uniform source. It serves as a baseline for evaluating fundamental generalization.
	\item \textbf{Case II (Trigonometric boundary conditions):} This case introduces spatial changes using a sine function, assessing the model's ability to handle smooth, periodic, and spatially dependent boundary conditions.
	\item \textbf{Case III (Velocity dependent boundary conditions):} This case evaluates the model's performance with more complex, nonlinear boundary conditions that introduce directional dependence.
\end{itemize}

\paragraph{Case I. Constant boundary conditions}
The boundary conditions of the problem are
\begin{equation}
	\left \{
	\begin{aligned}
		 & I_-(x=0,y,c>0,s) =1,  \\
		 & I_-(x=1,y,c<0,s) = 0, \\
		 & I_-(x,y=0,c,s>0) =0,  \\
		 & I_-(x,y=1,c,s<0) =0,
	\end{aligned}
	\right.
\end{equation}
\begin{figure}[htbp]
	\centering
	\begin{subfigure}[b]{0.24\textwidth}
		\centering
		\includegraphics[width=\textwidth]{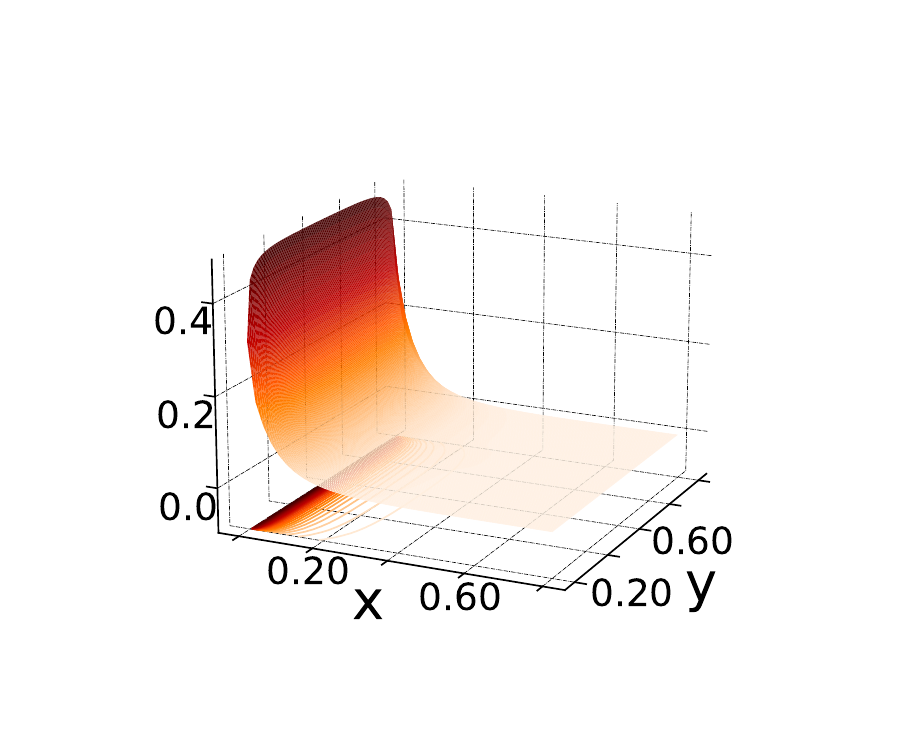}
		\caption{$\Phi_{\text{label}}$ (3D)}\label{fig:caseI_label_3d}
	\end{subfigure}
	\hfill
	\begin{subfigure}[b]{0.24\textwidth}
		\centering
		\includegraphics[width=\textwidth]{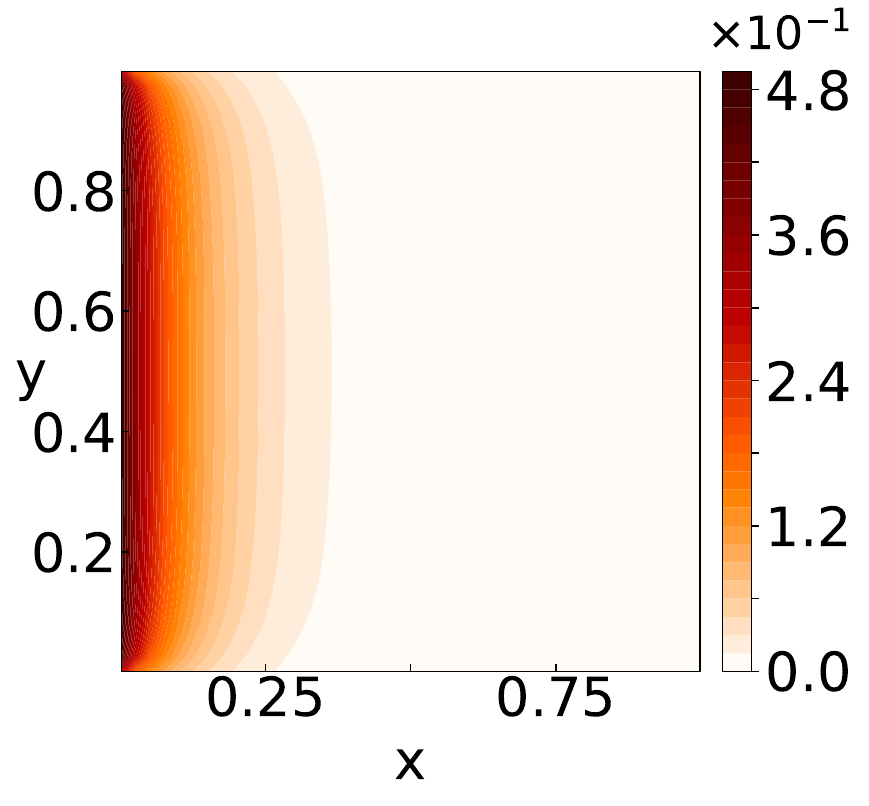}
		\caption{$\Phi_{\text{label}}$}\label{fig:caseI_label}
	\end{subfigure}
	\hfill
	\begin{subfigure}[b]{0.24\textwidth}
		\centering
		\includegraphics[width=\textwidth]{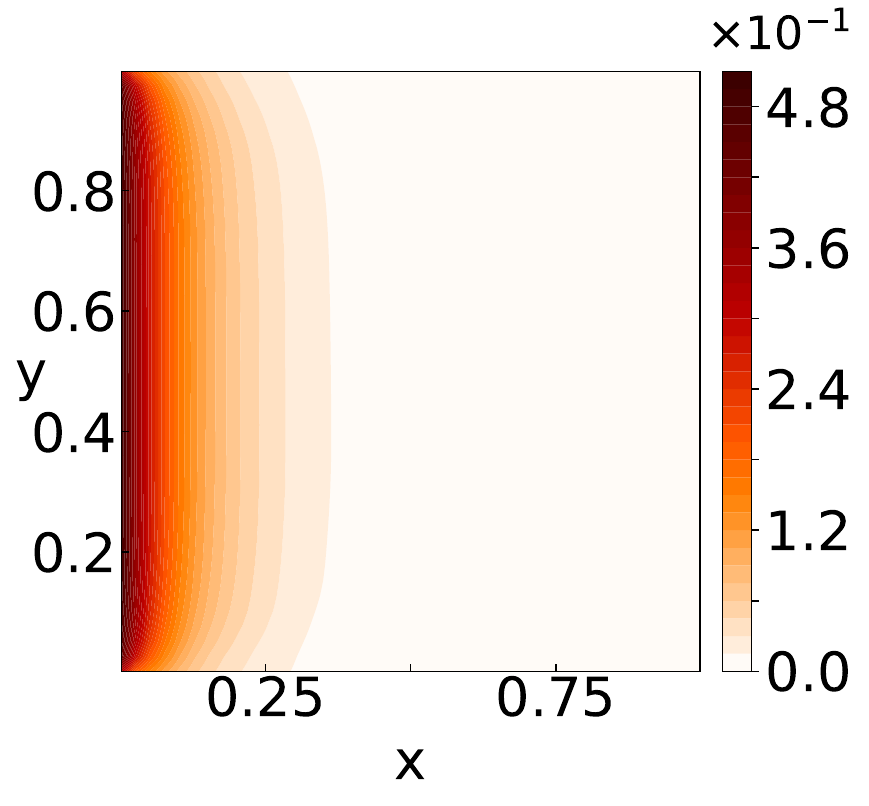}
		\caption{$\Phi_{\text{predict}}$}\label{fig:caseI_predict}
	\end{subfigure}
	\hfill
	\begin{subfigure}[b]{0.24\textwidth}
		\centering
		\includegraphics[width=\textwidth]{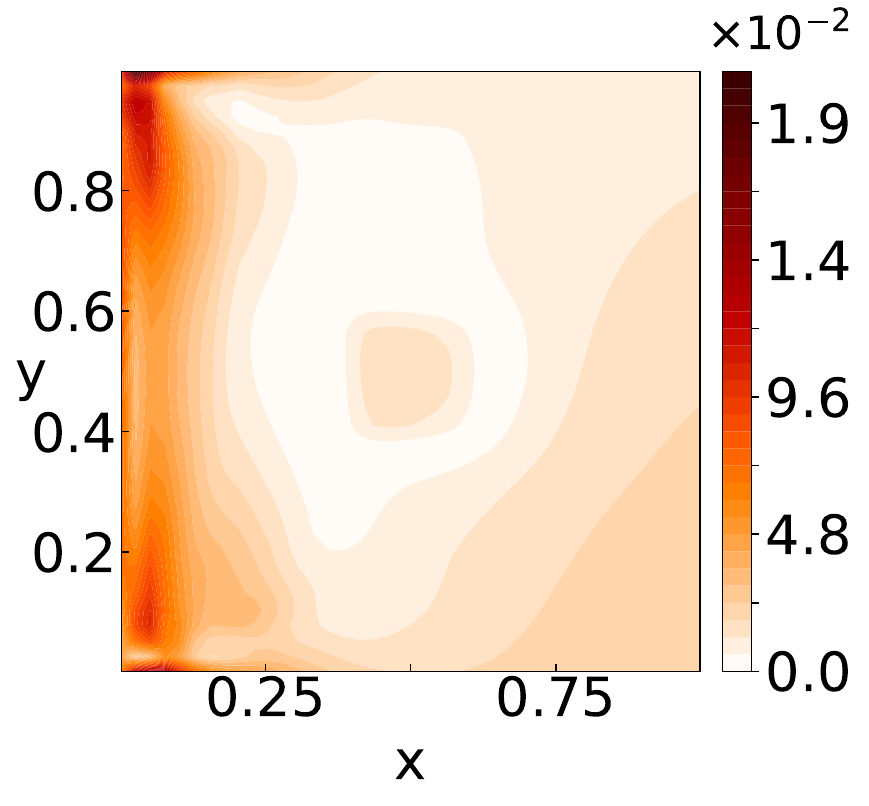}
		\caption{Absolute error}\label{fig:caseI_error}
	\end{subfigure}
	\caption{Evaluation dataset with scattering kernel coefficient
		$g\in (0,0.2)$. The RMSPE of the predict solution is 2.573\%}
\end{figure}

\paragraph{Case II.\@ Trigonometric boundary conditions}
The boundary conditions of the problem are
\begin{equation}
	\left \{
	\begin{aligned}
		 & I_-(x=0,y,c>0,s) =a_L\sin{k_L y}+5,   \\
		 & I_-(x=1,y,c<0, s) = a_R\sin{k_R y}+5, \\
		 & I_-(x,y=0,c,s>0) =a_B\sin{k_B x}+5,   \\
		 & I_-(x,y=1,c,s<0) =a_T\sin{k_T x}+5,
	\end{aligned}
	\right.
\end{equation}

where
\begin{equation}
	a_L, a_R, a_B, a_T \sim \mathcal{U}(-5,5), \quad
	k_L, k_R, k_B, k_T \sim\mathcal{U}(-10, 10).
\end{equation}
\begin{figure}[htbp]
	\centering
	\begin{subfigure}[b]{0.24\textwidth}
		\centering
		\includegraphics[width=\textwidth]{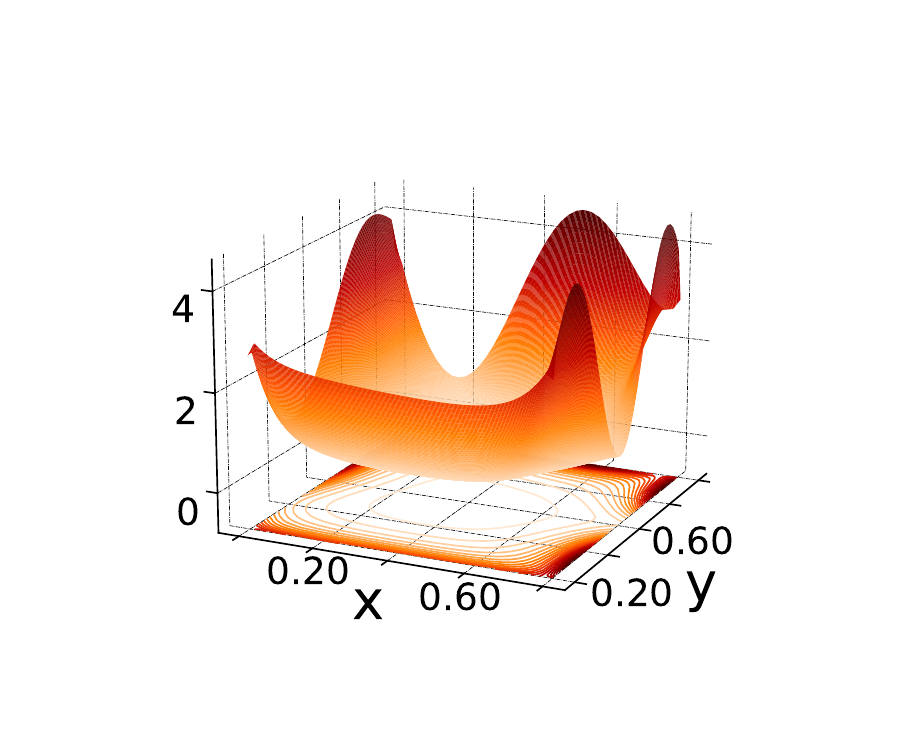}
		\caption{$\Phi_{\text{label}}$ (3D)}\label{fig:caseII_label_3d}
	\end{subfigure}
	\hfill
	\begin{subfigure}[b]{0.24\textwidth}
		\centering
		\includegraphics[width=\textwidth]{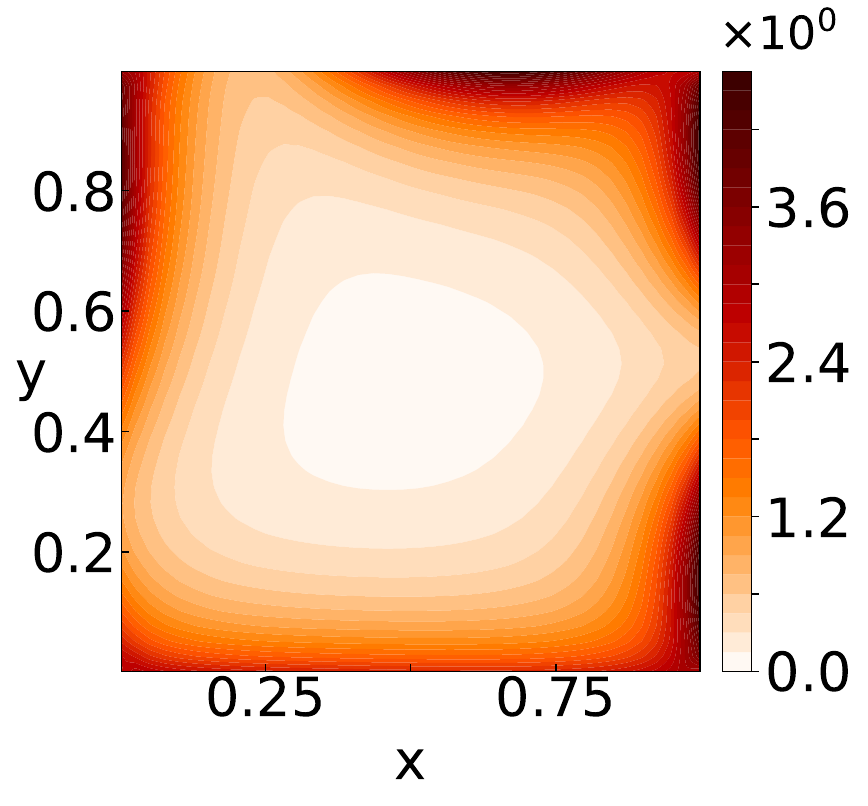}
		\caption{$\Phi_{\text{label}}$}\label{fig:caseII_label}
	\end{subfigure}
	\hfill
	\begin{subfigure}[b]{0.24\textwidth}
		\centering
		\includegraphics[width=\textwidth]{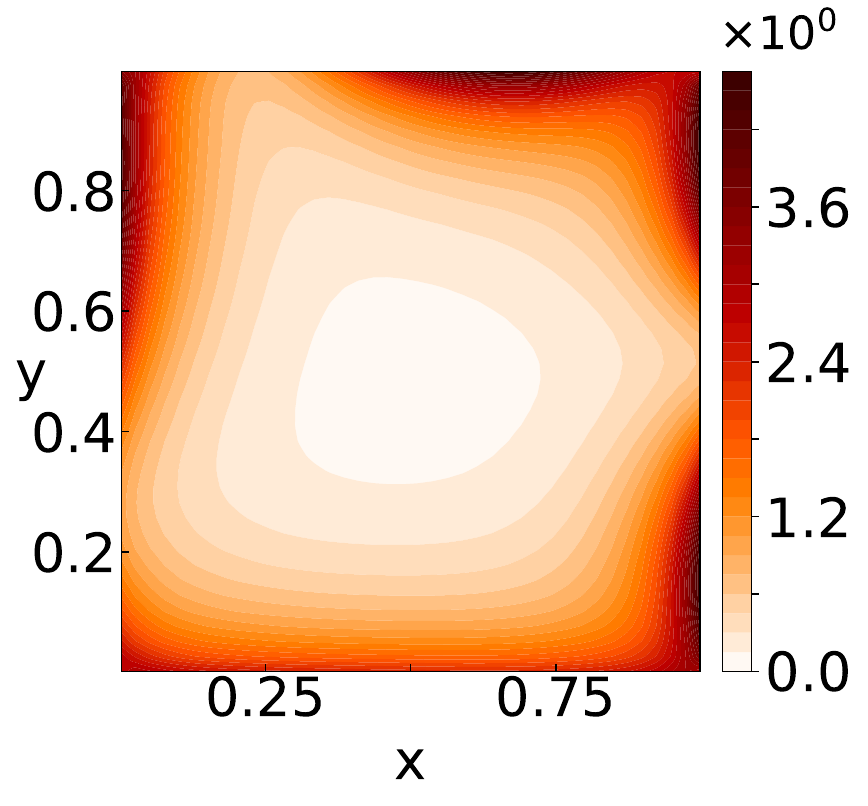}
		\caption{$\Phi_{\text{predict}}$}\label{fig:caseII_predict}
	\end{subfigure}
	\hfill
	\begin{subfigure}[b]{0.24\textwidth}
		\centering
		\includegraphics[width=\textwidth]{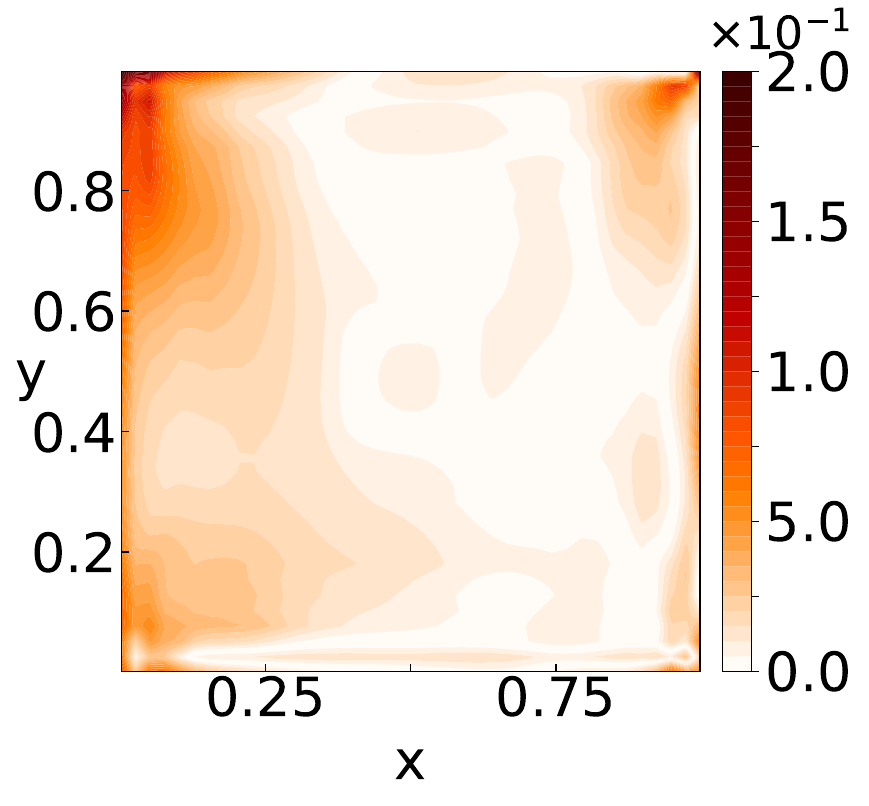}
		\caption{Absolute error}\label{fig:caseII_error}
	\end{subfigure}
	\caption{Evaluation data set with scattering kernel coefficient
		$g\in (0,0.2)$. The RMSPE of the predict solution is 1.968\%}
\end{figure}

\paragraph{Case III.\@ Velocity dependent boundary conditions}The boundary conditions of the problem are
\begin{equation}
	\left \{
	\begin{aligned}
		 & I_-(x=0,y,c>0,s) =(a_{Lr}\sin{k_{Lr} y}+5)(a_{Lv}\sin{k_{Lv} c}+1)(a_{Lv}\sin{k_{Lv} s}+1),  \\
		 & I_-(x=1,y,c<0,s) = (a_{Rr}\sin{k_{Rr} y}+5)(a_{Rv}\sin{k_{Rv} c}+1)(a_{Rv}\sin{k_{Rv} s}+1),
		\\
		 & I_-(x,y=0,c,s>0) =(a_{Br}\sin{k_{Br} x}+5)(a_{Bv}\sin{k_{Bv} c}+1)(a_{Bv}\sin{k_{Bv} s}+1),
		\\
		 & I_-(x,y=1,c,s<0) =(a_{Tr}\sin{k_{Tr} x}+5)(a_{Tv}\sin{k_{Tv} c}+1)(a_{Tv}\sin{k_{Tv} s}+1),
	\end{aligned}
	\right.
\end{equation}

where
\begin{equation}
	\begin{aligned}
		a_{Lr}, a_{Rr}, a_{Br}, a_{Tr} & \sim \mathcal{U}(-5,5),   \\
		a_{Lv}, a_{Rv}, a_{Bv}, a_{Tv} & \sim \mathcal{U}(-1,1),   \\
		k_{Lr}, k_{Rr}, k_{Br}, k_{Tr} & \sim\mathcal{U}(-10, 10), \\
		k_{Lv}, k_{Rv}, k_{Bv}, k_{Tv} & \sim\mathcal{U}(-6, 6).
	\end{aligned}
\end{equation}
\begin{figure}[htbp]
	\centering
	\begin{subfigure}[b]{0.24\textwidth}
		\centering
		\includegraphics[width=\textwidth]{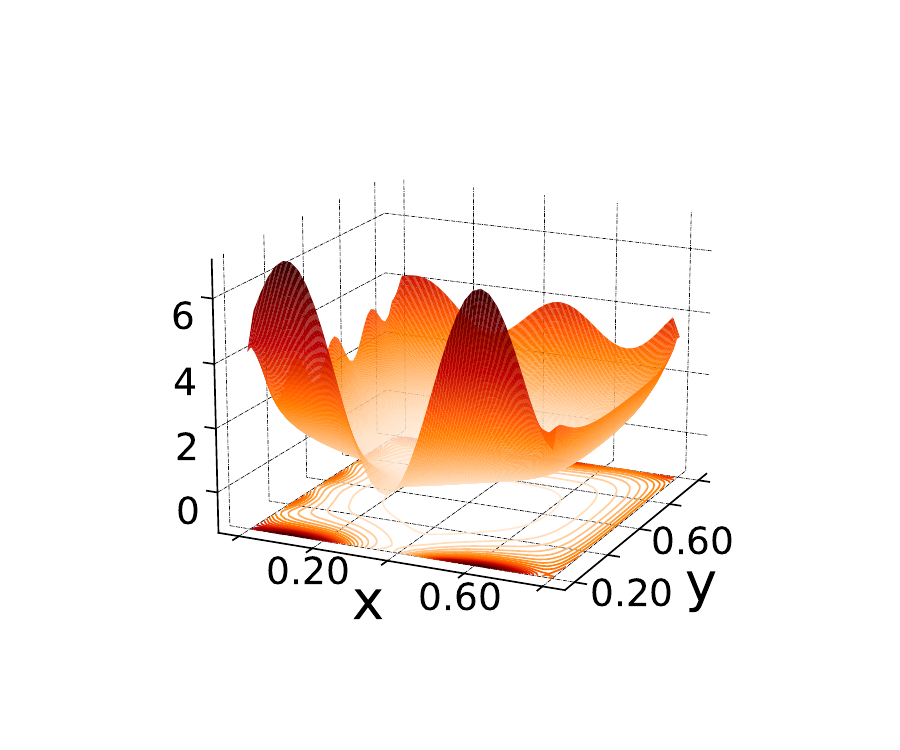}
		\caption{$\Phi_{\text{label}}$ (3D)}\label{fig:caseIII_label_3d}
	\end{subfigure}
	\hfill
	\begin{subfigure}[b]{0.24\textwidth}
		\centering
		\includegraphics[width=\textwidth]{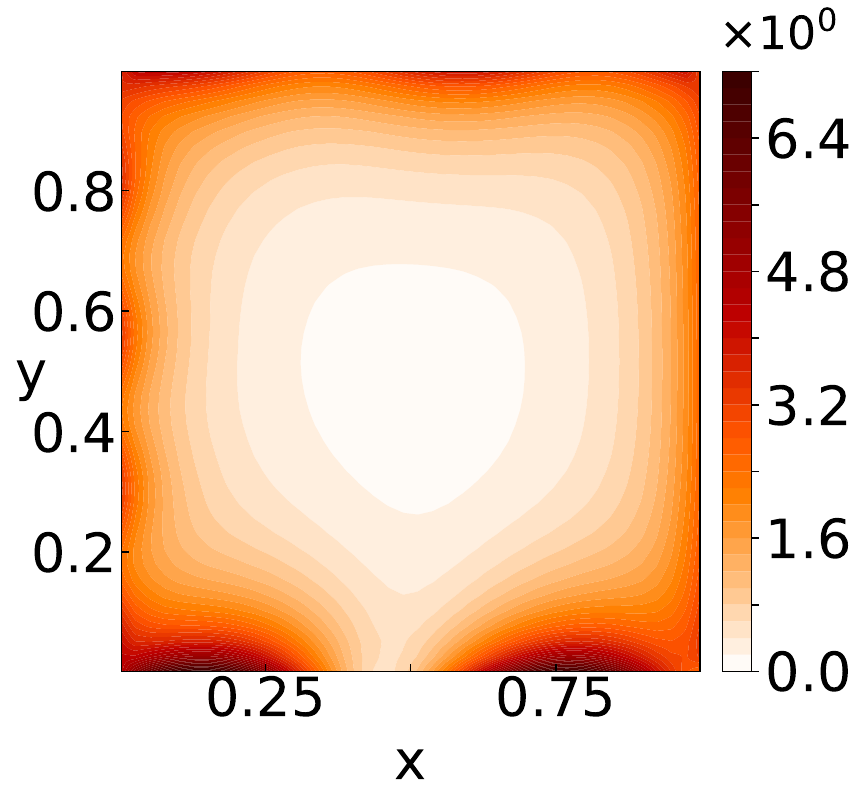}
		\caption{$\Phi_{\text{label}}$}\label{fig:caseIII_label}
	\end{subfigure}
	\hfill
	\begin{subfigure}[b]{0.24\textwidth}
		\centering
		\includegraphics[width=\textwidth]{test_sin_rv_g0.1_pre.pdf}
		\caption{$\Phi_{\text{predict}}$}\label{fig:caseIII_predict}
	\end{subfigure}
	\hfill
	\begin{subfigure}[b]{0.24\textwidth}
		\centering
		\includegraphics[width=\textwidth]{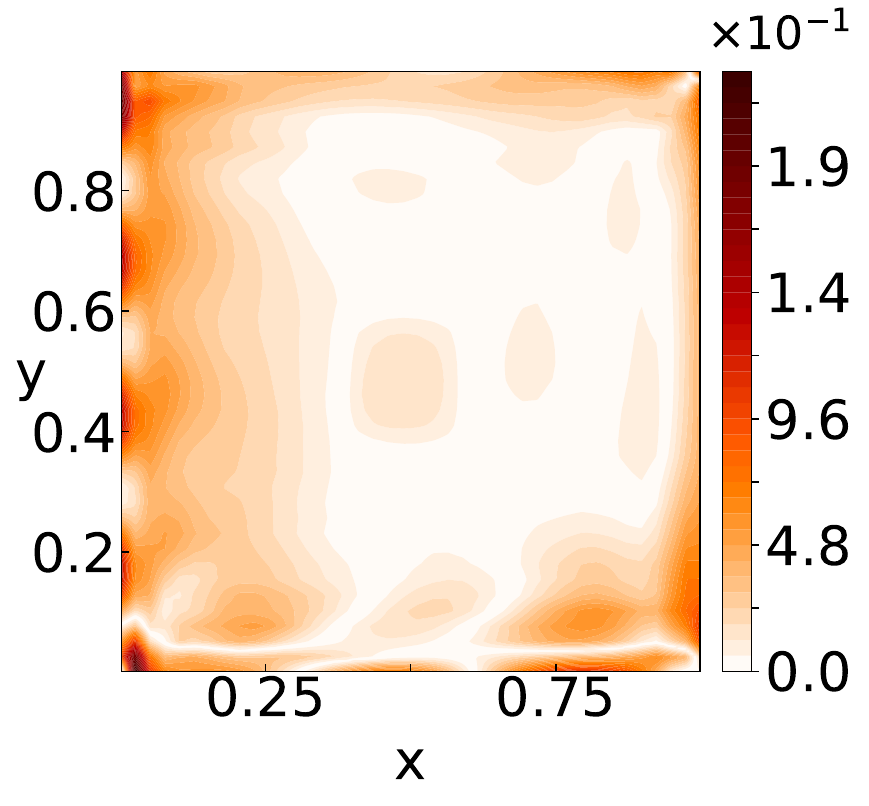}
		\caption{Absolute error}\label{fig:caseIII_error}
	\end{subfigure}
	\caption{Evaluation data set with scattering kernel
		coefficient $g\in (0,0.2)$. The RMSPE of the predict
		solution is 1.908\%}
\end{figure}

\begin{table}[htbp]
	\centering
	\begin{tabular}{@{}cccc@{}}
		\toprule
		\textbf{}                 & Test dataset    & MSE                    & RMSPE (\%) \\ \midrule
		\multirow{3}{*}{Case I}   & $g\in(0,0.2)$   & $4.390 \times 10^{-6}$ & 1.833      \\
		                          & $g\in(0.4,0.6)$ & $5.184 \times 10^{-6}$ & 1.994      \\
		                          & $g\in(0.7,0.9)$ & $1.474 \times 10^{-5}$ & 3.193      \\ \midrule
		\multirow{3}{*}{Case II}  & $g\in(0,0.2)$   & $4.931 \times 10^{-4}$ & 1.653      \\
		                          & $g\in(0.4,0.6)$ & $5.798 \times 10^{-4}$ & 1.827      \\
		                          & $g\in(0.7,0.9)$ & $2.870 \times 10^{-3}$ & 3.572      \\ \midrule
		\multirow{3}{*}{Case III} & $g\in(0,0.2)$   & $1.065 \times 10^{-3}$ & 2.383      \\
		                          & $g\in(0.4,0.6)$ & $1.127 \times 10^{-3}$ & 2.452      \\
		                          & $g\in(0.7,0.9)$ & $1.853 \times 10^{-3}$ & 3.069      \\ \bottomrule
	\end{tabular}
	\caption{Transfer learning and Zero-Shot performance. The model, trained on delta function boundary conditions, is tested on constant (Case I), trigonometric (Case II), and combined trigonometric (Case III) boundary conditions for $g$ in ranges (0,0.2), (0.4,0.6), and (0.7,0.9). Performance is measured using mean square error (MSE) and root mean square percentage error (RMSPE), demonstrating robust generalization with errors below 3.2\% for Case I, 3.6\% for Case II, and 3.1\% for Case III.}
\end{table}

\paragraph{Out-of-distribution possibility for the scattering kernel}
One last experiment we conducted is to test the performance of DeepRTE when the scattering kernel is outside the distribution of our pretraining dataset.
We test the performance as $g$ approaches $1$ and present the results for $g = 0.99$ under the following experimental setup:
\begin{equation}
	\mu_t(x,y) =
	\begin{cases}
		6, & \text{if } (x,y) \in D_\mu    \\
		10 & \text{if } (x,y) \notin D_\mu
	\end{cases},
	\quad
	\mu_s(x,y) =
	\begin{cases}
		3, & \text{if } (x,y) \in D_\mu    \\
		5  & \text{if } (x,y) \notin D_\mu
	\end{cases},
	\quad \text{where } D_\mu = [0.4, 0.6]^2,
\end{equation}
and the incoming boundary conditions are:
\begin{equation}
	\left \{
	\begin{aligned}
		 & I_-(x=0,y,c>0,s) =(5\sin{2\pi y}+5)(\sin{\pi c}+1)(\sin{\pi s}+1),  \\
		 & I_-(x=1,y,c<0,s) = (5\sin{2\pi y}+5)(\sin{\pi c}+1)(\sin{\pi s}+1),
		\\
		 & I_-(x,y=0,c,s>0) =(5\sin{2\pi x}+5)(\sin{\pi c}+1)(\sin{\pi s}+1),
		\\
		 & I_-(x,y=1,c,s<0) =(5\sin{2\pi x}+5)(\sin{\pi c}+1)(\sin{\pi s}+1).
	\end{aligned}
	\right.
\end{equation}

\begin{figure}[htbp]
	\centering
	\begin{subfigure}[t]{0.32\textwidth}
		\centering
		\includegraphics[width=\linewidth]{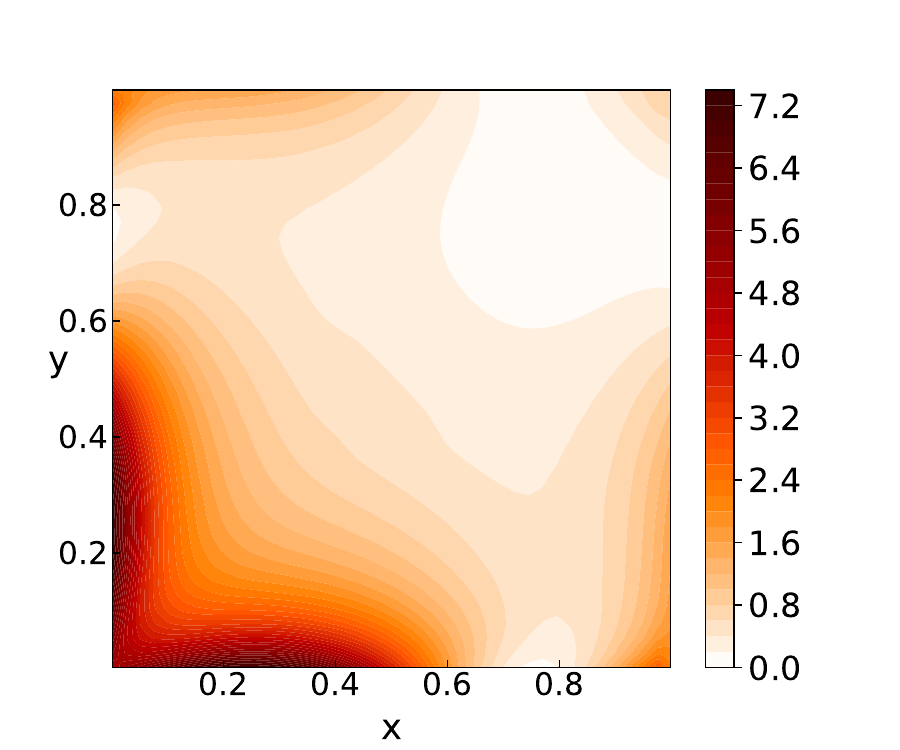}
		\caption{$\Phi_{\text{label}}$}
		\label{fig:deeprte-g099-label}
	\end{subfigure}\hfill
	\begin{subfigure}[t]{0.32\textwidth}
		\centering
		\includegraphics[width=\linewidth]{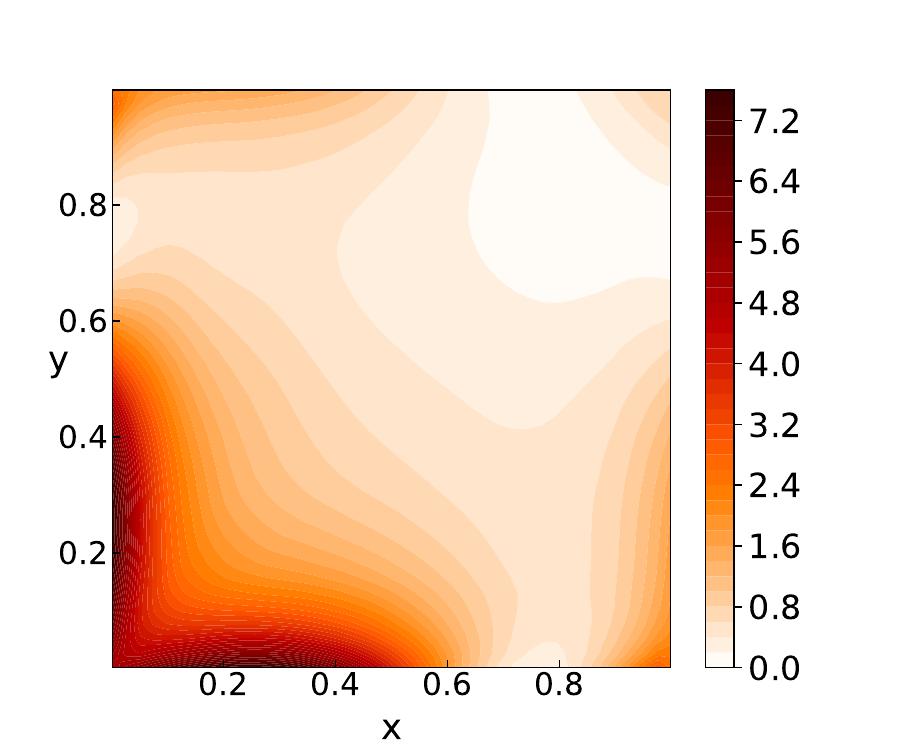}
		\caption{$\Phi_{\text{predict}}$}
		\label{fig:deeprte-g099-predict}
	\end{subfigure}\hfill
	\begin{subfigure}[t]{0.32\textwidth}
		\centering
		\includegraphics[width=\linewidth]{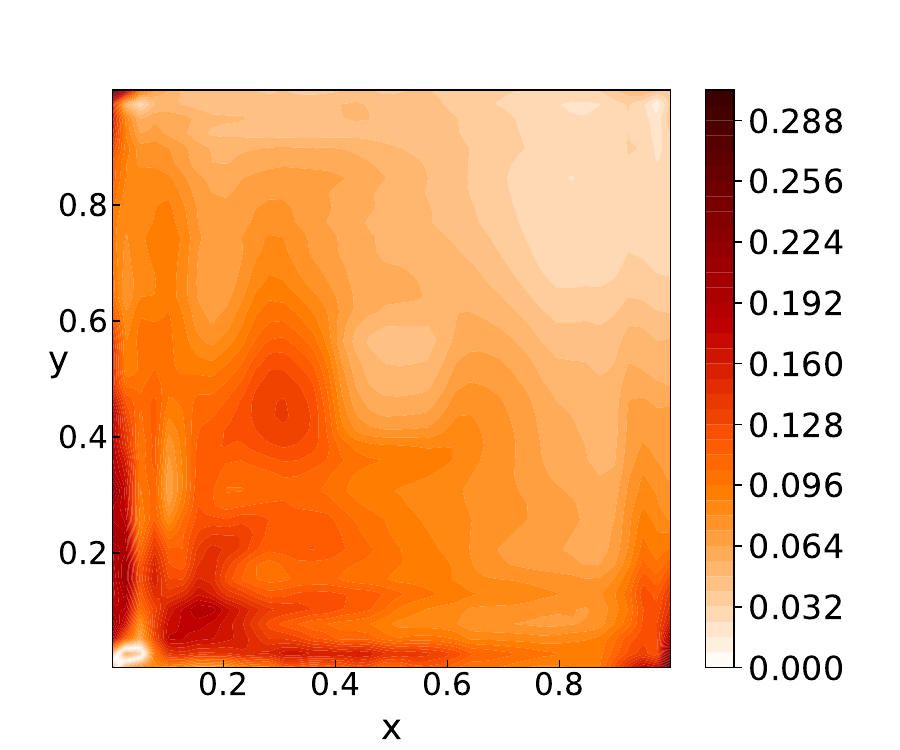}
		\caption{Absolute error}
		\label{fig:deeprte-g099-error}
	\end{subfigure}
	\caption{DeepRTE evaluation ($g=0.99$).}
	\label{fig:deeprte-g099}
\end{figure}

The figure above illustrates the reference solution $\Phi_{\text{label}}$, the predicted solution $\Phi_{\text{predict}}$, and the absolute error between them. Despite the challenging scenario with $g$ extremely close to 1, DeepRTE demonstrates remarkable accuracy. The quantitative metrics, including MSE of $0.00764$ and RMSPE of $4.257\%$, confirm DeepRTE's effectiveness and robustness in handling near-unity anisotropy. This performance highlights DeepRTE's capability to accurately solve the radiative transfer equation even under highly-peaked regime.

In conclusion, DeepRTE's transfer learning and zero-shot capabilities demonstrate its strength and effectiveness across different conditions for solving the radiative transport equation.
By applying the core physical principles learned during training, DeepRTE can accurately predict solutions for entirely new boundary conditions, even without additional training.
This ability to generalize and adapt to new problems highlights the potential of deep learning frameworks to advance scientific computing.

\subsection{Mesh dependence and cross-resolution performance}

An important characteristic of any numerical method for solving partial differential equations is its performance across various mesh resolutions. In this section, we test the mesh independence property of DeepRTE. This capability allows parameters learned on fine meshes to be effectively applied to coarser meshes without significant loss of accuracy.

\begin{table}[htbp]
	\centering
	\begin{tabular}{cccc}
		\toprule
		Test dataset & Mesh resolution & MSE                    & RMSPE (\%) \\
		\midrule
		\multirow{3}{*}{Evaluation Dataset}
		             & $40\times 40$   & $5.453\times 10^{-10}$ &
		$2.759$                                                              \\
		             & $20\times 20$   & $8.235 \times 10^{-9}$ &
		$10.006$                                                             \\
		             & $10\times 10$   & $9.476 \times 10^{-8}$ &
		$34.346$                                                             \\
		\midrule
		\multirow{3}{*}{Case I}
		             & $40\times 40$   & $4.390 \times 10^{-6}$ &
		$1.833$                                                              \\
		             & $20\times 20$   & $1.876 \times 10^{-5}$ &
		$3.758$                                                              \\
		             & $10\times 10$   & $1.243 \times 10^{-4}$ &
		$9.276$                                                              \\
		\midrule
		\multirow{3}{*}{Case II}
		             & $40\times 40$   & $4.931 \times 10^{-4}$ &
		$1.653$                                                              \\
		             & $20\times 20$   & $1.792 \times 10^{-2}$ &
		$9.952$                                                              \\
		             & $10\times 10$   & $3.687 \times 10^{-2}$ &
		$13.798$                                                             \\
		\midrule
		\multirow{3}{*}{Case III}
		             & $40\times 40$   & $1.065 \times 10^{-3}$ &
		$2.383$                                                              \\
		             & $20\times 20$   & $1.175 \times 10^{-2}$ &
		$8.132$                                                              \\
		             & $10\times 10$   & $4.511 \times 10^{-2}$ &
		$15.477$                                                             \\
		\bottomrule
	\end{tabular}
	\caption{Performance of DeepRTE across different resolutions and test sets. The Evaluation Dataset shows very high precision, with an MSE of $5.45\times10^{-10}$ on the $40\times 40$ grid. In contrast, Cases I-III show a consistent increase in error on coarser grids. This increase is mainly because of the limitations of numerical integration. With the second-order midpoint rule, halving the grid resolution should theoretically make the integration error four times larger, which matches our findings.}\label{tab:mesh_independence_multiple_testsets}
\end{table}

For the evaluation dataset and Case I, the root mean square percentage error rises from $1.8$--$2.8\%$ on the $40\times 40$ grid to $3.8$--$10\%$ on the $20\times 20$ grid. Cases II and III exhibit a larger but still manageable increase, with errors growing from $1.7$--$2.4\%$ to $8$--$10\%$ on the coarser grid.

This fourfold error increase is expected due to the theoretical limitations of the second-order midpoint rule used for the outermost numerical integration. This represents a key finding: DeepRTE has successfully learned the fundamental solution for problems with delta-function inflow boundary conditions, and the observed errors primarily originate from the numerical integration method rather than the neural network's approximation of the solution. Consequently, accuracy can be improved in the future by adopting more advanced numerical integration methods in place of the current midpoint approach. We plan to explore this further.

DeepRTE's ability to maintain good accuracy across different mesh resolutions suggests that the network is learning the basic physical principles of the radiative transport equation, not just fitting to a specific grid setup. This mesh independence has several important benefits:
\begin{enumerate}
	\item Computational efficiency: Users can train the model on
	      fine meshes for high accuracy, but use it on coarser
	      meshes for faster calculations when lower accuracy is acceptable.
	\item Flexibility: The same trained model can be used for
	      problems with different mesh needs without retraining.
\end{enumerate}

\section{Ablation study and comparison with baseline model}\label{sec:ablation-study}
Our DeepRTE architecture is designed to mirror the mathematical structure of the RTE solution by decomposing it into an attenuation component that models transport along characteristics (operators $\J$ and $\cL$) and a scattering component that captures scattering behavior ($\cS$), as described in Section~\ref{sec:architecture}.
Each component is implemented via distinct neural modules within DeepRTE: the Attenuation module (Section~\ref{sec:attenuation-module}) and the Scattering module (Section~\ref{sec:scattering-module}).
To validate this design choice, we conduct carefully designed numerical experiments to assess whether each learned component corresponds directly to its analytical counterpart.
As a control, we also compare DeepRTE against a baseline model that does not incorporate any RTE-specific inductive biases.

We compare DeepRTE against the Multi-Input Operator (MIO)~\cite{jin2022mionet} (an operator learning framework derived from DeepONet to accept multiple input functions) as our baseline model, which employs a monolithic architecture without specialized design for the RTE. We choose MIO as the baseline for the following reasons:
\begin{itemize}
	\item MIO is one of the most commonly used operator learning frameworks that can handle multi-function inputs, making it suitable for learning the RTE solution operator that depends on multiple input functions ($I_-$, $\mut$, $\mus$, and $p$).
	\item MIO (DeepONet) can theoretically approximate any continuous operator, including the RTE solution operator, given sufficient training data and network capacity~\cite{lu2021learning}.
	\item MIO does not incorporate any physics-informed inductive biases specific to the RTE, allowing us to isolate the impact of DeepRTE's architecture that reflects the RTE's mathematical structure.
\end{itemize}
While FNO~\cite{li2020fourier} is another potential baseline, it is not suitable for this task due to several fundamental limitations:
\begin{itemize}
	\item FNO is designed for learning operators that map functions on regular grids to functions on the same grid, which does not align with the RTE's requirements for handling angular variables and complex boundary conditions.
	\item FNO relies on Fourier transforms to capture global (non-local) information, but this approach faces significant challenges in the high-dimensional RTE solution space (spatial and angular dimensions), leading to prohibitive computational and memory costs. In our preliminary tests, these costs made FNO training infeasible within our computational budget.
	\item The RTE operates on phase space involving both spatial and angular variables, whereas FNO typically applies to spatial domains only. Adapting FNO to handle angular dependencies would require non-trivial modifications (e.g., determining whether to apply Fourier transforms in the angular domain), which are not straightforward and may not be effective for this specific problem.
\end{itemize}

Due to these reasons, we focus our comparison on MIO as the baseline model. We first validate the effectiveness of the attenuation and scattering modules in DeepRTE through ablation studies. We then compare the overall performance of DeepRTE against MIO on the RTE solution operator learning task.

\subsection{Ablation study of Attenuation module}

The attenuation module models the transport behavior of the RTE without scattering effects.
This behavior is characterized by two key processes: the attenuation of boundary conditions through the total cross section $\mut$ (operator $\J$, Eq.~\eqref{eq:attenuation-op}) and the attenuation of volumetric sources through the scattering cross section $\mus$ (lifting operator $\mathcal{L}$, Eq.~\eqref{eq:lifting-op}).

The fundamental challenge lies in the fact that the solution operator $\A$'s dependence on $\mus$ and $\mut$ is both nonlinear and exhibits \emph{non-local behavior exclusively along characteristic lines}.
Conventional architectures like MIO only capture general non-local dependencies through branch networks and inner products with trunk networks, which becomes computationally expensive as spatial and angular dimensions increase. More critically, this approach fails to accurately capture the RTE's characteristic-based attenuation behavior and may not generalize effectively.

In contrast, DeepRTE incorporates a specialized attenuation module that explicitly models this non-local dependence along characteristic lines (which remain one-dimensional regardless of problem dimensionality) through carefully designed attention mechanisms, as detailed in Section~\ref{sec:attenuation-module}.
To validate the effectiveness of this physics-informed design, we conduct two targeted experiments: one examining the attenuation operator $\J$ and another focusing on the lifting operator $\cL$.

\textbf{1. Ablation study of operator $\J$.}
We isolate the attenuation operator $\J$ by considering a pure attenuation scenario without scattering, governed by~\eqref{eq:rte-sweep}:
\begin{equation}
	\left\{ 
	\begin{aligned}
		\bOmega \cdot \nabla J_b + \mut J_b & = 0,     \\
		J_b|_{\Gamma_{-}}                   & = I_{-},
	\end{aligned}
	\right. 
\end{equation}
which has the analytical formulation~\eqref{eq:attenuation-op}:
\begin{equation}
	\J: (I_{-}; \mut) \mapsto J_b = e^{-\tau_{\br,\bOmega}(0,s_{-}(\br,\bOmega))} I_{-}\left(\br-s_{-}(\br, \bOmega)
	\bOmega, \bOmega\right).
\end{equation}

According to our architecture design in Section~\ref{sec:attenuation-module}, we use the Attenuation module to learn the mapping $\J^{\text{NN}}$ that approximates $\J$ as follows:
\begin{equation}
	J_b(\br,\bOmega) \approx \J^{\text{NN}}I_{-} := \int_{\Gamma_{-}} G^{\text{NN}}_{\J}(\br,\bOmega, \br',\bOmega'; \mut) I_-(\br',\bOmega')\diff{\br'}\diff{\bOmega'},
\end{equation}
where $G^{\text{NN}}_{\J}$ is analogous to the general attenuation Green's function~\eqref{G_equation}, but simplified to depend only on $\tau^{\text{NN}}_{-,t}$ as defined in~\eqref{eq:optical-depth-net}. Since the attenuation operator $\J$ depends exclusively on $\mut$ (and not on $\mus$), we formulate:
\begin{equation}
	G_{\J}^{\text{NN}}(\br,\bOmega, \br', \bOmega';\mut) = \text{MLP}(\br,\bOmega, \br', \bOmega', \tau^{\text{NN}}_{-,t}) \in \mathbb{R}^1.
\end{equation}

In our numerical experiments, we implement this by removing all scattering-related components: we set the number of scattering blocks $N_\ell = 0$ and reduce the dimension of $\mu$ from $2$ to $1$, effectively disabling attention mechanisms that depend on $\mu_s$.

To validate that our architecture actually capture the attenuation effect, we compare it with the MIO that doesn't have any RTE specialized architecture. The MIO network we use here is quite standard as, see~\cite{jin2022mionet}:
\begin{equation}
	J_b(\br,\bOmega) \approx \text{MIO}_{\J}(\mut,I_{-}) := (\text{MLP}^{\text{branch}}_1(\mut)\odot\text{MLP}^{\text{branch}}_2(I_-))\cdot \text{MLP}^{\text{trunk}}(\br,\bOmega).
\end{equation}

For dataset construction, except for $\mus$ and $g$, all other settings remain unchanged (see Table~\ref{tab:dataset}). We use the same numerical solver as in~\ref{sec:dataset-construction} except that we disable scattering by setting $\mus = 0$ and set $g$ to be irrelevant (not used).
Following these settings, we generate $1000$ samples for training and $100$ i.i.d. samples for testing.

\textbf{2. Ablation study of operator $\cL$.} Similar to $\J$, we isolate the lifting operator $\cL$ by considering a pure lifting scenario without boundary inflow, governed by~\eqref{eq:lifting}:
\begin{equation}
	(\bOmega\cdot\nabla + \mut)J = \mus I, \quad  J|_{\Gamma_{-}} = 0,
\end{equation}
whose analytical formulation is given by~\eqref{eq:lifting-op}:
\begin{equation}
	\cL:(I;\mus,\mut) \mapsto J = \int_0^{s_{-}(\br, \bOmega)} e^{-\tau_{\br,\bOmega}(0,s)}\mus(\br-s\bOmega)I(\br-s\bOmega, \bOmega)\diff{s}.
\end{equation}

According to our architecture design in Section~\ref{sec:attenuation-module}, we use the Attenuation module to learn the mapping $\cL^{\text{NN}}$ that approximates $\cL$ as follows:
\begin{equation}
	J(\br,\bOmega) \approx \cL^{\text{NN}}I := \int_{D\times\sS^{d-1}}
	G_{\cL}^{\text{NN}}(\br,\bOmega,\br',\bOmega';\mut,\mus)I(\br',\bOmega')\diff{\br'}\diff{\bOmega'},
\end{equation}
where $G^{\text{NN}}_{\cL}$ is the attenuation Green's function~\eqref{G_equation}. Since the lifting operator $\cL$ depends on both $\mut$ and $\mus$, according to~\eqref{G_equation} we formulate:
\begin{equation}
	G_{\cL}^{\text{NN}}(\br,\bOmega, \br', \bOmega';\mut,\mus) = \bG^{\text{NN}}(\br,\bOmega, \br', \bOmega', \tau^{\text{NN}}_{-})\bm{W}\in\mathbb{R}^1,
\end{equation}
where $\tau^{\text{NN}}_{-}$ is defined in~\ref{alg:optical-depth-net}, also see Remark~\ref{rmk:optical-depth-encoding}. The output dimension of $\bG^{\text{NN}}$ is $d_\text{model}$ and $\bm{W} \in \mathbb{R}^{d_{\text{model}}\times 1}$ is a learnable weight matrix that projects outputs into a single output channel, like the $\bm{W}$ in~\eqref{scattering_layer}.

Similar to $\J$, we compare it with the $\text{MIO}_{\cL}$ that doesn't have any RTE specialized architecture:
\begin{equation}
	J(\br,\bOmega) \approx \text{MIO}_{\cL}(\mut,\mus;I) := (\text{MLP}^{\text{branch}}_1(\mut)\odot\text{MLP}^{\text{branch}}_2(\mus)\odot\text{MLP}^{\text{branch}}_3(I))\cdot \text{MLP}^{\text{trunk}}(\br,\bOmega).
\end{equation}

For dataset construction, except for $g$ (not needed) and boundary conditions $I_- = 0$ (set to 0), all other settings remain unchanged (see Table~\ref{tab:dataset}).
For the source term $I(\br,\bOmega)$, we generate a two-dimensional Gaussian random field in the frequency domain using spectral synthesis~\cite{liu2019advances, ruan1998efficient}.
We use the same numerical solver as in Sec.~\ref{sec:dataset-construction} except that we replace the scattering with $\mus I$.
Following these settings, we generate $1000$ samples for training and $100$ i.i.d. samples for testing.

\textbf{Results.} The results in Table~\ref{tab:j_l_compare} demonstrate that our Attenuation module $\J^{\text{NN}}$ and $\cL^{\text{NN}}$ substantially outperform $\text{MIO}_{\J}$ and $\text{MIO}_\cL$ across all metrics. Specifically, $\J^{\text{NN}}$ and $\cL^{\text{NN}}$ achieve over an order of magnitude improvement in both training and test MSE, while requiring $34$ times fewer parameters ($37,282$ vs. $4956160$ and $139,362$ vs. $4,792,320$). The test RMSPE is reduced from $44.251\%$, $25.350\%$ for MIO to $2.339\%$, $1.327\%$ for our approach. This stark performance difference validates that the design of our Attenuation module effectively captures the characteristic-based transport behavior inherent in operator $\J$ and $\cL$, whereas the generic MIO architecture fails to learn this specialized mapping efficiently.
\begin{table}
	\centering
	\begin{tabular}{@{}cccccccc@{}}
		\toprule
		\multirow{2}{*}{Op}    & \multirow{2}{*}{Model} & \multirow{2}{*}{\# of Parameters} & \multicolumn{3}{c}{Training} & \multicolumn{2}{c}{Evaluation}                                                   \\
		\cmidrule(l){4-6} \cmidrule(l){7-8}
		                       &                        &                                   & MSE                          & RMSPE (\%)                     & \# Epochs & MSE                    & RMSPE (\%) \\
		\midrule
		\multirow{2}{*}{$\J$}  & Our $\J^{\text{NN}}$   & 37,282                            & $3.631\times 10^{-7}$        & 2.78                           & 200,000   & $4.561 \times 10^{-8}$ & 2.339      \\
		                       & $\text{MIO}_{\J}$      & 4,956,160                         & $1.983\times 10^{-6}$        & 6.459                          & 200,000   & $1.088 \times 10^{-4}$ & 44.251     \\
		\midrule
		\multirow{2}{*}{$\cL$} & Our $\cL^{\text{NN}}$  & 139,362                           & $9.682 \times 10^{-6}$       & 0.909                          & 200,000   & $3.359 \times 10^{-5}$ & 1.327      \\
		                       & $\text{MIO}_{\cL}$     & 4,792,320                         & $8.528 \times 10^{-4}$       & 8.565                          & 200,000   & $1.224 \times 10^{-2}$ & 25.350     \\
		\bottomrule
	\end{tabular}
	\caption{Ablation study of attenuation module. Training and evaluation of $\J$ and $\cL$ operators comparied to MIO (without any RTE specialized architecture) .}\label{tab:j_l_compare}
\end{table}

\subsection{Ablation study of Scattering module}

To examine the necessity of the scattering operator $\cS$ (Eq.~\ref{eq:scattering-op}), we employ a proof-by-contradiction approach. We hypothesize that $\cS$ is unnecessary and test this by substituting the scattering layer—the component that implements $\cS$—with a multi-layer perceptron of equivalent size.
We denote this modified architecture as $\ANN_{\text{w/o }\cS}$.

The only difference between $\ANN_{\text{w/o }\cS}$ and full DeepRTE $\ANN$ lies in the scattering block implementation. Instead of using Eq.~\ref{scattering_layer}, we disable the scattering operator $S^\top$:
\begin{equation}
	\text{ScatteringBlock}_\ell(\bm{G}) = \text{LayerNorm}\Big(\sigma\Big(\bm{W}^{\ell} \bm{G} + \bm{b}^{\ell}\Big)\Big),
\end{equation}
where this is no scatting operator $\cS$ involved, and $\bm{W}^{\ell}$ and $\bm{b}^{\ell}$ are learnable parameters of the MLP. The number of parameters in this MLP is kept approximately equal to that of the original scattering block to ensure a fair comparison.

The dataset and training settings are the same as the original DeepRTE (see Section~\ref{sec:acc}). To evaluate the model's performance under different anisotropy conditions, the parameter $g$ in the training set is sampled uniformly from $0$ to $0.9$. Except this $g$, the training set for $\text{DeepRTE}_{\cS}$ uses the same configuration as the original DeepRTE model. For testing, we test on the same three scattering regimes~\ref{li:regimes}.

If our hypothesis were correct, this modified model would successfully fit the training data and achieve comparable test performance to the original DeepRTE. Conversely, poor training fit and large test errors would refute our assumption and demonstrate the necessity of the scattering operator $\cS$. The experimental results, presented in Table~\ref{tab:ablation_g}, strongly support the latter conclusion. Notice the poor performance of $\ANN_{\text{w/o }\cS}$ on the dataset where scattering highly forward peaked (strong anisotropic). This significant degradation in performance compared to the full DeepRTE model clearly indicates that the scattering operator $\cS$ is essential for accurately modeling the RTE solution operator.
\begin{table}[htbp]
	\centering
	\begin{tabular}{@{}lcccc@{}}
		\toprule
		Model                                                        & Scattering regime      & $g$ range   & MSE                    & RMSPE (\%) \\
		\midrule
		\multirow{3}{*}{Without scattering: $\ANN_{\text{w/o }\cS}$} & Near isotropy          & $(0,\,0.2)$ & $2.657\times10^{-2}$   & $8.251$    \\
		                                                             & Moderate anisotropy    & $(0.4,0.6)$ & $6.852\times10^{-2}$   & $11.395$   \\
		                                                             & Highly forward peaking & $(0.7,0.9)$ & $7.257\times10^{-2}$   & $13.108$   \\
		\midrule
		\multirow{3}{*}{With scattering: $\ANN$}                     & Near isotropy          & $(0,\,0.2)$ & $1.065 \times 10^{-3}$ & 2.383      \\
		                                                             & Moderate anisotropy    & $(0.4,0.6)$ & $1.127 \times 10^{-3}$ & 2.452      \\
		                                                             & Highly forward peaking & $(0.7,0.9)$ & $1.853 \times 10^{-3}$ & 3.069      \\
		\bottomrule
	\end{tabular}
	\caption{Performance across scattering anisotropy regimes (Heney–Greenstein parameter $g$).}
	\label{tab:ablation_g}
\end{table}

\begin{remark}
	We note that in our DeepRTE model, the lifting operator $\cL$~\eqref{eq:lifting-op} appears in both the attenuation module (see~\eqref{eq:L-op} and since $\mus$ is included in $\tau_{-}^{\text{NN}}$, see Remark~\ref{rmk:optical-depth-encoding}) and the scattering module (see~\eqref{eq:scattering-mus}). This design choice reflects our architectural principle of separating operations based on their mathematical domain: the attenuation module handles operators acting on spatial variables along characteristics (captured by attention mechanisms), while the scattering module handles operators acting on angular variables (captured by angular integration). This separation simplifies the model architecture and facilitates extensibility through repeating modular blocks.
\end{remark}

\subsection{Overall comparision with MIO}

To evaluate the generalization performance of MIO, we trained it on a dataset where the parameter $g$ of the scattering kernel was uniformly distributed in the range $[0, 0.2]$. This training dataset is the same to the one used for DeepRTE to ensure a fair comparison.
Furthermore, we varied the scale of MIO from small to large to examine how different MIO configurations affected the results on both the training and validation datasets. The results of this comparison are presented in Table~\ref{compare_table}.

\begin{table}
	\centering
	\begin{tabular}{@{}llccccc@{}}
		\toprule
		\multirow{2}{*}{Models}                  &
		\multirow{2}{*}{$\#$ of parameters}      &
		\multicolumn{3}{c}{Training datasets}    &
		\multicolumn{2}{c}{Evaluation datasets}
		\\ \cmidrule(l){3-7}
		                                         &                                    &
		\begin{tabular}[c]{@{}c@{}}MSE
		\end{tabular}           &
		\begin{tabular}[c]{@{}c@{}}RMSPE \\ (\%)
		\end{tabular} & \multicolumn{1}{l}{$\#$ of epochs} &
		\begin{tabular}[c]{@{}c@{}}MSE
		\end{tabular}           &
		\begin{tabular}[c]{@{}c@{}}RMSPE \\ (\%)
		\end{tabular}                                                      \\ \midrule
		\multicolumn{1}{c}{\multirow{3}{*}{MIO}} &
		\multicolumn{1}{c}{244960}               & $1.690\times 10^{-4}$
		                                         & $65.60\%$
		                                         & 100000                             & $1.984\times
		10^{-4}$                                 & $72.49\%$
		\\
		\multicolumn{1}{c}{}                     &
		\multicolumn{1}{c}{\num{4351744}}        & $4.119\times 10^{-6}$
		                                         & $10.57\%$
		                                         & 100000                             & $1.319\times
		10^{-4}$                                 & $59.09\%$
		\\
		\multicolumn{1}{c}{}                     &
		\multicolumn{1}{c}{8171520}              & $2.140\times 10^{-6}$
		                                         & $7.58\%$
		                                         & 100000                             & $9.912\times
		10^{-5}$                                 & $51.23\%$
		\\ \midrule
		DeepRTE                                  &
		\multicolumn{1}{c}{37954}                & $8.210 \times
		10^{-9}$                                 & $3.16\%$
		                                         & 5000                               &
		$5.630 \times 10^{-10}$                  & $2.83\%$
		\\ \bottomrule
	\end{tabular}
	\caption{Performance comparison between MIO and DeepRTE frameworks on radiation transport equation solving. MIO results are shown for three different parameter configurations, while DeepRTE achieves superior accuracy with significantly fewer parameters and training epochs.}\label{compare_table}
\end{table}

The comparison shows that while MIO's performance on the training dataset improves as parameters increase, it still has significant errors on the validation dataset sampled from the same distribution as the training set. This observation highlights MIO's limited generalization capability, despite its ability to fit the training data well. Even after extensive parameter adjustments, MIO struggles to accurately predict the solution function for unseen inputs from the same distribution as the training set and fails to capture the underlying physical principles that control the RTE.

In contrast, DeepRTE's architecture enables it to generalize well to unseen inputs, both within the same distribution and in zero-shot settings. By directly involving the physical information in the attenuation and scattering modules, along with the Green's function integral, DeepRTE uses the fundamental structure of the RTE to make accurate predictions. This ability to generalize beyond the training data is very important for practical applications.

\section{Conclusions}\label{sec:conclusions}
In this paper, we introduce DeepRTE, a neural operator framework for solving the radiative transfer equation (RTE). By combining an attention-based architecture with operator learning strategy, DeepRTE directly maps input parameters such as boundary conditions, scattering coefficients, and scattering kernels to RTE solutions. Its key innovations include (1) parameter efficiency: achieving higher accuracy than large data-driven models with fewer parameters by encoding physical laws into the architecture; (2) zero-shot generalization: robustly predicting solutions for unseen boundary conditions without retraining; and (3) interpretability: maintaining linear, physically meaningful operations via physics-guided design.

The results of our experiments demonstrate the remarkable
transfer learning capabilities of DeepRTE.\@ Despite being
trained on a specific set of boundary conditions, the model
exhibits strong performance when applied to new boundary
conditions. The predicted solutions closely match the ground
truth with low error metrics.
This ability to generalize to unseen boundary conditions
highlights the effectiveness of DeepRTE in capturing the
underlying physical principles of the radiative transport equation.
Furthermore, the zero-shot performance of DeepRTE is
particularly impressive. Without any additional training or
fine-tuning, the model accurately predicts the solution of the
radiative transport equation for completely new boundary
conditions. 

In conclusion, DeepRTE shows the power of combining deep learning with physical laws to solve complex scientific problems. Its ability to adapt to new conditions without extra training proves how physics-guided models can achieve accurate, reliable results even in unfamiliar scenarios. This approach not only improves efficiency for solving the radiative transfer equation but also offers a practical template for tackling other challenging equations in science and engineering.
\section{Acknowledgments}
This work is mainly supported by National Natural Science Foundation of China (NSFC) funding No. 92270120.
Zheng Ma is also supported by NSFC funding No. 12201401 and Beijing Institue of Applied Physics and Computational Mathematics funding HX02023-60.
Min Tang is supported by NSFC funding No. 12031013 and Shanghai pilot innovation project 21JC1403500.
We also thank Dr. Jingyi Fu for useful discussions.

\appendix
\section{Table of Notation}
\begin{longtable}[c]{c c p{0.78\textwidth}}
  \toprule
  Notation                 & Section & Definition                                                                                                               \\
  \midrule
  \endhead

  \midrule
  \endfoot

  \bottomrule
  \endlastfoot

  \(d\)                    &    $\S 1$     & Spatial dimension                                                                                                        \\
  \(D\)                    &    $\S1$     & Spatial domain $D\subset \mathbb{R}^d$                                                                                   \\
  \(\partial D\)           &    $\S1$    & Boundary of $D$                                                                                                          \\
  \(\sS^{d-1}\)            &    $\S1$     & Unit sphere in $\mathbb{R}^d$                                                                                            \\
  \(S_{d-1}\)              &    $\S1$     & Surface area of $\sS^{d-1}$, $S_{d-1}=2\pi^{d/2}/\Gamma(d/2)$                                                            \\
  \(\br\)                  &    $\S1$     & Position vector                                                                                                          \\
  \(\bOmega\)              &    $\S1$     & Direction unit vector                                                                                                    \\
  \(\bm{v}\)               &    $\S2$   & Velocity vector $\bm{v}=\lvert \bm{v} \rvert \, \bOmega$                                                                 \\
  \(I\)                    &     $\S1$    & Radiative intensity                                                                                                      \\
  \(I_{-}\)                &    $\S2$     & Inflow boundary intensity on $\Gamma_{-}$                                                                                \\
  \(\mut\)                 &    $\S1$     & Total cross section                                                                                                      \\
  \(\mus\)                 &     $\S1$    & Scattering cross section                                                                                                 \\
  \(\mu_a\)                &     $\S2$    & Absorption cross section $\mu_a=\mut-\mus$                                                                               \\
  \(p\)                    &    $\S1$     & Phase function                                                                                                           \\
  \(\bn(\br)\)             &     $\S2$    & Unit outward normal at $\br\in\partial D$                                                                                \\
  \(\Gamma_{\pm}\)         &    $\S2$     & In/out-flow boundary sets: $\Gamma_{\pm}=\{(\br,\bOmega)\in \partial D\times \sS^{d-1}\mid \mp \bn(\br)\cdot\bOmega<0\}$ \\
  \(s\)                    &   $\S2.1$   & Path length parameter along characteristic line                                                                          \\
  \(s_{-}(\br,\bOmega)\)   &     $\S2.1$    & Distance from $\br$ to boundary along $-\bOmega$                                                                         \\
  \(g\)                    &      $\S5.1$   & Asymmetry parameter of Henyey--Greenstein phase function                                                                 \\
  \(\Phi\)                 &      $\S4.4$   & Scalar density (angular average of $I$)                                                                                  \\
  \(\tau_{\br,\bOmega}(s_1, s_2)\)   &     $\S2.1$    & Optical depth: $\tau_{\br,\bOmega}(s_1,s_2):=\int_{s_1}^{s_2} \mut(\br-s \bOmega) \diff{s}$                                                                                                \\
  \(G\)                    &     $\S2.2$    & Green's function of the RTE                                                                                              \\
  \(\A\)                   &      $\S2$   & Solution operator: $\A[I_-]=I$                                                                                           \\
  \(\J\)                   &    $\S2.1$     & Boundary (attenuation) operator                                                                                          \\
  \(\cL\)                  &     $\S2.1$    & Lifting operator (characteristic integration with $\mu_s$)                                                               \\
  \(\cS\)                  &   $\S2.1$      & Scattering operator                                                                                                      \\
  \(\rho_p\)       &     $\S2.1$    & Spectral radius in weighted $L^p$ space                                                                                  \\
    \(\sigma\)               &     $\S2.2$    & Characteristic length                                                                        \\
    \(\zeta_\sigma\)    &     $\S4.2$    & Mollification kernel                                                                             \\
  \(\delta_{\{\br'\}}\)    &     $\S2.2$    & Dirac distribution on $\partial D$ at $\br'$                                                                             \\
  \(\delta\)               &    $\S2.2$     & Dirac delta on direction space                                                                                           \\
  \addlinespace
  \multicolumn{3}{l}{\textbf{Network}}                                                                                                        \\
    \(\br^{\text{mesh}}_i\)     &    $\S3$     & Spatial mesh point                                                                              \\
        \((\mut^{\text{mesh}})_i\)     &    $\S3$     & Spatial dependent total cross section: $(\mut^{\text{mesh}})_i = \mut(\br^{\text{mesh}}_i)$                                                                              \\
            \((\mus^{\text{mesh}})_i\)     &    $\S3$     & Spatial dependent scattering cross section: $(\mus^{\text{mesh}})_i = \mus(\br^{\text{mesh}}_i)$        \\
              \(D_\mu\)       &  $\S5.1$       & Subdomain of cross section                                                                             \\
  \(d_{\text{model}}\)     &    $\S3.1$     & Latent / truncated representation dimension                                                                              \\
  \(d_{\text{mlp}}\)       &      $\S3.1$   & Width (hidden size) of MLP layers                                                                                        \\
    \(N_{\text{mesh}}\)       &  $\S3.1$       & Number of mesh points                                                                             \\
      \(N_{\text{quad}}\)       &  $\S3.1$       & Number of angular quadrature points                                                                               \\
  \(N_{\text{mlp}}\)       &  $\S3.1$       & Number of MLP layers in attenuation module                                                                               \\
  \(N_{\ell}\)             &    $\S3.2$     & Number of scattering (residual) blocks                                                                                   \\
  \(H\)                    &     $\S3.1$    & Number of attention heads                                                                                                \\
  \(d_k, d_v\)             &    $\S3.1$     & Key / value embedding dimensions in attention                                                                            \\
  \(d_{\tau}\)             &    $\S3.1$     & Dimension of optical-depth feature vector                                                                                \\
  \(\bG^{\text{NN}}\)      &    $\S3$     & Discrete vector representation of Green's function                                               \\
  \(\bm{W}^{\ell}\)        &    $\S3.2$     & Weight matrix in the $\ell$-th scattering block                                                                          \\
  \(\tau^{\text{NN}}_{-}\) &      $\S3.1$   & Neural optical depth estimate  
     \\
  \(\tau_{-,t}\)           &     $\S3.1$    & Total optical depth to boundary
  \\
    \(\tau_{-,s}\)           &   $\S3.1$    & Scattering optical depth to boundary
  \\
  \(\bm{b}^{\ell}\)        &    $\S3.2$     & Bias vector in the $\ell$-th scattering block                                                                                                                                           \\
  \(w_i\)                  &    $\S3$     & Quadrature weights                                                             \\
  \(\ell\)                 &   $\S4.3$      & Mean squared error                                                                                            \\
  \(\mathcal{L}\)          &     $\S4.3$    & Empirical training loss                                                                                                  \\
  \(B\)                    &     $\S5.2$    & Batch size                                                                                                               \\
  \(\eta\)                 &    $\S5.2$     & Learning rate                                                                                                            \\
\end{longtable}











\bibliographystyle{elsarticle-num-names}
\bibliography{refs}

\end{document}